\setlist{nolistsep}
\newcolumntype{C}[1]{>{\centering\arraybackslash}p{#1}}
\algnewcommand\algorithmicinput{\textbf{INPUT:}}
\algnewcommand\INPUT{\item[\algorithmicinput]}
\algnewcommand\algorithmicoutput{\textbf{OUTPUT:}}
\algnewcommand\OUTPUT{\item[\algorithmicoutput]}
\algnewcommand\algorithmicexploration{\textbf{Initialise:}}
\algnewcommand\Initialise{\item[\algorithmicexploration]}
\algnewcommand\algorithmicexploitation{\textbf{Exploitation:}}
\algnewcommand\Exploitation{\item[\algorithmicexploitation]}
\newtheorem{theorem}{Theorem}
\newtheorem{lemma}[theorem]{Lemma}
\newtheorem{proposition}[theorem]{Proposition}
\newtheorem{corollary}[theorem]{Corollary}
\newtheorem{definition}{Definition}
\newtheorem{remark}{Remark}
\newtheorem{assumption}{Assumption}
\newtheorem{example}{Example}
\newcommand{\neutralize}[1]{\expandafter\let\csname c@#1\endcsname\count@}
\DeclareMathOperator*{\argmax}{arg\,max}
\def\E{\mathbb{E}}
\def\P{\mathbb{P}}
\def\R{\mathbb{R}}
\title{Transfer Learning for Nonparametric Contextual Dynamic Pricing}
\author[1]{Fan Wang}
\author[2]{Feiyu Jiang}
\author[3]{Zifeng Zhao}
\author[1]{Yi Yu}
\affil[1]{Department of Statistics, University of Warwick}
\affil[2]{School of Management, Fudan University}
\affil[3]{Mendoza College of Business, University of Notre Dame}
\begin{document}

\maketitle
\begin{abstract}
Dynamic pricing strategies are crucial for firms to maximize revenue by adjusting prices based on market conditions and customer characteristics. However, designing optimal pricing strategies becomes challenging when historical data are limited, as is often the case when launching new products or entering new markets.  One promising approach to overcome this limitation is to leverage information from related products or markets to inform the focal pricing decisions. In this paper, we explore transfer learning for nonparametric contextual dynamic pricing under a covariate shift model, where the marginal distributions of covariates differ between source and target domains while the reward functions remain the same. We propose a novel Transfer Learning for Dynamic Pricing (TLDP) algorithm that can effectively leverage pre-collected data from a source domain to enhance pricing decisions in the target domain. The regret upper bound of TLDP is established under a simple Lipschitz condition on the reward function.  To establish the optimality of TLDP, we further derive a matching minimax lower bound, which includes the target-only scenario as a special case and is  presented for the first time in the literature. Extensive numerical experiments validate our approach, demonstrating its superiority over existing methods and highlighting its practical utility in real-world applications.
\end{abstract}

\section{Introduction}\label{sec-intro}

Dynamic pricing is a fundamental strategy used across many industries to maximize revenue by adjusting prices based on market conditions \citep[e.g.][]{araman2009dynamic, besbes2009dynamic}. In recent years, the rapid growth of online marketplaces and advances in data collection have further enabled sellers to use contextual information, such as customer characteristics, product features and market trends, to make more informed pricing decisions. This has fostered research in contextual dynamic pricing~\citep[e.g.][]{wang2023online, luo2024distribution, fan2024policy}, which further incorporates in-depth information to optimize pricing strategies. 

Effective dynamic pricing involves balancing the exploration of the unknown revenue model with the exploitation of the estimated revenue model to maximize rewards. The revenue model, representing the relationship between covariates, prices and revenue, has been explored under both parametric and nonparametric settings in the literature. The parametric settings impose specific assumptions on the revenue model and have been widely studied \citep[e.g.][]{qiang2016dynamic, javanmard2019dynamic, ban2021personalized, wang2021dynamic, zhao2024contextual}. The nonparametric settings provide more flexibility, making them more suitable for real-world applications where customer behaviour and market conditions are diverse and unpredictable \citep[e.g.][]{chen2021nonparametric, chen2023differential}. 

Despite the wide applications of contextual dynamic pricing, collecting sufficient data to design an optimal pricing strategy can be challenging,  particularly when launching a new product or entering a new market with limited historical data.  In contrast, abundant data may be available from related products or markets, from which one may leverage information to achieve improved decision-making.  For instance, historical data from other platforms or existing markets can help sellers optimize pricing strategies more efficiently when entering new environments.  This naturally falls in the territory of transfer learning \citep[e.g.][]{pan2009survey}, where datasets from similar but different distributions are utilized to enhance learning of a target dataset.

Transfer learning, as a research area, has been extensively studied in the machine learning literature, with applications in areas such as recommendation systems \citep[e.g.][]{pan2010transfer}, language models \citep[e.g.][]{han2021robusttransferlearningpretrained} and disease detection \citep[e.g.][]{maqsood2019transfer}. It has recently also gained attention in statistics and has been explored in various problems, such as nonparametric regression \citep{cai2022transfer}, contextual multi-armed bandits \citep{suk2021self, cai2024transfer} and functional data analysis \citep{cai2024transferfunctional}. In statistical transfer learning, two common settings are posterior drift, where the conditional distributions of responses given covariates vary \citep[e.g.][]{maity2022minimax, reeve2021adaptive}, and covariate shift, where the conditional distributions remain consistent but the marginal distributions of covariates differ \citep[e.g.][]{hanneke2019value, kpotufe2021marginal}.

\subsection{List of contributions}
In this paper, we study transfer learning for nonparametric contextual dynamic pricing under the covariate shift model. The main contributions of this paper are summarized as follows.

Firstly, to the best of our knowledge, this is the first study to explore transfer learning in dynamic pricing.  We introduce a novel algorithm, the Transfer Learning for Dynamic Pricing (TLDP) algorithm,  which effectively leverages source domain data to enhance pricing decisions in the target domain.

Secondly, we establish theoretical guarantees for the proposed transfer learning algorithm and further derive a minimax lower bound. We demonstrate that the proposed algorithm achieves minimax optimal regret, with the cumulative regret (up to logarithmic factors) given by
\[
  n_Q \Big\{ n_Q + (\kappa n_P)^{\frac{d+3}{d+3+\gamma}} \Big\}^{-\frac{1}{d+3}},
\]
where $n_Q \in \mathbb{Z}_+$ represents the length of the target horizon, $n_P \in \mathbb{N}$ represents the number of observations in the pre-collected source dataset,  $\gamma \in [0, \infty]$ (see \Cref{def_trans} later) denotes the transfer exponent quantifying the similarity between source and target covariate distributions  and $\kappa \in [0, 1]$ (see \cref{def_explor} later) represents the exploration coefficient quantifying the minimal level to which the source data adequately explores the price space.

Thirdly, as an important byproduct, our study covers the special case of nonparametric contextual dynamic pricing without transfer learning under the setting of a Lipschitz reward function. Our proposed algorithm achieves the regret upper bound of order $n_Q^{(d+2)/(d+3)}$ (up to logarithmic factors) with an accompanying minimax lower bound. To the best of our knowledge, this is the first minimax lower bound result under the Lipschitz condition for nonparametric contextual dynamic pricing with a general dimension $d \geq 1$.

Finally,  we validate the proposed approach through extensive numerical experiments, demonstrating its effectiveness compared to existing methods and highlighting its practical utility in real-world applications.

\subsection{Notation and organization}\label{sec-notation}
Denote $\mathcal{X} = [0, 1]^d$ as the covariate space,  $\mathcal{P} = [0, 1]$ as the price space and $\mathcal{Z} = [0, 1]^{d+1}$ as the joint space of covariates and prices.  
\textit{All} balls in this paper are defined with respect to the $\ell_{\infty}$-norm.
Let $B(s, r)$, $B_{\mathcal{X}}(s, r)$ and $B_{\mathcal{P}}(s, r)$ denote the balls in $\mathcal{Z}$, $\mathcal{X}$ and $\mathcal{P}$, respectively, centred at~$s$ with radius~$r$. For any ball~$B$, let~$r(B)$ denote its radius and $c(B)$ its centre. 
For $n \in \mathbb{Z}_+$, let $[n] = \{1, \ldots, n\}$. For any distribution $P$, let $\mathrm{supp}(P)$ be its support.

The remainder of the paper is organized as follows. In \Cref{sec-model}, we formally define the problem. \Cref{sec-alg} introduces the TLDP algorithm. In \Cref{sec-minimax}, we present theoretical results on the regret bounds of the proposed method, followed by a matching minimax lower bound.  We demonstrate the practical performance of our approach through comprehensive numerical experiments in \Cref{sec-num} and conclude in \Cref{sec-conclusion}.

\section{Problem formulation}\label{sec-model}

In this section, we introduce a nonparametric contextual dynamic pricing model under the transfer learning framework. Our goal is to minimize the regret for the target data by leveraging pre-collected data from a related source domain.

For the target domain, the seller sells a product to $n_Q \in \mathbb{Z}_+$ sequentially arriving consumers. At each time $t \in [n_Q]$, the seller observes contextual information $X_t \in \mathcal{X}$  drawn from the distribution~$Q_X$. Based on $X_t$, the seller sets a price $p_t \in \mathcal{P}$ and then receives a random revenue~$Y_t \in [0, 1]$  with its conditional expectation 
\begin{equation}\label{def-target}
    \E \{Y_t \vert X_t, p_t \} =  f(X_t, p_t),
\end{equation}
where $f \colon \mathcal{Z} \to [0, 1]$ is the unknown reward function. 

In the context of transfer learning, we further assume that the seller has access to a pre-collected source dataset
\begin{equation}\label{eq-source-data}
\mathcal{D}^P = \{(X^P_t, p^P_t, Y^P_t)\}_{t=1}^{n_P} \subset \mathcal{Z} \times [0, 1],
\end{equation}
where $X_t^P \in \mathcal{X}$ is drawn from a distribution $P_X$, $p^P_t \in \mathcal{P}$ is the price and $Y^P_t$ corresponds to the observed random revenue with its conditional expectation 
\[
\E \{Y_t^P \vert X_t^P, p_t^P \} =  f^P(X_t^P, p_t^P).
\]
As mentioned in the introduction, this paper focuses on the covariate shift model, where the marginal distributions of covariates between the source and target domains are different, i.e.~$P_X \neq Q_X$, but the conditional distributions of rewards are identical, i.e.
\begin{equation}\label{eq-reward}
     f(x, p) =  f^P(x, p), \quad \forall (x, p)\in [0, 1]^d \times [0, 1].
\end{equation}

Our primary objective is to design a pricing strategy that minimizes the cumulative regret over the selling horizon in the target domain by effectively utilizing both the source data $\mathcal{D}^P$ and the observed target data. Formally, let $\Pi$ denote the family of all price policies $\pi = \{p_1^{\pi}, \ldots, p_{n_Q}^{\pi}\}$ where each price $p_t^{\pi}$ is $\mathcal{F}_{t-1}$-measurable, $t \in \mathbb{Z}_+$. The field $\mathcal{F}_{t-1} $ is the $\sigma$-algebra generated by the history target data $\{(X_s, p_s, Y_s)\}_{s=1}^{t-1}$, all source data $ \{(X_s^P, p_s^P, y_s^P)\}_{s=1}^{n_P}$ and covariate $X_t$.
For any price strategy $\pi \in \Pi$, the cumulative regret over $n_Q$ steps is defined as 
\begin{equation}\label{def-regret}
   R =  R_{\pi} (n_Q) = \sum_{t=1}^{n_Q} \E \big\{ f^{*}(X_t)  - f(X_t, p_t^{\pi})  \big\},
\end{equation}
where $f^*(x) = \max_{p \in \mathcal{P}} f(x, p)$ is the maximum expected revenue for any $x \in \mathcal{X}$. We further impose the following assumptions on the reward function and the underlying distributions.

\begin{assumption}\label{ass-lipschitz}
Assume that the reward function $f \colon \mathcal{Z} \to [0, 1]$ is Lipschitz continuous with respect to the $\ell_{\infty}$-norm, i.e.~there exists an absolute constant $C_{\mathrm{Lip}}>0$ such that 
\begin{align}
   \vert f(x_1, p_1) - f(x_2, p_2) \vert \leq C_{\mathrm{Lip}}\|(x_1^{\top}, p_1)^{\top} - (x_2^{\top}, p_2)^{\top} \|_{\infty}, \quad \forall (x_1,p_1), \, (x_2, p_2) \in \mathcal{Z}. \nonumber
    \end{align}
\end{assumption}

\Cref{ass-lipschitz} regulates the changes in the reward function led by the changes in covariates and prices. This is commonly used in the literature of online learning with nonparametric reward function \citep[e.g.][]{slivkins2011contextual, chen2021nonparametric,chen2023differential}, facilitating theoretical analysis of the estimation and learning processes.  \Cref{ass-target-cov} below is a regularity condition on $Q_X$ to ensure the covariate space will be explored sufficiently.  

\begin{assumption}\label{ass-target-cov}
Assume that for the target  distribution $Q_X$, there exist constants $ 0 < c_Q  < C_Q $ such that 
\[
  c_Q r^{d}  \leq Q_X\big(B_{\mathcal{X}}(x, r) \big) 
 \leq  C_Q r^{d}
  , \quad \forall x \in \mathrm{supp}(Q_X), \, r \in (0, 1].
\]
\end{assumption}

To quantify the potential benefit of transfer learning from the source domain to the target domain, we introduce the notion of transfer exponent, a parameter widely used in the transfer learning literature with covariate shift  \citep[e.g.][]{kpotufe2021marginal, suk2021self, cai2024transfer}.   It quantifies the degree of similarity between the source and target covariate distributions, regulating the potential of effective knowledge transfer.

\begin{definition}[Transfer exponent]\label{def_trans}
 The transfer exponent  $\gamma \in [0, \infty]$ of the source covariate distribution $P_X$ with respect to the target covariate distribution $Q_X$ is defined as
\begin{align*}
\gamma = &  \inf \Big\{ \gamma' \geq 0  \Big\vert \, \exists \text{ a constant } 0 < c_{\gamma'} \leq 1 \text{ such that } 
\\
& \hspace{2.5cm} P_X\big( B_{\mathcal{X}}(x, r) \big) \geq c_{\gamma'} r^{\gamma'} Q_X\big( B_{\mathcal{X}}(x, r) \big),  \forall x \in \mathrm{supp}(Q_X), r \in (0, 1] \Big\}.
\end{align*}
\end{definition}
The transfer exponent $\gamma$ quantifies the extent to which the source covariate distribution $P_X$ covers the target covariate distribution $Q_X$. A smaller $\gamma$ indicates a greater overlap, enabling more efficient information transfer. In an extreme case where  $P_X$ and $Q_X$ are identical, $\gamma = 0$ indicating a perfect overlap. This parameter is crucial for analyzing the potential performance improvements achievable by transfer learning in our framework.  

To ensure that the source data provides sufficient variability in prices across different covariates, we further introduce the exploration coefficient. This parameter quantifies the minimal extent to which the source data adequately explores the price space. It is essential for accurately estimating the reward function over various price levels.

\begin{definition}[Exploration coefficient]\label{def_explor}
Let $\mu$ denote the joint distribution of the source covariate-price pairs and let $P_X $ be the marginal distribution of the source covariates. Define the exploration coefficient $\kappa \in [0, 1]$ as
\[
\kappa = \inf_{\substack{x \in \mathrm{supp} (P_X), \\ r \in (0, 1/2], \\ p \in [r, 1 - r]}}  \frac{ \mu \big( [p-r, p + r] \times B_{\mathcal{X}}(x, r) \big) }{ 2r \cdot P_X\left( B_{\mathcal{X}}(x, r) \right) }.
\]
\end{definition}

The exploration coefficient $\kappa$ measures the minimal conditional probability density of the price given the covariates. A larger $\kappa$ indicates that the source data provides better exploration of the price space for each covariate. This coefficient is crucial in assessing the usefulness of the source data for learning the reward function in the target domain. The definition of $\kappa$ extends the exploration coefficient introduced in \cite{cai2024transfer} for multi-armed bandits (MAB) to continuous action spaces in dynamic pricing. To illustrate this concept, consider the following example.

\begin{example}
Suppose that the source dataset $\mathcal{D}^P$ is defined in \eqref{eq-source-data} and the source prices  are drawn independently of source covariates from  a distribution with density $h_P$ given by 
\[
h_P(p) = \delta_i, \quad \mbox{if } p \in (a_{i-1}, a_i], \quad \mbox{for } i \in [m],
\]
where $0 = a_0 < a_1 < \cdots < a_{m-1} < a_m = 1$ and $\delta_i > 0$ such that $ \int_{0}^1 h_P(p) \, \mathrm{d}p = 1$. 
This is a piecewise uniform distribution and the exploration coefficient is $\kappa  = \min_{i \in [m]}\delta_i$, which holds for any source covariate distribution $P_X$.
\end{example}

\section{Transfer learning algorithm}\label{sec-alg}

In this section, we introduce the Transfer Learning for Dynamic Pricing (TLDP) algorithm, detailed in \Cref{alg_1}. TLDP borrows the idea of contextual zooming from \cite{slivkins2011contextual} proposed for MAB problems with only target data, and addresses the unique challenge of leveraging source data to refine the exploration of the covariate-price space in the target domain.  At a high level, TLDP is an upper confidence bound (UCB) type algorithm that handles the nonparametric setting through an adaptive partitioning strategy. Unlike approaches relying on predefined partitions, TLDP sequentially refines partitions of the covariate-price space in response to observed data.  The key components of TLDP are outlined following \Cref{alg_1}.

\begin{algorithm}[ht] 
\caption{Transfer Learning for Dynamic Pricing (TLDP)} \label{alg_1}
\begin{algorithmic}[1]
    \INPUT{horizon length $n_Q$, source dataset $\mathcal{D}^P$ and the smallest radius to explore $\tilde{r}$ }
    \Initialise{$t \leftarrow 1$, $\mathcal{A}_t \leftarrow \{ \mathcal{Z}\}$, $n_t(B) \leftarrow n_B^P(\mathcal{D}^P)$, $\mathrm{\textbf{re}}_t (B) \leftarrow \mathrm{\textbf{re}}_B^P(\mathcal{D}^P)$, $\forall B \in \mathcal{A}_t$ 
  \Comment{See \eqref{def-n_B_P}}} 
\While{$t \leq n_Q$}
   \State{Observe $X_t$}
   \State{$\mathrm{\textbf{revelant}_t}\leftarrow \{B \in \mathcal{A}_t \colon  (X_t,p) \in \mathrm{\textbf{dom}} (B, \mathcal{A}_t) \mbox{ for some }  p \in [0,1] \}$ \Comment{See \eqref{dom_B} for $\mathrm{\textbf{dom}} (B, \mathcal{A}_t)$}}
   \State{$B^{\mathrm{\textbf{sel}}} \leftarrow$ uniformly choose from $\argmax_{B \in \mathrm{\textbf{revelant}}_t }
   I_t(B)$ \Comment{See \eqref{index_B} for $I_t(B)$}}
   \State{$p_t \leftarrow$ uniformly choose $p$  such that $(X_t, p) \in \mathrm{\textbf{dom}} (B^{\mathrm{\textbf{sel}}}, \mathcal{A}_t)$ }
   \While{$n_t(B^{\mathrm{\textbf{sel}}} ) - n_{B^{\mathrm{\textbf{sel}}} }^P (\mathcal{D}^P) \geq  T_{B^{\mathrm{\textbf{sel}}} }^Q  $ and $r(B^{\mathrm{\textbf{sel}}}) \geq 2 \tilde{r}$ }
   {\Comment{See \eqref{def-n_B_Q}  for $T_{B^{\mathrm{\textbf{sel}}} }^Q$}}
    \State{$B' \leftarrow B\big((X_t, p_t), r(B^{\mathrm{\textbf{sel}}})/2 \big)$, $\mathcal{A}_t \leftarrow \mathcal{A}_t \cup \{B' \}$}
    \State{$n_t(B') \leftarrow n_{B'}^P(\mathcal{D}^P)$, $\mathrm{\textbf{re}}_t (B') \leftarrow \mathrm{\textbf{re}}_{B'}^P(\mathcal{D}^P)$, $B^{\mathrm{\textbf{sel}}} \leftarrow B'$} 
   \EndWhile
   \For {$B\in \mathcal{A}_t \backslash \{B^{\mathrm{\textbf{sel}}} \} $}
   \State{ $n_{t+1}(B) \leftarrow n_t(B)$,  $\mathrm{\textbf{re}}_{t+1} (B) \leftarrow \mathrm{\textbf{re}}_t (B) $}
   \EndFor
   \State{$n_{t+1}(B^{\mathrm{\textbf{sel}}}) \leftarrow n_t(B^{\mathrm{\textbf{sel}}})+1$, $\mathrm{\textbf{re}}_{t+1} (B^{\mathrm{\textbf{sel}}} ) \leftarrow \mathrm{\textbf{re}}_t (B^{\mathrm{\textbf{sel}}}) + Y_t$, $t \leftarrow t+1$} 
\EndWhile
\end{algorithmic}\label{alg}
\end{algorithm}

\medskip
\noindent
\textbf{A head start by the source data.}  At the core, the pricing is conducted by sequentially discretizing the continuous covariate-price space $\mathcal{Z}$.  To be specific, this is to produce a collection of active $l_{\infty}$-balls in $\mathcal{Z}$, namely $\mathcal{A}_t$ at step $t \in  [n_Q]$.  The source data provide a head start for such partitioning, starting with the entire space $\mathcal{A}_1 = \{ \mathcal{Z}\}$.  TLDP sequentially refines the partition and adds smaller balls into $\mathcal{A}_t$.
This partitioning is to be detailed later.  We first introduce some necessary notation.  For a ball $B \subset \mathcal{Z}$, let its count and cumulative revenue utilizing the source dataset $\mathcal{D}^P$ be
\begin{equation}\label{def-n_B_P}
   n_B^P(\mathcal{D}^P)  = \sum_{(X, p, Y)   \in \mathcal{D}^P}  \mathbbm{1} \{(X, p) \in B \}    \quad \mbox{and} \quad \mathrm{\textbf{re}}_B^P(\mathcal{D}^P) =  \sum_{( X, p, Y)   \in \mathcal{D}^P} Y \mathbbm{1} \{(X,p) \in B \}.
\end{equation}
For any ball $B \in \mathcal{A}_t$, let its domain be
\begin{align}\label{dom_B}
    \mathrm{\textbf{dom}}(B, \mathcal{A}_t) = B \backslash ( \cup_{B' \in \mathcal{A}_t\colon r(B') < r(B)}   B' ),
\end{align}
excluding any overlaps caused by strictly smaller balls in  $\mathcal{A}_t$, thereby preventing redundant coverage.  

\medskip
\noindent
\textbf{Dynamic pricing via UCB.}  
At each time $t \in [n_Q]$, after observing the covariate $X_t$,  TLDP  sets a price by first selecting a ball $B$ in the covariate-price space from the current candidate set $\mathcal{A}_t$ that maximizes the revenue potential $I_t(B)$ - defined below, then randomly chooses a price such that the covariate-price pair $(X_t, p)$ is in $B$, to be specific, in $\mathrm{\textbf{dom}}(B, \mathcal{A}_t)$ as defined in \eqref{dom_B}.

\noindent
\emph{Definition of $I_t$.} Let $n_t(B)$ and $\mathrm{\textbf{re}}_t(B)$ denote the cumulative count and revenue of $B$ from the target data before time $t$ and the source data $\mathcal{D}^P$, defined as  
\[
    n_t(B) =  n_B^P(\mathcal{D}^P) + \sum_{s=1}^{t-1} \mathbbm{1}\{(X_s, p_s) \in B\}   \quad \mbox{and} \quad \mathrm{\textbf{re}}_t(B) =   \mathrm{\textbf{re}}_B^P(\mathcal{D}^P) + \sum_{s=1}^{t-1}   Y_s\mathbbm{1}\{(X_s, p_s) \in B\},  
\]
with $n_B^P(\mathcal{D}^P)$ and $\mathrm{\textbf{re}}_B^P(B)$ defined in \eqref{def-n_B_P}.
For any $B \subset \mathcal{Z}$, let its UCB uncertainty level be
\begin{equation}\label{conf_B}
    \mathrm{\textbf{conf}}_t(B) = 
    2 \sqrt{\frac{ \log  \big\{ n_Q \vee (\kappa n_P)^{\frac{d+3}{d+3+\gamma}} \big\}  }{ n_t(B)}}. 
\end{equation}
As $n_t(B)$ grows, i.e.~more samples are collected in $B$, $\mathrm{\textbf{conf}}_t(B)$ decreases, reflecting increasing in confidence.  The revenue potential index is then defined as 
\begin{align}\label{index_B}
    I_t(B) = C_{I}r(B) + \min_{B' \in \mathcal{A}_t} \big\{ I_t^{\mathrm{\textbf{pre}}}(B') + C_{I}  \| c(B) - c(B') \|_{\infty}  \big\},
\end{align}
where $C_{I} > 0$ is a constant and the pre-index $I_t^{\mathrm{\textbf{pre}}}(B)$ is given by 
\begin{align}\label{pre-index_B}
    I_t^{\mathrm{\textbf{pre}}}(B) = v_t(B) + C_{I}r(B) +   \mathrm{\textbf{conf}}_t(B), \quad \mbox{with} \quad 
   v_t(B) = \frac{\mathrm{\textbf{re}}_t(B)}{  n_t(B)}.
\end{align}
The index $I_t(B)$ is adapted from its MAB counterpart proposed in \cite{slivkins2011contextual} to balance the estimated reward $v_t(B)$, the radius of the ball $r(B)$, the uncertainty level $\mathrm{\textbf{conf}}_t(B)$ and its distance to its neighbours $\| c(B) - c(B') \|_{\infty}$ - the choice of the norm is reflected in the Lipschitz condition to be imposed.  
The selection of inputs $\kappa$ and $C_I$ is further discussed in \Cref{sec-simulation}.

\medskip
\noindent \textbf{Partitioning.}
The last ingredient is the sequential partitioning.  A ball is further refined when enough samples are collected and its radius is not too small.  The refinement is conducted by adding a smaller sub-ball to the candidate set $\mathcal{A}_t$.  To be specific, the partitioning is summoned when the target sample count $n_t(B)- n_B^P(\mathcal{D}^P)$ exceeds $T_B^Q$, where
\begin{equation}\label{def-n_B_Q}
 T_B^Q:=T_B^Q (\mathcal{D}^P) =
 \begin{cases}
      0,  & \omega(B) <   n_B^P (\mathcal{D}^P) , \\
     \omega(B),  &\omega(B) \geq   n_B^P (\mathcal{D}^P), \\
\end{cases}
\end{equation}
with
\begin{equation}\label{def_omega}
\omega(B) =  
    \bigg\lceil \frac{ \log  \big\{ n_Q \vee (\kappa n_P)^{\frac{d+3}{d+3+\gamma}} \big\} }{ r(B)^2} \bigg\rceil.
\end{equation}
This threshold incorporates the contributions from the source data.  Note that when there is sufficient source data within the ball, no additional target data is required for exploration.

\section{Minimax optimality}\label{sec-minimax}

In this section, we establish theoretical guarantees for the TLDP algorithm presented in \Cref{alg_1}. Specifically, we derive an upper bound on the cumulative regret and a matching minimax lower bound, demonstrating that TLDP achieves minimax optimality under the given assumptions. We also compare our findings with existing literature to highlight the contributions of our approach.  We begin by presenting the upper bound on the regret achieved by TLDP.

\begin{theorem} \label{theorem_upper_bound}
Suppose that the source dataset $\mathcal{D}^P = \{(X^P_t, p^P_t, Y^P_t)\}_{t=1}^{n_P}$ is defined in \eqref{eq-source-data} with triplets independent across time.
Assume that the target dataset, defined in \eqref{def-target}, satisfies \eqref{eq-reward}, and that Assumptions~\ref{ass-lipschitz} and~\ref{ass-target-cov}  hold, with  $C_I \geq C_{\mathrm{Lip}}$ where $C_I, C_{\mathrm{Lip}} >0$ are constants defined in \eqref{index_B} and \Cref{ass-lipschitz}, respectively.  Let \Cref{alg} have input $\tilde{r}$ satisfying that
\begin{equation}\label{def-r_min}
 \tilde{r} = 
 C_r \Bigg[  \frac{ \log  \big\{ n_Q + (\kappa n_P)^{\frac{d+3}{d+3+\gamma}} \big\}}{ n_Q + (\kappa n_P)^{\frac{d+3}{d+3+\gamma}}} \Bigg]^{\frac{1}{d+3}},
\end{equation}
and $C_r^4 c_{\gamma} c_Q  \geq 8$, where $C_r > 0$ is a constant and $c_{\gamma}, c_Q >0$ are constants defined in \Cref{def_trans} and \Cref{ass-target-cov}, respectively.  Let $\pi$ denote the TLDP policy given by \Cref{alg}.   Recall the cumulative regret $R$ defined in \eqref{def-regret} and it holds that   
 \[
   R \leq C n_Q \Big\{ n_Q + (\kappa n_P)^{\frac{d+3}{d+3+\gamma}} \Big\}^{-\frac{1}{d+3}} \log^{\frac{1}{d+3}} \big\{ n_Q + (\kappa n_P)^{\frac{d+3}{d+3+\gamma}} \big\},
 \]   
 where $C > 0$ is a constant only depending on constants $C_{I}$,  $C_r$, $C_{\mathrm{Lip}}$, $c_\gamma$ and $c_Q$. 
\end{theorem}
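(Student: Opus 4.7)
I will adapt the contextual zooming regret analysis of \cite{slivkins2011contextual} to the transfer-learning setting, treating the source dataset as a head-start in the adaptive partitioning of $\mathcal{Z}$. Writing $L = n_Q + (\kappa n_P)^{(d+3)/(d+3+\gamma)}$ for brevity, the plan has three stages: (i) set up a high-probability clean event on which the empirical UCBs are valid; (ii) translate the clean event into a per-step regret bound via a Slivkins-style UCB argument that exploits the specific form of $I_t(B)$ in~\eqref{index_B}; and (iii) sum the per-step regret over balls and dyadic scales using the partitioning structure of \Cref{alg_1}.

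\textbf{Stages (i)--(ii).} I would define $\mathcal{E}$ to be the event that, for every ball $B$ that ever enters some $\mathcal{A}_t$ and every time $t$ with $B \in \mathcal{A}_t$, the empirical reward $v_t(B)$ deviates from its conditional expectation by at most $\mathrm{\textbf{conf}}_t(B)/2$. Hoeffding's inequality and a union bound over the $\lesssim L$ balls and $n_Q + n_P$ samples give $\P(\mathcal{E}^c) \lesssim L^{-1}$, which contributes only $O(1)$ to the regret. Under $\mathcal{E}$, \Cref{ass-lipschitz} yields $|v_t(B) - f(x_0, p_0)| \leq \mathrm{\textbf{conf}}_t(B) + C_{\mathrm{Lip}} r(B)$ for any $(x_0, p_0) \in B$. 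The assumption $C_I \geq C_{\mathrm{Lip}}$ then lets me show $I_t(B) \geq f^{*}(X_t)$ for the ball whose domain contains $(X_t, p^{*}(X_t))$, and hence $I_t(B^{\mathrm{\textbf{sel}}}) \geq f^{*}(X_t)$. Plugging $B' = B^{\mathrm{\textbf{sel}}}$ into the minimum in~\eqref{index_B} delivers a matching upper bound on $I_t(B^{\mathrm{\textbf{sel}}})$, producing the per-step regret $f^{*}(X_t) - f(X_t, p_t) \lesssim r(B^{\mathrm{\textbf{sel}}}) + \mathrm{\textbf{conf}}_t(B^{\mathrm{\textbf{sel}}})$ on $\mathcal{E}$.

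\textbf{Source count control.} Chaining \Cref{def_trans}, \Cref{def_explor} and \Cref{ass-target-cov}, for any ball $B = B((x_0, p_0), r) \subset \mathcal{Z}$ with $x_0 \in \mathrm{supp}(Q_X)$, $\E[n_B^P(\mathcal{D}^P)] \gtrsim \kappa c_\gamma c_Q n_P r^{d+\gamma+1}$. A multiplicative Chernoff bound, uniformised over balls via a union bound against a fixed packing of $\mathcal{Z}$ at each dyadic scale, lifts this to the high-probability lower bound $n_B^P \gtrsim \kappa n_P r^{d+\gamma+1}$ simultaneously for every relevant radius $r \geq \tilde r$. The algebraic condition $C_r^4 c_\gamma c_Q \geq 8$ is calibrated precisely so that this lower bound forces $n_B^P \geq \omega(B) = \log L / r^2$ at radius $\tilde r$ whenever $\tilde r \geq r^{**} := (\log L/(\kappa n_P))^{1/(d+3+\gamma)}$, i.e.~whenever the source term dominates in $L$, in which case $T_B^Q = 0$ throughout the range $r \in [\tilde r, 1]$.

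\textbf{Stage (iii): summing the regret (the main obstacle).} A single ball $B$ at scale $r$ is selected at most $\omega(B) = \log L/r^2$ times before refinement, contributing $\lesssim r\omega(B) + \sqrt{\log L \cdot \omega(B)} \lesssim \log L/r$ to the cumulative regret (the confidence sum $\sum_k (n_B^P + k)^{-1/2}$ telescopes). An $\ell_\infty$-packing bound gives $\lesssim r^{-(d+1)}$ balls per scale, so the per-scale regret is $\lesssim \log L/r^{d+2}$, and the dyadic sum over $r \in [\tilde r, 1]$ is dominated by $r = \tilde r$, giving $R \lesssim \log L/\tilde r^{d+2} = L \tilde r$. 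This naive bound already matches the target $n_Q \tilde r$ when $L \asymp n_Q$. In the complementary regime $L \gg n_Q$ (where $\tilde r \geq r^{**}$), it must be sharpened by observing that $T_B^Q = 0$ at every scale in $[\tilde r, 1]$, so the refinement while-loop of \Cref{alg_1} collapses in a single step down to scale $\tilde r$; every target play then lands in a radius-$\tilde r$ ball with $n_t(B) \geq n_B^P \gtrsim \omega(\tilde r)$, forcing $\mathrm{\textbf{conf}}_t \leq \tilde r$ and per-step regret $\lesssim \tilde r$, which gives $R \lesssim n_Q \tilde r$. Substituting~\eqref{def-r_min} in either regime produces the theorem. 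The main difficulty is precisely this second sharpening: one must carefully track that, when source is abundant, the algorithm's partitioning refuses to ``waste'' target plays on coarser scales, which in turn hinges on the uniform source-count lower bound from the preceding stage.
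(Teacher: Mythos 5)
Your plan follows essentially the same route as the paper's proof: a high-probability confidence event for all active balls, the UCB index sandwich $f^*(X_t)\le I_t(B^{\mathrm{\textbf{sel}}})\le f(X_t,p_t)+O(r(B^{\mathrm{\textbf{sel}}}))$, a packing-times-$\omega(B)$ counting argument over dyadic scales, and a Chernoff lower bound on the source counts showing $T_B^Q=0$ (hence no coarse-scale target exploration) precisely when the source term dominates, with $C_r^4c_\gamma c_Q\ge 8$ ruling out the unfavourable third regime. The only substantive deviation is cosmetic: you bound $I_t(B^{\mathrm{\textbf{sel}}})$ by plugging the selected ball itself into the minimum in \eqref{index_B} and then sum the residual $\mathrm{\textbf{conf}}_t$ terms by a telescoping/Cauchy--Schwarz argument, whereas the paper plugs in the parent ball and uses $\mathrm{\textbf{conf}}_t(B_t^{\mathrm{\textbf{par}}})\le 2r(B_t^{\mathrm{\textbf{par}}})$ to obtain a purely radius-based per-step bound; both yield the same $\log L/r$ per-ball contribution and the same final rate.
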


The smallest radius to explore $\tilde{r}$  mediates an exploration-exploitation trade-off. A smaller $\tilde{r}$ leads to finer partitions and more precise local estimates but increases exploration cost, while a larger $\tilde{r}$ reduces the exploration cost at the expense of estimation accuracy.  The choice of~$\tilde{r}$ in \eqref{def-r_min} strikes a balance in this trade-off, ensuring the algorithm achieves optimal regret bounds.

The regret bound in \Cref{theorem_upper_bound} elucidates the dependence of the cumulative regret on the transfer exponent $\gamma$ and the exploration coefficient $\kappa$. A lower transfer exponent $\gamma$ indicates higher similarity between the source and target covariate distributions $P_X$ and $Q_X$, leading to a larger adjusted source data size $(\kappa n_P)^{\frac{d+3}{d+3+\gamma}}$ and thus reducing the regret.

\begin{remark}\label{remark-result}

As an alternative to the global exploration coefficient in \Cref{def_explor}, one can instead define a scale-dependent exploration coefficient $\kappa_r \in [0,1]$ for any radius $r\in (0, 1/2]$ as 
\[
\kappa_r = \inf_{\substack{x \in \mathrm{supp} (P_X), \\ r' \in [r, 1/2], \\ p \in [r', 1 - r']}}  \frac{ \mu \big( [p-r', p + r'] \times B_{\mathcal{X}}(x, r') \big) }{ 2r' \cdot P_X\left( B_{\mathcal{X}}(x, r') \right) }.
\] 
Note that $\kappa_{r}$ is non-decreasing in $r$ and satisfies that $\kappa = \inf_{r \in (0, 1/2]}\kappa_{r}$. It quantifies how well the source data explores the covariate-price space at a given scale.

Under the conditions of Theorem~\ref{theorem_upper_bound}, for any choice of the smallest radius to explore $\tilde{r} \in (0, 1/2]$, with the local coefficient~$\kappa_{\tilde{r}}$, the regret satisfies that
\begin{align}
R \lesssim \begin{cases}
       n_Q \tilde{r} +   \tilde{r}^{-(d+2)} \log(n_Q),   & n_Q \geq (\kappa_{\tilde{r}} n_P)^{\frac{d+3}{d+3+\gamma}},  \\
     n_Q \tilde{r}, &  n_Q < (\kappa_{\tilde{r}} n_P)^{\frac{d+3}{d+3+\gamma}} \mbox{ and }  \tilde{r} \geq  \Big\{ \frac{8 \log  \big\{ (\kappa_{\tilde{r}} n_P)^{\frac{d+3}{d+3+\gamma}} \big\}}{c_{\gamma} c_Q \kappa_{\tilde{r}}  n_P} \Big\}^{\frac{1}{d+3+\gamma}}, \\
     \tilde{r}^{-(d+2)} \log(\kappa_{\tilde{r}} n_P),   & \mbox{otherwise}.\nonumber
\end{cases}
\end{align} 
These three regimes illustrate how both the sample size and the exploration radius $\tilde{r}$ jointly influence the overall regret. While we focus on the global coefficient $\kappa$ in this paper for simplicity, the local coefficient $\kappa_r$ characterizes tighter performance guarantees in scenarios where exploration quality varies across different scales,  particularly useful when the source data do not uniformly cover the entire price space.
 \end{remark}

 We now show that the regret upper bound in \Cref{theorem_upper_bound} is minimax optimal (up to a logarithmic factor) by presenting the following matching lower bound.

 \begin{theorem}\label{theorem_lower_bound}
Let $\mathcal{I}( \gamma, c_{\gamma},  \kappa, C_{\mathrm{Lip}}, c_Q )$ denote the class of nonparametric dynamic pricing problems such that  $(i)$ the source dataset $\mathcal{D}^P = \{(X^P_t, p^P_t, Y^P_t)\}_{t=1}^{n_P}$, defined in \eqref{eq-source-data}, satisfies \Cref{def_explor} with triplets independent across time;  $(ii)$ the target dataset, defined in \eqref{def-target}, satisfies \Cref{ass-target-cov}; and  $(iii)$  both datasets satisfy  \eqref{eq-reward}, \Cref{def_trans} and \Cref{ass-lipschitz}. 
 It holds that 
\[
\inf_{\pi} \sup_{I \in \mathcal{I}( \gamma, c_{\gamma},  \kappa, C_{\mathrm{Lip}}, c_Q )}R_{\pi, I}(n_Q) \geq  c n_Q \big(n_Q + (\kappa n_P)^{\frac{d+3}{d+3+\gamma}} \big)^{-\frac{1}{d+3}},
 \]   
where $c>0$ is a constant only depending on constants $C_{\mathrm{Lip}}$, $c_{\gamma}$ and $ c_Q$. 
\end{theorem}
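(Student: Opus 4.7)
The plan is to establish the lower bound by constructing a sub-family of hard instances inside $\mathcal{I}(\gamma, c_\gamma, \kappa, C_{\mathrm{Lip}}, c_Q)$ and then applying an information-theoretic (Fano) argument cell-by-cell. Fix a scale $\epsilon \in (0, 1/2)$ to be optimized. Take $Q_X$ uniform on $[0, 1]^d$; construct $P_X$ satisfying $P_X(B_\mathcal{X}(x, r)) \asymp c_\gamma r^{d+\gamma}$ uniformly in $x \in [0, 1]^d$ and $r \in (0, 1]$ (saturating the transfer exponent); and take the source joint $\mu$ whose conditional price density given $X^P$ equals $\kappa$ on a large subinterval of $[0, 1]$ (completed to a probability density in the spirit of Example~1) so that the exploration coefficient equals $\kappa$. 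These choices make the source data the least informative admissible by $\mathcal{I}$ and saturate the class constraints at every scale simultaneously.

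Next, partition $[0, 1]^d$ into $M_x = \lfloor 1/\epsilon \rfloor^d$ cubes with centres $\{x_j\}$ and $[0, 1]$ into $M_p = \lfloor 1/\epsilon \rfloor$ subintervals with centres $\{p_k\}$. For $\theta \in [M_p]^{M_x}$ set
\[
f_\theta(x, p) = \tfrac{1}{2} + C_{\mathrm{Lip}} \sum_{j=1}^{M_x} \bigl(\tfrac{\epsilon}{4} - \|(x, p) - (x_j, p_{\theta_j})\|_\infty\bigr)_+, \quad Y \mid (X, p) \sim \mathrm{Bernoulli}(f_\theta(X, p)).
\]
The bumps have disjoint supports, so $f_\theta$ is $C_{\mathrm{Lip}}$-Lipschitz and each instance lies in $\mathcal{I}$. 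For $X_t \in B_\mathcal{X}(x_j, \epsilon/16)$ the unique optimal price is inside $B_\mathcal{P}(p_{\theta_j}, \epsilon/8)$, and any other choice incurs instantaneous regret $\gtrsim C_{\mathrm{Lip}} \epsilon$; locating $\theta_j$ thus reduces to an $M_p$-armed needle-in-a-haystack problem. Writing $N_j = |\{t : X_t \in B_\mathcal{X}(x_j, \epsilon/16)\}|$ and $T_{j,k} = |\{t : X_t \in B_\mathcal{X}(x_j, \epsilon/16),\ p_t \in B_\mathcal{P}(p_k, \epsilon/8)\}|$, the cell-$j$ regret is $\gtrsim C_{\mathrm{Lip}} \epsilon \, (N_j - T_{j, \theta_j})$.

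The core step is a per-cell KL bound. Fixing $\theta_{-j}$ and comparing $\P_{\theta_j = k}$ against a reference $\P_*$ with the cell-$j$ bump removed, the chain rule for KL gives $\mathrm{KL}(\P_{\theta_j = k} \,\|\, \P_*) \leq C \epsilon^2 \bigl[n_P \mu\bigl(B((x_j, p_k), \epsilon/4)\bigr) + \E_*[T_{j, k}]\bigr] \leq C\bigl(\kappa c_\gamma n_P \epsilon^{d+3+\gamma} + \epsilon^2 \E_*[T_{j, k}]\bigr)$. Since the candidate bumps are pairwise disjoint, $\sum_k \E_*[T_{j,k}] \leq \E_*[N_j] \asymp c_Q n_Q \epsilon^d$, so averaging over $k$ yields $\tfrac{1}{M_p} \sum_k \mathrm{KL}(\P_{\theta_j = k} \,\|\, \P_*) \leq C\bigl(n_P \kappa \epsilon^{d+3+\gamma} + n_Q \epsilon^{d+3}\bigr)$. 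Fano's inequality with a uniform prior on $k \in [M_p]$ then forces the average misidentification probability to be $\Omega(1)$ whenever this average KL is below a small absolute constant (and $\log M_p$ is large enough, which holds in the non-trivial regime), producing per-cell regret $\gtrsim c_Q n_Q \epsilon^{d+1}$. Summing over the $M_x \asymp \epsilon^{-d}$ cells gives total regret of order $n_Q \epsilon$, and choosing $\epsilon \asymp \bigl(n_Q + (\kappa n_P)^{(d+3)/(d+3+\gamma)}\bigr)^{-1/(d+3)}$ to saturate the KL constraint yields the claimed lower bound.

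The primary obstacle I anticipate is the explicit construction of $P_X$ saturating $P_X(B_\mathcal{X}(x, r)) \asymp c_\gamma r^{d+\gamma}$ at \emph{every} scale $r$ and \emph{every} centre $x \in \mathrm{supp}(Q_X)$ simultaneously: a plain power-law density matches the bound only at one scale, so a mildly multi-scale (fractal-like) construction is needed to keep the source data uniformly uninformative across all bump locations. A secondary, more routine, technical point is the decomposition of the per-cell KL under the common reference $\P_*$, which decouples it from the algorithm's adaptive price choices; this is what permits the clean disjointness bound $\sum_k \E_*[T_{j, k}] \leq \E_*[N_j]$ that drives the averaging step in Fano's inequality and ultimately delivers the correct $(d+3)$-scaling inside both regimes of $n_Q$ versus $(\kappa n_P)^{(d+3)/(d+3+\gamma)}$.
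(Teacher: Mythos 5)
Your proposal is essentially correct and reaches the right rate, but it takes a genuinely different route from the paper. You build a per-cell ``needle in a haystack'': each of the $M_x\asymp\epsilon^{-d}$ covariate cells independently hides its bump at one of $M_p\asymp\epsilon^{-1}$ candidate prices, and you run a Fano/change-of-measure argument over the $M_p$ price locations within each cell, conditionally on the other cells. The paper instead places \emph{all} bumps at a \emph{single} price $\tilde p$ and lets the hypotheses differ only in the signs $\omega\in\{\pm1\}^m$ of the bumps; it separates the hypotheses via the Varshamov--Gilbert bound and applies a global Fano-type result (Proposition~1 of Kpotufe and Martinet) to a semi-metric built from the disagreement of the optimal-price indicators. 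Because all bumps sit at one price, the paper must choose $\tilde p$ adversarially by a pigeonhole argument (some price bucket $u$ satisfies $\E(O_u)\le 2n_Q/\lfloor 1/\tilde r\rfloor$) to keep the target KL small; your averaging of $\E_*[T_{j,k}]$ over the $M_p$ candidate locations, using $\sum_k\E_*[T_{j,k}]\le\E_*[N_j]$, achieves exactly the same effect without the pigeonhole. Both constructions saturate the exploration coefficient by making the source price density equal to $\kappa$ where the needles live, and both arrive at the same KL budget $n_Q\epsilon^{d+3}+\kappa n_P\epsilon^{d+3+\gamma}$ and the same choice of $\epsilon$.

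Two points you flag or gloss over deserve comment. First, the ``fractal'' construction of $P_X$ that you identify as the primary obstacle is unnecessary: the KL computation only sees the source mass at the single scale $\epsilon$ of the bumps, so it suffices to take a piecewise-constant $p_X$ whose density equals $c_\gamma\epsilon^{\gamma}q_1$ on the $\epsilon$-balls supporting the bumps (with a buffer annulus), exactly as the paper does; one then checks directly that the transfer-exponent inequality holds at \emph{all} scales for this single-scale construction. Second, your step from ``average misidentification probability is $\Omega(1)$'' to ``per-cell regret $\gtrsim n_Q\epsilon^{d+1}$'' is not immediate: a terminal-identification Fano bound does not by itself control the number of suboptimal pulls accumulated over the horizon. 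The clean way to finish is the one you have already set up, namely to bound $\tfrac{1}{M_p}\sum_k\E_k[T_{j,k}]\le\tfrac{1}{M_p}\E_*[N_j]+N_j\sqrt{\tfrac{1}{M_p}\sum_k\mathrm{KL}(\P_*\,\|\,\P_{\theta_j=k})/2}$ via Pinsker and Jensen (conditioning on the covariate sequence so that $N_j$ is fixed), which is at most $\E[N_j]/2$ under your KL budget; the regret identity $\gtrsim\epsilon\,\E_k[N_j-T_{j,k}]$ then closes the argument. With these two adjustments your construction yields a complete and valid alternative proof of the theorem.
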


Theorems \ref{theorem_upper_bound} and \ref{theorem_lower_bound} together exhibit a phase transition depending on the relative sizes of the adjusted source data $(\kappa n_P)^{\frac{d+3}{d+3+\gamma}}$ and the target data $n_Q$.  When $(\kappa n_P)^{\frac{d+3}{d+3+\gamma}} \gg n_Q$, the algorithm significantly benefits from transfer learning, otherwise the regret is of the same rate as when only the target data are used.

The most relevant work is \cite{cai2024transfer}, which studied transfer learning for nonparametric contextual MAB under covariate shift. In their setting, actions (i.e.~arms) are discrete and for simplicity are treated as a constant (i.e.~the number of arms $K \asymp 1$). They established a regret bound of order  
\begin{equation}\label{cai-upper-bound}
 n_Q \big( n_Q + (\kappa n_P)^{\frac{d+2}{d+2+\gamma}} \big)^{-\frac{\beta}{d+2\beta}},
\end{equation}
with $\beta$ denoting the smoothness parameter of the reward function ($\beta = 1$ in our setting). The simplification $K \asymp 1$ results in their regret bound losing explicit control of $K$, making their approach inapplicable to infinite or uncountable action spaces. 
In contrast, dynamic pricing presents a unique challenge due to its continuous action space (i.e.~prices), requiring more intricate methodologies than that in \cite{cai2024transfer}. Our approach effectively addresses this by adaptively partitioning the joint covariate-price space.  This adaptation ensures optimal regret scaling while accounting for both the covariates (of dimension $d$) and price (of dimension 1).  Substituting  $d+1$ for $d$ in \eqref{cai-upper-bound} shows that their regret bound aligns with the one established in \Cref{theorem_upper_bound}. 

When only target data are utilized, several existing studies have developed methods and analyzed regret bounds for dynamic pricing under semi-parametric forms \citep[e.g.][]{luo2022contextual, xu2022towards, fan2024policy} or additional shape constraints beyond Lipschitz continuity \citep[e.g.][]{chen2020nonparametricpricinganalyticscustomer}, with detailed comparisons in \Cref{app:compare-no-transfer}.

We conclude with a minimax lower bound for the scenario where only target data is available.

 \begin{corollary}\label{corollary_lower_bound}
Let $\mathcal{I}(C_{\mathrm{Lip}}, c_Q )$ denote the class of nonparametric dynamic pricing problems such that target data satisfy \eqref{def-target}, Assumptions~\ref{ass-lipschitz} and \ref{ass-target-cov}.
 It holds that  
\[
\inf_{\pi} \sup_{I \in \mathcal{I}( C_{\mathrm{Lip}}, c_Q )}R_{\pi, I}(n_Q) \geq  c n_Q^{\frac{d+2}{d+3}},
 \]   
where $c>0$ is a constant only depending on constants $C_{\mathrm{Lip}}$ and $ c_Q$. 
 \end{corollary}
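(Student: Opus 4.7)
I would deduce \Cref{corollary_lower_bound} as a direct specialization of \Cref{theorem_lower_bound} with $n_P = 0$. When the source sample size vanishes, the dataset $\mathcal{D}^P$ is empty, so the source-side conditions governed by the transfer exponent $\gamma$ (\Cref{def_trans}) and the exploration coefficient $\kappa$ (\Cref{def_explor}) are vacuously satisfied for any admissible values. Consequently, the class $\mathcal{I}(\gamma, c_{\gamma}, \kappa, C_{\mathrm{Lip}}, c_Q)$ evaluated at $n_P = 0$ coincides with $\mathcal{I}(C_{\mathrm{Lip}}, c_Q)$ in the statement, and the lower bound of \Cref{theorem_lower_bound} reduces to
\[
c n_Q \big(n_Q + 0\big)^{-\frac{1}{d+3}} = c n_Q^{\frac{d+2}{d+3}},
\]
which is precisely the claimed rate.

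\textbf{Key steps.} First, verify the inclusion $\mathcal{I}(C_{\mathrm{Lip}}, c_Q) \subseteq \mathcal{I}(\gamma, c_{\gamma}, \kappa, C_{\mathrm{Lip}}, c_Q)\big|_{n_P = 0}$ by augmenting every target-only instance with an empty source dataset; this augmentation is permissible because all source-side constraints (covariate shift exponent, exploration coefficient, and reward-matching condition \eqref{eq-reward}) are vacuously true when no source observations exist. Second, observe that an empty source dataset carries no additional information for any policy $\pi$, so the cumulative target regret $R_{\pi, I}(n_Q)$ defined in \eqref{def-regret} is invariant under the augmentation. Third, apply \Cref{theorem_lower_bound} with $n_P = 0$, taking $\inf_\pi \sup_I$ over the target-only class, to obtain the stated bound with the same constant $c$ that appears in the theorem.

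\textbf{Main obstacle.} The reduction route has essentially no substantive obstacle beyond confirming that the hard-instance construction used to prove \Cref{theorem_lower_bound} remains valid when the source dataset is absent. If a self-contained proof were preferred, one would repeat an Assouad-type argument with only target data: partition $[0,1]^d$ into $\asymp \epsilon^{-d}$ covariate cubes of side length $\epsilon \asymp n_Q^{-1/(d+3)}$, place a Lipschitz bump of amplitude $\epsilon$ in each cube centred at one of two sign-indexed optimal-price locations, take $Q_X$ uniform so that \Cref{ass-target-cov} holds, and combine a cube-wise two-point test with the fact that each cube is visited $\asymp n_Q \epsilon^d$ times. The delicate step in such a direct proof is controlling the KL divergence between sign-flipped instances under an adaptive pricing policy so as to bound the total variation between the two induced history distributions; this is the same bookkeeping step that drives the transfer-augmented proof of \Cref{theorem_lower_bound}, so invoking that theorem directly is the most economical route.
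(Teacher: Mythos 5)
Your proposal is correct and is essentially the paper's intended derivation: the paper gives no separate proof for \Cref{corollary_lower_bound}, presenting it as the immediate specialization of \Cref{theorem_lower_bound} to $n_P=0$, where the source-side conditions become vacuous, the $\mathrm{KL}^P$ term in the hard-instance construction vanishes, and the bound reduces to $c\,n_Q^{(d+2)/(d+3)}$. Your additional observations (invariance of the regret under augmenting with an empty source dataset, and the dropping of $c_\gamma$ from the constant) are exactly the right sanity checks.
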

It is worth noting that even without considering transfer learning (i.e.~when only target data are available), the minimax lower bound in \Cref{corollary_lower_bound} is, to the best of our knowledge, the first result established under the Lipschitz assumption stated in \Cref{ass-lipschitz}.  Previous works have only established minimax lower bounds either for the case with a semi-parametric form of the expected revenue function \citep[e.g.][]{luo2022contextual, xu2022towards}  or with additional assumption beyond Lipschitz condition on the reward function \citep[e.g.][]{chen2020nonparametricpricinganalyticscustomer}.

\section{Numerical experiments}\label{sec-num}

In this section, we conduct numerical experiments to support our theoretical findings. Synthetic and real data analysis are in Sections~\ref{sec-simulation} and \ref{sec-real-data}, respectively.  The code and datasets are available online\footnote{\url{https://github.com/chrisfanwang/dynamic-pricing}}. 

\subsection{Simulation studies}\label{sec-simulation}

We conduct simulation studies under the covariate shift model as stated in \eqref{eq-reward}.
The target covariates $\{X_t\}_{t=1}^{n_Q}$ are independently and identically distributed (i.i.d.) according to $Q_X$, a uniform distribution over $[0, 1]^d$. For the source domain, covariate-price pairs $\{ X_t^P, p^P_t\}_{t=1}^{n_P}$ are generated i.i.d.~from a joint distribution $\mu$.
We let the source marginal covariate distribution $P_X$ be chosen such that the density function $p_X(x)$ obeys $p_X(x) = c \| x - x^* \|_\infty^{\gamma}$, with $x^* = (1/2, \ldots, 1/2) \in \R^d$ and a normalization constant $c =  2^{\gamma} (\gamma+d)/d$. This choice of $p_X(x)$ ensures that $P_X$ satisfies \Cref{def_trans}. The generation of the source prices is detailed in each scenario below. We consider the source data size $n_P \in \{ 10000(k-1)\colon k \in [5]  \}$, the target data size $n_Q \in \{ 10000 k\colon k \in [5]  \}$ and the transfer exponent  $\gamma \in \{0.5k  \colon k \in [5]  \}$. 

To assess the performance of TLDP (\Cref{alg_1}) in reducing target-domain regret, we compare it with 
TLDP utilising the target data only (Target-Only TLDP),  the Adaptive Binning and Exploration (ABE) algorithm proposed in \cite{chen2020nonparametricpricinganalyticscustomer} and the Explore-then-UCB~(ExUCB) algorithm developed in \cite{luo2022contextual}. 

Through this subsection, TLDP represents the \Cref{alg_1} with $n_P = 2n_Q$. Note that TLDP and Target-Only TLDP require the smallest exploration radius 
$\tilde{r}$ as an input and the constant $C_I$ to construct the index defined in \eqref{index_B}. Specifically, we let $C_I = 1$ and  $\tilde{r}$ be defined in \eqref{def-r_min} with $C_r = 1/4$. To show the sensitivity of the choice of the constants, additional simulation studies are conducted for varying $C_I$ and $C_r$. 
For the ABE algorithm,  we set $M =0.1$,  the constant used to define the maximal number observed in a level-$k$ bin in the partition, as suggested by \cite{chen2020nonparametricpricinganalyticscustomer}.
For ExUCB, parameter choices include the phase exponent $\beta = 2/3$,  the UCB exponent $\gamma = 1/6$, the exploration phase constant $C_1 = 1$, the discretization constant $C_2 = 20$ and the regularization parameter in UCB $\lambda = 0.1$, as suggested by \cite{luo2022contextual}. In addition, we fix the maximum price $p_{\max} = 1$ and  the maximum possible revenue $B = 1$  for the ExUCB implementation.

\begin{figure}[t] 
        \centering
        \includegraphics[width=0.8 \textwidth]{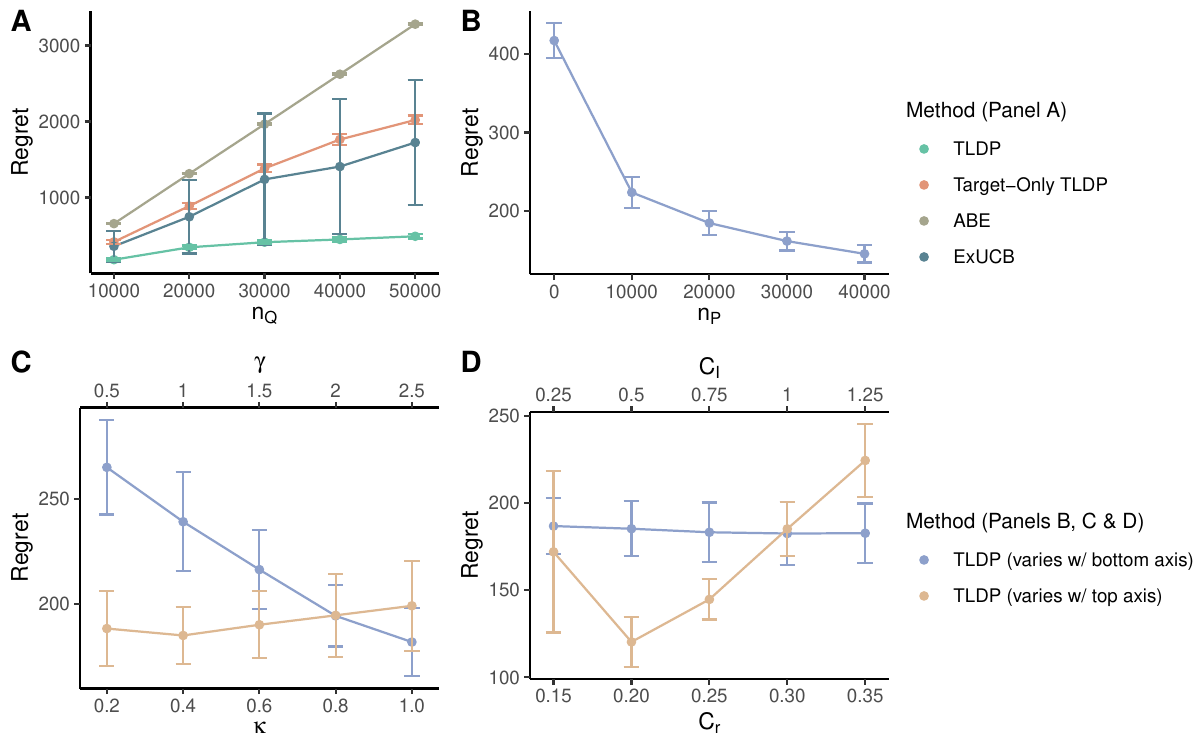}
        \caption{Results for Configuration 1 in Scenario 1.  Panel (A) 
        and (B): varying source data size~$n_P$ and target data size $n_Q$, respectively. Panel (C) varying the transfer exponent $\gamma$ (top axis) and the exploration coefficient $\kappa$ (bottom axis). Panel (D): varying the index constant $C_I$ (top axis) and the exploration radius constant $C_r$ (bottom axis). For Panels (B), (C) and (D), we fix $n_Q = 10000$.}
        \label{Fig_main}
\end{figure}

\medskip \noindent
\textbf{Scenario 1.}
The conditional density of the source price, given the source covariates, is defined as follows for any $x \in [0,1]^d$ and $p \in [0, 1]$, 
\[
    h_{P}(p \vert x) = \begin{cases}
        \kappa,  &p \in [ p^*- r^*/2 , p^*+ r^* /2], \\
        \frac{1 - \kappa r^*}{1- r^*}, &\mbox{otherwise},
\end{cases}
\]
with $p^* = 1/2$, $r^* = 1/4 $ and $\kappa \in \{0.2k \colon k \in [5] \}$. Furthermore, conditioned on the covariate $x$ and price $p$, the observed rewards for both target and source datasets are independently and uniformly distributed in the interval $[f (x, p) - \nu, f (x, p) + \nu ]$, where $f(\cdot, \cdot)$ denotes the expected reward function and $\nu = 0.1$.
 The expected reward function $f$ is constructed as 
\[
f (x, p) =   \theta_0 + \theta^{\top} x p + \tilde{\theta}p^2,
\]
for any $x \in [0, 1]^d$, $p \in [0, 1]$, where $\theta_0, \tilde{\theta} \in \R$ and $\theta \in \R^d$ are specified for each configuration. 
\begin{itemize}
\item \textbf{Configuration 1.} Let the covariate space dimension be $d = 2$. The parameters of the reward function are set as $\theta_0 = 0.2$, $ \theta = (0.3, 0.2)^{\top}$ and $\tilde{\theta} = - 0.1$.
\item 
\textbf{Configuration 2.} Let the covariate space dimension be $d = 3$. The parameters of the reward function are set as $\theta_0 = 0.3$, $ \theta = (0.1, 0.4, -0.2 )^{\top}$ and $\tilde{\theta} = - 0.1$.
\end{itemize}

\medskip 
\noindent
\textbf{Scenario 2.} 
 The conditional density of the source price, given the source covariates, is defined as follows for any $x \in [0,1]^d$ and $p \in [0, 1]$, 
\[
    h_{P}(p \vert x) = \begin{cases}
        \frac{1-\kappa(1-r^*)}{r^*},  &p \in [ p^*- r^*/2 , p^*+ r^* /2], \\
        \kappa , &\mbox{otherwise},
\end{cases}
\]
with $p^* = 1/2$, $r^* = 1/4 $ and  $\kappa \in \{0.2k \colon k \in [5] \}$. Further, conditioned on the covariate $x$ and price $p$, the observed rewards for both target and source datasets are independently distributed $Y \sim \mathcal{N}(f(x, p), \sigma^2)$ with $\sigma = 0.01$.
For the expected reward function $f$, we adopt the setting from \cite{cai2024transfer}. Define a function $\phi \colon \mathbb{R}_{+} \to [0, 1]$ by
\[
\phi(z) = 
\begin{cases}
1, & 0 \leq z < 1/12, \\
2 - 12z, & 1/12 \leq z < 1/6, \\
0, & \mbox{otherwise}.
\end{cases}
\]
The reward function $f \colon [0,1]^d \times [0, 1] \to [0, 1]$  is then set to be
\[
f (x, p) =
1/4 + \sum_{i=1}^{m} 3/4  \phi \big( \|(x^{\top}, p)^{\top} - ( (x_i^*)^{\top}, p_i^*)^{\top}\|_\infty \big)\mathbbm{1} \big\{x \in B_{\mathcal{X}}(x_i^*, r^*)\big\},
\]
where $m > 0$, $r^* \in ( 0,1 ]$ and $\{( (x_i^*)^{\top}, p_i^*)^{\top} \}_{i=1}^m$ are specified for each configuration. 
\begin{itemize}
    \item 
\textbf{Configuration 1.} Let the covariate space dimension be $d = 2$. Let $m =4$, $r^* = 1/4 $ and centres be
\[
\{ ( (x_i^*)^{\top}, p_i^*)^{\top}  \}_{i=1}^m = \Big\{\Big(\frac{1}{4}, \frac{1}{4}, \frac{1}{4}\Big), \Big(\frac{3}{4}, \frac{1}{4}, \frac{1}{4}\Big), \Big(\frac{1}{4}, \frac{3}{4}, \frac{3}{4}\Big), \Big(\frac{3}{4}, \frac{3}{4}, \frac{1}{4}\Big) \}.
\]
\item \textbf{Configuration 2.} Let the covariate space dimension be $d = 3$. Let $m = 8$, $r^* = 1/4 $ and centres be
\begin{align*}
\{ ( (x_i^*)^{\top}, p_i^*)^{\top} \}_{i=1}^m = &
\Big\{\Big(\frac{1}{4}, \frac{1}{4}, \frac{1}{4}, \frac{1}{4}\Big),
\Big(\frac{3}{4}, \frac{1}{4}, \frac{1}{4}, \frac{1}{4}\Big), 
\Big(\frac{1}{4}, \frac{3}{4}, \frac{1}{4}, \frac{1}{4}\Big),
\Big(\frac{3}{4}, \frac{3}{4}, \frac{1}{4}, \frac{1}{4}\Big),
\nonumber\\
& \hspace{0.25cm}
\Big(\frac{1}{4}, \frac{1}{4}, \frac{3}{4}, \frac{3}{4}\Big),
\Big(\frac{3}{4}, \frac{1}{4}, \frac{3}{4}, \frac{3}{4}\Big),
\Big(\frac{1}{4}, \frac{3}{4}, \frac{3}{4}, \frac{3}{4}\Big),
\Big(\frac{3}{4}, \frac{3}{4}, \frac{3}{4}, \frac{3}{4}\Big) 
\Big\}.
\end{align*}
\end{itemize}

In \textbf{Scenario 1}, the reward function is linear in covariates and quadratic in price, while in \textbf{Scenario 2}, it is fully nonparametric. The simulation results for \textbf{Configuration 1} of \textbf{Scenario~1} are presented in \Cref{Fig_main} with additional results provided in \Cref{sec-add-num}. Some key observations are highlighted in order. 

In both scenarios, TLDP outperforms the alternatives. In \textbf{Scenario~1}, ExUCB outperforms ABE and Target-Only TLDP  due to the linear structure of the reward function in the covariates. The ExUCB algorithm is specifically designed to leverage such linear relationships, thereby achieving superior performance. In \textbf{Scenario~2}, ABE and Target-Only TLDP demonstrate better performance than ExUCB.   These findings align with the discussion in \Cref{sec-minimax} and \Cref{app:compare-no-transfer}.

In Panel (A), as the target data size $n_Q$ increases, the regret for all four methods increases. TLDP, however,  exhibits a significantly slower rate of regret increase compared to its competitors, due to its ability to leverage information from the additional source data.  Moreover, both TLDP and Target-Only TLDP demonstrate sublinear growth in regret as $n_Q$ increases, aligning with the theoretical regret order established in  \Cref{theorem_upper_bound}. In Panel (B), as the source data size $n_P$ increases, the regret of TLDP decreases, reflecting improved performance with larger source datasets. This trend aligns with the theoretical results in \Cref{theorem_upper_bound}, where the regret is expected to decrease at the rate $n_P^{-1 / (d+3+\gamma)}$, further validating the algorithm's performance.

In Panel (C), as the transfer exponent  $\gamma$ increases, reflecting reduced overlap between source and target covariate distributions, the regret of TLDP also increases. This indicates less efficient information transfer from the source to the target domain.
Moreover, as the exploration coefficient~$\kappa$ increases, the source data provide better exploration of the price space for each covariate. Consequently, the regret of TLDP decreases, indicating more effective information transfer. Panel~(C) is consistent with \Cref{theorem_upper_bound}, demonstrating the impact of these parameters on the efficiency of the transfer learning framework.
It is evident from Panel (D)  that the performance of TLDP remains relatively stable across different values of the index constant $C_I$ and the exploration radius constant~$C_r$, indicating that TLDP is robust to the choice of the two constants.

\subsection{Real data analysis}\label{sec-real-data}

\begin{table}[t]
\centering
\caption{Results for the auto loan dataset with East South Central data as the target division. Columns correspond to the source divisions utilized in TLDP. Here, $n_P$ represents the number of the source data utilized in TLDP and $n$ denotes the total number of source observations. Each cell reports the mean and standard deviation over $100$ simulations. } 
\begin{tabular}{lcccccccc}
\\\hline
Methods   & Mountain & East North Central  &  West South Central  &  Pacific  \\
\hline
ABE &  70.81 (1.40) & 70.81 (1.40) & 70.81 (1.40) &  70.81 (1.40)\\
ExUCB & 64.91 (5.31) & 64.91 (5.31)  & 64.91 (5.31)  & 64.91 (5.31)  \\
TLDP($n_P = 0$) &  71.95 (2.87) &  71.95 (2.87) &  71.95 (2.87) &  71.95 (2.87) \\
TLDP($n_P = 0.25n$) & 55.63 (8.04) &  56.80 (8.59) &  53.95 (8.26) &  51.61 (8.89)\\
TLDP($n_P = 0.5n$)&   54.29 (7.06)   &  55.31 (8.60)& 54.07 (7.44) & 51.44 (7.12)\\
TLDP($n_P = 0.75n$) & 51.38 (6.92) &  51.52 (7.43)& 50.67 (7.19)& 49.40 (7.40)\\ 
TLDP($n_P = n$) &   50.23 (7.34) &  51.92 (7.09) &  49.85 (7.97)& 48.72 (8.27)\\ \hline
\end{tabular}
\label{table-loan}
\end{table}

In this section, we evaluate the practical utility of our proposed algorithm using the auto loan dataset \citep{phillips2015effectiveness}, which consists of $208,085$ applications submitted to a major online lender in the United States between July 2002 and November 2004. This dataset has been extensively studied in previous works, such as \cite{phillips2015effectiveness}, \cite{luo2024distribution}  and \cite{zhao2024contextual}, to assess various dynamic pricing algorithms. The performance of TLDP is still compared against the two benchmark algorithms, ABE and ExUCB. 

The loan price is calculated as the net present value of future payments adjusted by the loan amount, as follows
\[
\mbox{Price} = \mbox{Monthly Payment}  \times \sum_{i=1}^{\mbox{Term}}(1 + \mbox{Rate})^{-i}  - \mbox{Loan Amount},
\]
where $\mbox{Rate} = 0.12 \%$ represents the average monthly London interbank offered rate during the study period. The reward metric is defined as the product of the consumer's decision (whether the loan is accepted) and the computed price,  capturing the total revenue generated for the lender.

For this analysis, five covariates identified as significant in prior studies \citep[e.g.][]{luo2024distribution, zhao2024contextual} are included:~the loan amount approved, the approved term, the prime rate, the competitor's rate and the customer's FICO score. All features, including rewards, prices and covariates, are normalized to the range $[0, 1]$. 
Additionally, U.S.~states are grouped into nine divisions following the United States Census Bureau classification: East North Central, East South Central, Middle Atlantic, Mountain, New England, Pacific, South Atlantic, West North Central and West South Central. \Cref{tab:preprocessed_data_sample-loan} in \Cref{sec-add-num} provides a sample of the processed dataset.

For this study, the East South Central division ($8,062$ applications) is designated as the target domain due to its smallest data size. The Mountain ($12,527$ applications), East North Central~($20,686$ applications), West South Central ($25,695$ applications) and Pacific ($34,870$ applications) divisions are used as source domains for the TLDP algorithm.  The true reward function is approximated using a random forest model implemented in \texttt{R} \citep{R}, leveraging the \texttt{randomForest} package \citep{randomForest}, trained on the target data. Optimal prices for each observation are determined by numerically optimizing the approximated reward function. 

The results are averaged over $100$ simulations, with each simulation randomly selecting $90\%$ of the target data as the test data. The results are summarized in \Cref{table-loan} with the key findings as follows. 
When no source data are utilized ($n_p = 0$), TLDP incurs slightly higher regret compared to ABE and ExUCB.  While ExUCB achieves the lowest average cumulative regret, it exhibits significantly higher variance. Conversely, ABE demonstrates low variance but relatively high regret. As the source data size ($n_P$) increases, TLDP's cumulative regret consistently decreases, indicating its ability to effectively leverage data from other divisions.

\section{Conclusion}\label{sec-conclusion}

In this paper, we study transfer learning for nonparametric contextual dynamic pricing under covariate shift, which, to the best of our knowledge, is the first time seen in the literature. We propose the TLDP algorithm, which adaptively partitions the covariate-price space and leverages source data information to guide pricing for the target data.  We show that TLDP achieves optimal regret by establishing a matching minimax lower bound. 

Our work offers several interesting directions for future research.  First, the optimality of TLDP depends on prior knowledge of the transfer exponent $\gamma$ and the exploration coefficient $\kappa$, which are often unknown in practice. A natural extension is to estimate $\kappa$ from the source data before analyzing the target data and to adaptively update an estimate of $\gamma$ by measuring empirical overlaps between the source and target covariate distributions.  Second, in many applications, multiple source datasets ($K > 1$) may be available, each with distinct transfer exponents $\gamma_k $ and exploration coefficients $\kappa_k$. A straightforward approach is to combine all sources into a single dataset before running TLDP.  A more refined approach is to weight sources adaptively by estimating  $\kappa_k $ and $\gamma_k$. Establishing regret bounds for such methods would be more challenging. 
Lastly, it is intriguing to explore transfer learning for nonparametric contextual dynamic pricing under posterior drift.  In this framework, we need to quantify the discrepancies between the reward functions in the source and target domains and to develop novel methods that effectively leverage transferable information.

\bibliographystyle{plainnat}
\bibliography{references}

\begin{thebibliography}{34}
\providecommand{\natexlab}[1]{#1}
\providecommand{\url}[1]{\texttt{#1}}
\expandafter\ifx\csname urlstyle\endcsname\relax
  \providecommand{\doi}[1]{doi: #1}\else
  \providecommand{\doi}{doi: \begingroup \urlstyle{rm}\Url}\fi

\bibitem[Araman and Caldentey(2009)]{araman2009dynamic}
Victor~F Araman and Ren{\'e} Caldentey.
\newblock Dynamic pricing for nonperishable products with demand learning.
\newblock \emph{Operations research}, 57\penalty0 (5):\penalty0 1169--1188, 2009.

\bibitem[Ban and Keskin(2021)]{ban2021personalized}
Gah-Yi Ban and N~Bora Keskin.
\newblock Personalized dynamic pricing with machine learning: High-dimensional features and heterogeneous elasticity.
\newblock \emph{Management Science}, 67\penalty0 (9):\penalty0 5549--5568, 2021.

\bibitem[Besbes and Zeevi(2009)]{besbes2009dynamic}
Omar Besbes and Assaf Zeevi.
\newblock Dynamic pricing without knowing the demand function: Risk bounds and near-optimal algorithms.
\newblock \emph{Operations research}, 57\penalty0 (6):\penalty0 1407--1420, 2009.

\bibitem[Cai et~al.(2024{\natexlab{a}})Cai, Cai, and Li]{cai2024transfer}
Changxiao Cai, T~Tony Cai, and Hongzhe Li.
\newblock Transfer learning for contextual multi-armed bandits.
\newblock \emph{The Annals of Statistics}, 52\penalty0 (1):\penalty0 207--232, 2024{\natexlab{a}}.

\bibitem[Cai and Pu(2022)]{cai2022transfer}
T~Tony Cai and Hongming Pu.
\newblock Transfer learning for nonparametric regression: Non-asymptotic minimax analysis and adaptive procedure.
\newblock \emph{arXiv preprint arXiv:0000.0000}, 2022.

\bibitem[Cai et~al.(2024{\natexlab{b}})Cai, Kim, and Pu]{cai2024transferfunctional}
T~Tony Cai, Dongwoo Kim, and Hongming Pu.
\newblock Transfer learning for functional mean estimation: Phase transition and adaptive algorithms.
\newblock \emph{The Annals of Statistics}, 52\penalty0 (2):\penalty0 654--678, 2024{\natexlab{b}}.

\bibitem[Chen and Gallego(2020)]{chen2020nonparametricpricinganalyticscustomer}
Ningyuan Chen and Guillermo Gallego.
\newblock Nonparametric pricing analytics with customer covariates, 2020.
\newblock URL \url{https://arxiv.org/abs/1805.01136}.

\bibitem[Chen and Gallego(2021)]{chen2021nonparametric}
Ningyuan Chen and Guillermo Gallego.
\newblock Nonparametric pricing analytics with customer covariates.
\newblock \emph{Operations Research}, 69\penalty0 (3):\penalty0 974--984, 2021.

\bibitem[Chen et~al.(2023)Chen, Miao, and Wang]{chen2023differential}
Xi~Chen, Sentao Miao, and Yining Wang.
\newblock Differential privacy in personalized pricing with nonparametric demand models.
\newblock \emph{Operations Research}, 71\penalty0 (2):\penalty0 581--602, 2023.

\bibitem[Fan et~al.(2024)Fan, Guo, and Yu]{fan2024policy}
Jianqing Fan, Yongyi Guo, and Mengxin Yu.
\newblock Policy optimization using semiparametric models for dynamic pricing.
\newblock \emph{Journal of the American Statistical Association}, 119\penalty0 (545):\penalty0 552--564, 2024.

\bibitem[Han et~al.(2021)Han, Pang, and Wu]{han2021robusttransferlearningpretrained}
Wenjuan Han, Bo~Pang, and Yingnian Wu.
\newblock Robust transfer learning with pretrained language models through adapters, 2021.
\newblock URL \url{https://arxiv.org/abs/2108.02340}.

\bibitem[Hanneke and Kpotufe(2019)]{hanneke2019value}
Steve Hanneke and Samory Kpotufe.
\newblock On the value of target data in transfer learning.
\newblock \emph{Advances in Neural Information Processing Systems}, 32, 2019.

\bibitem[Javanmard and Nazerzadeh(2019)]{javanmard2019dynamic}
Adel Javanmard and Hamid Nazerzadeh.
\newblock Dynamic pricing in high-dimensions.
\newblock \emph{Journal of Machine Learning Research}, 20\penalty0 (9):\penalty0 1--49, 2019.

\bibitem[Kpotufe and Martinet(2020)]{kpotufe2020marginalsingularitybenefitslabels}
Samory Kpotufe and Guillaume Martinet.
\newblock Marginal singularity, and the benefits of labels in covariate-shift, 2020.
\newblock URL \url{https://arxiv.org/abs/1803.01833}.

\bibitem[Kpotufe and Martinet(2021)]{kpotufe2021marginal}
Samory Kpotufe and Guillaume Martinet.
\newblock Marginal singularity and the benefits of labels in covariate-shift.
\newblock \emph{The Annals of Statistics}, 49\penalty0 (6):\penalty0 3299--3323, 2021.

\bibitem[Liaw and Wiener(2002)]{randomForest}
Andy Liaw and Matthew Wiener.
\newblock Classification and regression by randomforest.
\newblock \emph{R News}, 2\penalty0 (3):\penalty0 18--22, 2002.
\newblock URL \url{https://CRAN.R-project.org/doc/Rnews/}.

\bibitem[Luo et~al.(2022)Luo, Sun, and Liu]{luo2022contextual}
Yiyun Luo, Will~Wei Sun, and Yufeng Liu.
\newblock Contextual dynamic pricing with unknown noise: Explore-then-ucb strategy and improved regrets.
\newblock \emph{Advances in Neural Information Processing Systems}, 35:\penalty0 37445--37457, 2022.

\bibitem[Luo et~al.(2024)Luo, Sun, and Liu]{luo2024distribution}
Yiyun Luo, Will~Wei Sun, and Yufeng Liu.
\newblock Distribution-free contextual dynamic pricing.
\newblock \emph{Mathematics of Operations Research}, 49\penalty0 (1):\penalty0 599--618, 2024.

\bibitem[Maity et~al.(2022)Maity, Sun, and Banerjee]{maity2022minimax}
Subha Maity, Yuekai Sun, and Moulinath Banerjee.
\newblock Minimax optimal approaches to the label shift problem in non-parametric settings.
\newblock \emph{Journal of Machine Learning Research}, 23\penalty0 (346):\penalty0 1--45, 2022.

\bibitem[Maqsood et~al.(2019)Maqsood, Nazir, Khan, Aadil, Jamal, Mehmood, and Song]{maqsood2019transfer}
Muazzam Maqsood, Faria Nazir, Umair Khan, Farhan Aadil, Habibullah Jamal, Irfan Mehmood, and Oh-young Song.
\newblock Transfer learning assisted classification and detection of alzheimer’s disease stages using 3d mri scans.
\newblock \emph{Sensors}, 19\penalty0 (11):\penalty0 2645, 2019.

\bibitem[Pan and Yang(2009)]{pan2009survey}
Sinno~Jialin Pan and Qiang Yang.
\newblock A survey of transfer learning.
\newblock \emph{IEEE Transactions on knowledge and data engineering}, 22\penalty0 (10):\penalty0 1345--1359, 2009.

\bibitem[Pan et~al.(2010)Pan, Xiang, Liu, and Yang]{pan2010transfer}
Weike Pan, Evan Xiang, Nathan Liu, and Qiang Yang.
\newblock Transfer learning in collaborative filtering for sparsity reduction.
\newblock In \emph{Proceedings of the AAAI Conference on Artificial Intelligence}, volume~24, pages 230--235, 2010.

\bibitem[Phillips et~al.(2015)Phillips, {\c{S}}im{\c{s}}ek, and Van~Ryzin]{phillips2015effectiveness}
Robert Phillips, A~Serdar {\c{S}}im{\c{s}}ek, and Garrett Van~Ryzin.
\newblock The effectiveness of field price discretion: Empirical evidence from auto lending.
\newblock \emph{Management Science}, 61\penalty0 (8):\penalty0 1741--1759, 2015.

\bibitem[Qiang and Bayati(2016)]{qiang2016dynamic}
Sheng Qiang and Mohsen Bayati.
\newblock Dynamic pricing with demand covariates.
\newblock \emph{arXiv preprint arXiv:1604.07463}, 2016.

\bibitem[{R Core Team}(2021)]{R}
{R Core Team}.
\newblock \emph{R: A Language and Environment for Statistical Computing}.
\newblock R Foundation for Statistical Computing, Vienna, Austria, 2021.
\newblock URL \url{https://www.R-project.org/}.

\bibitem[Reeve et~al.(2021)Reeve, Cannings, and Samworth]{reeve2021adaptive}
Henry~WJ Reeve, Timothy~I Cannings, and Richard~J Samworth.
\newblock Adaptive transfer learning.
\newblock \emph{The Annals of Statistics}, 49\penalty0 (6):\penalty0 3618--3649, 2021.

\bibitem[Slivkins(2011)]{slivkins2011contextual}
Aleksandrs Slivkins.
\newblock Contextual bandits with similarity information.
\newblock In \emph{Proceedings of the 24th annual Conference On Learning Theory}, pages 679--702. JMLR Workshop and Conference Proceedings, 2011.

\bibitem[Suk and Kpotufe(2021)]{suk2021self}
Joseph Suk and Samory Kpotufe.
\newblock Self-tuning bandits over unknown covariate-shifts.
\newblock In \emph{Algorithmic Learning Theory}, pages 1114--1156. PMLR, 2021.

\bibitem[Tsybakov(2009)]{tsybakov2009introduction}
Alexander Tsybakov.
\newblock \emph{Introduction to Nonparametric Estimation}.
\newblock Springer, 2009.

\bibitem[Wainwright(2019)]{wainwright2019high}
Martin~J Wainwright.
\newblock \emph{High-dimensional statistics: A non-asymptotic viewpoint}, volume~48.
\newblock Cambridge University Press, 2019.

\bibitem[Wang et~al.(2023)Wang, Wang, Sun, and Cheng]{wang2023online}
Chi-Hua Wang, Zhanyu Wang, Will~Wei Sun, and Guang Cheng.
\newblock Online regularization toward always-valid high-dimensional dynamic pricing.
\newblock \emph{Journal of the American Statistical Association}, pages 1--13, 2023.

\bibitem[Wang et~al.(2021)Wang, Talluri, and Li]{wang2021dynamic}
Hanzhao Wang, Kalyan Talluri, and Xiaocheng Li.
\newblock On dynamic pricing with covariates.
\newblock \emph{arXiv preprint arXiv:2112.13254}, 2021.

\bibitem[Xu and Wang(2022)]{xu2022towards}
Jianyu Xu and Yu-Xiang Wang.
\newblock Towards agnostic feature-based dynamic pricing: Linear policies vs linear valuation with unknown noise.
\newblock In \emph{International Conference on Artificial Intelligence and Statistics}, pages 9643--9662. PMLR, 2022.

\bibitem[Zhao et~al.(2024)Zhao, Jiang, and Yu]{zhao2024contextual}
Zifeng Zhao, Feiyu Jiang, and Yi~Yu.
\newblock Contextual dynamic pricing: Algorithms, optimality, and local differential privacy constraints.
\newblock \emph{arXiv preprint arXiv:2406.02424}, 2024.

\end{thebibliography}

\appendix

\section*{Appendices}

All technical details of this paper can be found in the Appendices. A detailed comparison of our regret bound in \Cref{theorem_upper_bound} with those from non-transfer learning studies is presented in \Cref{app:compare-no-transfer}. The proofs of \Cref{theorem_upper_bound} and  \Cref{theorem_lower_bound} are included in Appendices \ref{proof-upper-bound} and \ref{proof-lower-bound}, respectively. Additional details and results for \Cref{sec-num} are collected in \Cref{sec-add-num}.

\begin{section}[]{Comparisons with non-transfer learning regret bounds} \label{app:compare-no-transfer}
  
We compare the regret bound established for TLDP in \Cref{theorem_upper_bound} with those from relevant studies on dynamic pricing and contextual bandits that do not incorporate transfer learning:
 \begin{itemize}
     \item  \cite{slivkins2011contextual} studied contextual bandits with similarity information,  where both contexts and actions are embedded in a metric space equipped with a distance function. The regret bounds in \cite{slivkins2011contextual}, adapted to our notation, are of order
     \[
      n_Q^{\frac{d+2}{d+3}} \log (n_Q) ,
     \]  
    matching our result up to a logarithmic factor;
    \item \cite{chen2020nonparametricpricinganalyticscustomer} investigated nonparametric dynamic pricing with covariates and proposed the ABE algorithm, which adaptively partitions the covariate space based on observed data to balance exploration and exploitation. They showed that the ABE algorithm achieves a regret bound of order
    \[
    n_Q^{\frac{d+2}{d+4}} \log^{2} (n_Q) ,
   \] 
   which grows more slowly than ours due to additional assumptions beyond the standard Lipschitz condition stated in \Cref{ass-lipschitz}, such as the local strong concavity of the reward function; 
   \item Several studies on nonparametric dynamic pricing \citep[e.g.,][]{luo2022contextual, xu2022towards, fan2024policy} consider a specific form of the expected revenue function,
   \[
       f(x, p) = p \{1- F(p- x^{\top} \theta) \},
   \]
   where $F$ is a nonparametric cumulative distribution function (CDF) of noise influencing customer valuations and $\theta \in \R^d$ is an unknown parameter vector representing customer sensitivity. For instance, \cite{luo2022contextual}  developed the ExUCB algorithm and derived a regret bound of order
   \[
      n_Q^{3/4} .
   \] 
   independent of $d$ in the exponent. This independence arises from the linear parametric form of $x^{\top} \theta$, which reduces the intrinsic complexity of the problem and mitigates the curse of dimensionality often encountered in fully nonparametric settings.  In contrast, our results address a more general nonparametric reward function, which introduces additional complexity in capturing the covariate-price relationship without structural simplifications. Consequently, this leads to dimensional dependence, reflected in the $d$ appearing in the exponent of our regret bound.
 \end{itemize}
\end{section}

\section[]{Proof of Theorem \ref{theorem_upper_bound}}\label{proof-upper-bound}

The proof of  \Cref{theorem_upper_bound} is in \Cref{app-subsec-theorem_1} with all necessary auxiliary results in \Cref{app-subsec-theorem_1-aux}.

\subsection{Proof of Theorem \ref{theorem_upper_bound}}\label{app-subsec-theorem_1}
\begin{proof}

For any $ t \leq n_Q$, let $\mathcal{A}_{t}$ denote the set of active balls at the beginning of time $t$. For any $B \in \mathcal{A}_{t}$,  let $\mathrm{\textbf{re}}_t(B)$ and $n_t(B)$ denote the cumulative revenue and the cumulative count of $B$ from the target data before time $t$ and the source data $\mathcal{D}^P$. Further, define 
\[
\mathcal{E}_1^t = \Big\{ \forall  B \in \mathcal{A}_{t} \colon \big\vert v_t(B) - f(B) \big\vert \leq C_{\mathrm{Lip}}r(B) + \mathrm{\textbf{conf}}_t(B) \Big\},
\] 
and
\[
\mathcal{E}_2^t =  \Big\{ \forall  B \in \mathcal{A}_{t} \colon \mathcal{E}_{B}  \mbox{ holds}   \Big\}
\quad \mbox{with} \quad 
\mathcal{E}_{B} =  \bigg\{ n_B^P \big(\mathcal{D}^P) \geq   C_{\mathcal{E}} \kappa  n_P  r(B)^{d+\gamma+1} \bigg\},
\]
where $C_{\mathcal{E}} =  c_\gamma c_Q$ and $f(B) = f(c(B))$.  Intuitively speaking, $\mathcal{E}_1^t$ represents the good event where the empirical average revenue $v_t(B)$ for every active ball $B\in\mathcal{A}_t$ lies within the upper confidence bound around the true expected revenue $f(B)$, while $\mathcal{E}_2^t$  states that the source data (measured by the number) can still provide sufficient information up to time $t$.

\medskip
\noindent{\bf Consequences of $\mathcal{E}_1^t$}: 
Suppose for any time point $t$,   $\mathcal{E}_1^t$ holds.  Then by \eqref{pre-index_B}, the condition $C_I \geq C_{\mathrm{Lip}}$ and the event $\mathcal{E}_1^t$,  for any $B \in \mathcal{A}_{t}$,  we have 
\begin{align}\label{theorem_1_2}
 I_t^{\mathrm{\textbf{pre}}}(B) \geq f(B).
\end{align}
For any $x \in \mathcal{X}$, denote $p^*(x) = \min \{ p' \in \argmax_{p  \in \mathcal{P}} f(x, p) \}$. By \Cref{lemma-covering}, there exists a $B \in \mathcal{A}_t$ such that $(X_t, p^*(X_t)) \in  \mathrm{\textbf{dom}}(B, \mathcal{A}_t)$. Let $B_t^{\mathrm{\textbf{sel}}}$ denote the selected ball at time $t$, then we derive that for such $B$,
\begin{align}\label{theorem_1_3}
\nonumber I_t(B_t^{\mathrm{\textbf{sel}}}) \geq &  I_t(B) \\ = & C_I r(B) + \min_{B' \in \mathcal{A}_t} \big\{ I_t^{\mathrm{\textbf{pre}}}(B') + C_{I}\| c(B) - c(B') \|_{\infty}   \big\} \nonumber \\
\geq & C_Ir(B) + \min_{B' \in \mathcal{A}_t} \big\{ f(B') + C_{I}\| c(B) - c(B') \|_{\infty}   \big\}  \nonumber \\
\geq &  C_I r(B) + f(B)  \geq  f^*(X_t),  
\end{align}
where the first inequality follows from the selection rule in \Cref{alg_1} (Step 4 therein),  the first equality follows from \eqref{index_B}, the second inequality follows from \eqref{theorem_1_2}, and the third and the last inequalities follow from \Cref{ass-lipschitz} and $C_I \geq C_{\mathrm{Lip}}$. 

Let $B_t^{\mathrm{\textbf{par}}}$ be the parent of $B_t^{\mathrm{\textbf{sel}}}$, we have that (due to step 9 in \Cref{alg_1})
\begin{align}\label{theorem_1_4}
  \| c(B_t^{\mathrm{\textbf{par}}}) -  c(B_t^{\mathrm{\textbf{sel}}}) \|_{\infty} \leq r\big(B_t^{\mathrm{\textbf{par}}} \big).
\end{align}
Further note that  $B_t^{\mathrm{\textbf{par}}}$ must satisfy the condition in Step 7 in \Cref{alg_1} (in order to produce $B_t^{\mathrm{\textbf{sel}}}$),  we have $n_t(B_t^{\mathrm{\textbf{par}}} ) - n_{B_t^{\mathrm{\textbf{par}}} }^P (\mathcal{D}^P)\geq T_{B_t^{\mathrm{\textbf{par}}} }^Q (\mathcal{D}^P)   $.  Then 
\begin{align}\label{theorem_1_5_1}
   \mathrm{\textbf{conf}}_t(B_t^{\mathrm{\textbf{par}}} ) = &
    2 \sqrt{\frac{ \log  \big\{ n_Q \vee (\kappa n_p)^{\frac{d+3}{d+3+\gamma}} \big\}  }{ n_t(B_t^{\mathrm{\textbf{par}}})}}  
  \leq 2 \sqrt{\frac{ \log  \big\{ n_Q \vee (\kappa n_p)^{\frac{d+3}{d+3+\gamma}} \big\}  }{ T^Q_{B_t^{\mathrm{\textbf{par}}}} + n^P_{B_t^{\mathrm{\textbf{par}}}}(\mathcal{D}^P) } }\nonumber\\
  \leq & 2 \sqrt{\frac{\log  \big\{ n_Q \vee (\kappa n_p)^{\frac{d+3}{d+3+\gamma}} \big\}  }{\omega(B_t^{\mathrm{\textbf{par}}})}}
  \leq 2 r\big( B_t^{\mathrm{\textbf{par}}} \big),
\end{align}
where the second inequality follows from \eqref{def-n_B_Q} and the last form \eqref{def_omega}.
Note that 
\begin{align}\label{theorem_1_6}
    I_t^{\mathrm{\textbf{pre}}}\big( B_t^{\mathrm{\textbf{par}}} \big)    
    = & v_t\big( B_t^{\mathrm{\textbf{par}}} \big) + C_I  r\big( B_t^{\mathrm{\textbf{par}}} \big) +   \mathrm{\textbf{conf}}_t\big( B_t^{\mathrm{\textbf{par}}} \big) \nonumber\\
    \leq & f\big( B_t^{\mathrm{\textbf{par}}} \big) +  ( C_I + C_{\mathrm{Lip}})  r\big( B_t^{\mathrm{\textbf{par}}} \big) +   2 \mathrm{\textbf{conf}}_t\big( B_t^{\mathrm{\textbf{par}}} \big) \nonumber\\
    \leq &  f\big( B_t^{\mathrm{\textbf{par}}} \big) + (4+ C_I+ C_{\mathrm{Lip}} )r \big( B_t^{\mathrm{\textbf{par}}} \big)  
    \leq  f\big( B_t^{\mathrm{\textbf{sel}}} \big) + (4+ C_I + 2C_{\mathrm{Lip}})r\big( B_t^{\mathrm{\textbf{par}}} \big) ,   
\end{align}
where the first equality follows from \eqref{pre-index_B}, the first inequality follows from the event $\mathcal{E}_1^t$, the second inequality follows form  \eqref{theorem_1_5_1} and the last inequality follows from \Cref{ass-lipschitz} and \eqref{theorem_1_4}.
Note that 
\begin{align}\label{theorem_1_7}
    I_t\big( B_t^{\mathrm{\textbf{sel}}} \big)    
    \leq & C_I r\big( B_t^{\mathrm{\textbf{sel}}} \big) + I_t^{\mathrm{\textbf{pre}}}\big( B_t^{\mathrm{\textbf{par}}} \big)  + C_I \|c(B_t^{\mathrm{\textbf{sel}}})- c(B_t^{\mathrm{\textbf{par}}}) \|_{\infty}
    \nonumber\\
    \leq & C_I r\big( B_t^{\mathrm{\textbf{sel}}} \big) + I_t^{\mathrm{\textbf{pre}}}\big( B_t^{\mathrm{\textbf{par}}} \big) + C_Ir\big( B_t^{\mathrm{\textbf{par}}} \big) \nonumber\\
    \leq &  C_Ir\big( B_t^{\mathrm{\textbf{sel}}} \big) + f\big( B_t^{\mathrm{\textbf{sel}}} \big) + (4+ 2C_I + 2C_{\mathrm{Lip}}) r\big( B_t^{\mathrm{\textbf{par}}} \big)  
    \nonumber\\
    \leq  &   f\big( B_t^{\mathrm{\textbf{sel}}} \big) + (8+ 5C_I + 4C_{\mathrm{Lip}}) r\big( B_t^{\mathrm{\textbf{sel}}} \big)
    \leq   f\big( X_t, p_t \big) + (8+ 5C_I + 5C_{\mathrm{Lip}}) r\big( B_t^{\mathrm{\textbf{sel}}} \big), 
\end{align}
where the first inequality holds due to \eqref{index_B}, second inequality follows from  \eqref{theorem_1_4},
the third inequality follows from \eqref{theorem_1_6}, the forth holds due to $r\big( B_t^{\mathrm{\textbf{sel}}} \big)=r\big( B_t^{\mathrm{\textbf{par}}} \big)/2$, and the last inequality follows from \Cref{ass-lipschitz} and that $(X_t,p_t)\in B_t^{\mathrm{\textbf{sel}}}.$

Combining \eqref{theorem_1_3} and \eqref{theorem_1_7}, we have that
\begin{align}\label{theorem_1-main}
    f^*(X_t)  - f\big( X_t, p_t \big)  \leq  C_1 r\big( B_t^{\mathrm{\textbf{sel}}} \big), 
\end{align}
where $C_1 = (8+ 5C_I + 5C_{\mathrm{Lip}}).$   

\medskip
\noindent{\bf Step 1:  Consider the case where $ n_Q\geq (\kappa n_P)^{\frac{d+3}{d+3+\gamma}}$.} 

\medskip
\noindent{\bf Regret decomposition}: We then decompose the regret into several terms
\begin{align}\label{theorem_1_step1}
R_{\pi}(n_Q) = &  \E  \bigg\{ \sum_{t=1}^{n_Q} \Big\{ f^{*} (X_t) - f(X_t, p_t) \Big\} \bigg\}  \nonumber\\
=& \E  \bigg[ \sum_{t=1}^{n_Q} \Big\{ f^{*} (X_t) - f(X_t, p_t) \Big\}   \mathbbm{1}_{\{(\mathcal{E}_1^t)^c \}} \bigg] +  \E  \bigg[ \sum_{t=1}^{n_Q} \Big\{ f^{*} (X_t) - f(X_t, p_t) \Big\}    \mathbbm{1}_{\{ \mathcal{E}_1^t\}  } \bigg]  \nonumber\\
  := & \mathrm{(I) + (II)}. 
\end{align}

In the following, we deal with the above two terms separately.  

\medskip
\noindent{\bf Step 1.1:  Bound for (I).}  By  a union bound argument and \Cref{lemma_1},  we have that
\begin{align}\label{theorem_1_event_1}
 \P \big\{ (\mathcal{E}_1^t)^c \big\} 
\leq  \sum_{B \in \mathcal{A}_{t}} 18  n_Q^{-2} \tilde{r}^{-2} \log(n_Q) \leq 18 t n_Q^{-2} \tilde{r}^{-3} \log(n_Q) \leq    18  n_Q^{-1} \tilde{r}^{-3} \log(n_Q) ,
\end{align}
where the second inequality holds by noting $|\mathcal{A}_t|\leq t \tilde{r}^{-1}$ since the smallest radius is lower bounded by $\tilde{r}$ and hence at most $\tilde{r}^{-1}$ balls can be activated at each time point; and the third inequality holds by $t\leq n_Q$.

Therefore, recall that $0\leq f(X_t,p_t)\leq f^*(X_t)\leq 1$, 
 it holds that
\begin{align}\label{bound_I_final}
   \mathrm{(I)}\leq   \sum_{t=1}^{n_Q}  \P \{ \mathcal{E}_1^c \} \leq 
   18\tilde{r}^{-3}   \log(n_Q)\leq 4C_1\tilde{r}^{-3}   \log(n_Q).
\end{align}

\medskip
\noindent
\textbf{Step 1.2: Bound for (II).}   
For any $r \in (0, 1]$, let $\mathcal{F}_r = \{ B \in \mathcal{A}_{n_Q} \colon r(B) = r \}$. For any $B \subset \mathcal{Z}$, let $\mathcal{S}^Q(B)$ be the set of times  $s \in [n_Q]$ when ball $B$ was selected for the target data.  
By \Cref{lemma-covering}, we have that 
\begin{equation}\label{theorem-packing}
   |\mathcal{F}_r|  \leq N_r^{\mathrm{Pack}}(\mathcal{Z}) \leq N_{r/2}(\mathcal{Z}) \leq \bigg(\frac{2}{r} \bigg)^{d+1},
\end{equation}
where $N_r^{\mathrm{Pack}}(\mathcal{Z})$ and $N_r(\mathcal{Z})$ denote $r$-packing number and  $r$-covering number of $\mathcal{Z}$, respectively, and the second inequality follows from the fact that $    N_{2r}^{\mathrm{Pack}}(\mathcal{Z}) \leq   N_r(\mathcal{Z})$.

Then, note that \begin{align}\label{bound_II}
     \mathrm{(II)} = & \E  \bigg[ \sum_{t=1}^{n_Q} \Big\{ f^{*} (X_t) - f(X_t, p_t) \Big\}    \mathbbm{1}_{\{ \mathcal{E}_1^t \}} \bigg]  \nonumber\\
    \leq  &
    \E \bigg[ \sum_{r: \tilde{r} \leq r \leq 1} \sum_{B \in \mathcal{F}_r}\sum_{t \in \mathcal{S}^{Q}(B)} \big\{ f^{*} (X_t) - f(X_t, p_t)  \big\}  \mathbbm{1}_{ \{\mathcal{E}_1^t\}}  \bigg] \nonumber\\
    =  &    \E \bigg[  \sum_{B \in \mathcal{F}_{\tilde{r}}}\sum_{t \in \mathcal{S}^{Q}(B)} \big\{ f^{*} (X_t) - f(X_t, p_t)  \big\}  \mathbbm{1}_{ \{\mathcal{E}_1^t \}}  \bigg]  \nonumber\\
    & \hspace{0.5cm}+ 
    \E \bigg[ \sum_{r: \tilde{r} < r \leq 1} \sum_{B \in \mathcal{F}_r}\sum_{t \in \mathcal{S}^{Q}(B)} \big\{ f^{*} (X_t) - f(X_t, p_t)  \big\}  \mathbbm{1}_{ \{\mathcal{E}_1^t \} }  \bigg] \nonumber\\
    \leq & C_1 n_Q \tilde{r}  + \E  \bigg\{ \sum_{r: \tilde{r} < r \leq 1} \sum_{B \in \mathcal{F}_r}\sum_{t \in \mathcal{S}^{Q}(B)}C_1  r   \mathbbm{1}_{ \{\mathcal{E}_1^t \}}   \bigg\},
\end{align}
where the second inequality holds due to \eqref{theorem_1-main}.

When $n_Q\geq  (\kappa n_P)^{\frac{d+3}{d+3+\gamma}} $,  by \eqref{def-n_B_Q} and \eqref{def_omega},  for any $r(B) > \tilde{r}$,  we have 
\begin{equation}\label{bound_ball_num}
  |\mathcal{S}^Q(B)|\leq T_B^Q \leq \omega(B) \leq \frac{\log(n_Q)}{r(B)^2} +1.
\end{equation}
as otherwise a new ball within $B$ will be activated due to Step 6 in \Cref{alg_1}.
As a consequence, by \eqref{theorem-packing}, \eqref{bound_II} and \eqref{bound_ball_num},  we have that 
\begin{align}\label{bound_II_final}
\mathrm{(II)} \leq & C_1 n_Q \tilde{r}  + \sum_{r: \tilde{r} < r \leq 1}  C_1 r \bigg(\frac{2}{r} \bigg)^{d+1}  \bigg\{\frac{\log(n_Q)}{r^2} +1 \bigg\} \nonumber\\
  \leq &  C_1 n_Q \tilde{r}  +   2^{d+1} C_1 \sum_{r: \tilde{r} < r \leq 1} \Big\{ r^{-(d+2)} \log(n_Q) + r^{-d}  \Big\} 
 \nonumber\\
    \leq &   C_1 n_Q \tilde{r}    +   2^{d+2} C_1 \Big\{ \tilde{r}^{-(d+2)} \log(n_Q) + \tilde{r}^{-d}  \Big\},
\end{align}
where the last inequality follows from the fact that
$$
     \sum_{r: \tilde{r} < r \leq 1} r^{-d} =  \sum_{k=0}^{ \lceil - \log_2(\tilde{r}) \rceil - 1} 2^{kd} = \frac{2^{\lceil - \log_2(\tilde{r}) \rceil d} -1}{2^{d}-1} \leq \frac{2^d \tilde{r}^{-d} -1 }{2^d-1}  \leq \frac{2^d  }{2^d-1} \tilde{r}^{-d}  \leq 2\tilde{r}^{-d}. 
$$

\medskip
Combining \eqref{theorem_1_step1}, \eqref{bound_I_final} and \eqref{bound_II_final}, we have that in the case where $ n_Q\geq (\kappa n_P)^{\frac{d+3}{d+3+\gamma}}$, 
\begin{align}\label{theorem_1_case_1}
R_{\pi}(n_Q) 
\leq &  4C_1 \tilde{r}^{-3}   \log(n_Q) + C_1n_Q \tilde{r}    +   2^{d+2} C_1 \Big\{ \tilde{r}^{-(d+2)} \log(n_Q) + \tilde{r}^{-d}  \Big\} \nonumber\\
 \leq &     C_1 n_Q \tilde{r}    +   2^{d+4} C_1  \tilde{r}^{-(d+2)} \log(n_Q). 
\end{align}

\medskip
\noindent{\bf Step 2}:  Consider the case where $ n_Q < (\kappa n_P)^{\frac{d+3}{d+3+\gamma}}$ and 
\begin{equation}\label{theorem_1_step2.1}
\tilde{r} \geq \Big\{ \frac{8 \log  \big\{ (\kappa n_p)^{\frac{d+3}{d+3+\gamma}} \big\}}{C_{\mathcal{E}} \kappa  n_P} \Big\}^{\frac{1}{d+3+\gamma}},
\end{equation}
where  $C_{\mathcal{E}} =   c_\gamma c_Q$.

\medskip
\noindent{\bf Regret decomposition}: We then decompose the regret into several terms
\begin{align}\label{theorem_1_step2}
R_{\pi}(n_Q) = &  \E  \bigg\{ \sum_{t=1}^{n_Q} \Big\{ f^{*} (X_t) - f(X_t, p_t) \Big\} \bigg\}  \nonumber\\
=& \E  \bigg[ \sum_{t=1}^{n_Q} \Big\{ f^{*} (X_t) - f(X_t, p_t) \Big\}   \mathbbm{1}_{\{(\mathcal{E}_1^t)^c \}} \bigg]  + \E  \bigg[ \sum_{t=1}^{n_Q} \Big\{ f^{*} (X_t) - f(X_t, p_t) \Big\}    \mathbbm{1}_{\big\{\mathcal{E}_1^t  \cap \mathcal{E}_2^t \big\}} \bigg]  \nonumber\\
& + \E  \bigg[ \sum_{t=1}^{n_Q} \Big\{ f^{*} (X_t) - f(X_t, p_t) \Big\}    \mathbbm{1}_{\big\{\mathcal{E}_1^t \cap (\mathcal{E}_2^t)^c \big\}} \bigg] 
\nonumber\\
  := & \mathrm{(I) + (II) + (III)}. 
\end{align} 
In the following, we deal with the above three terms separately.  

\medskip
\noindent{\bf Step 2.1:  Bound for (I).}  Similar to \eqref{theorem_1_event_1},  we have that 
\begin{align}\label{theorem_1_event_1-2}
 \P \big\{ (\mathcal{E}_1^t)^c \big\}
\leq  \sum_{B \in \mathcal{A}_{t}}3    \{(\kappa n_P)^{\frac{d+3}{d+3+\gamma}}\}^{-2}
\leq 3 t  \tilde{r}^{-1}  \{(\kappa n_P)^{\frac{d+3}{d+3+\gamma}}\}^{-2}
\leq   3  n_Q\tilde{r}^{-1}  \{(\kappa n_P)^{\frac{d+3}{d+3+\gamma}}\}^{-2}.
\end{align}
Therefore, recall that $0\leq f(X_t,p_t)\leq f^*(X_t)\leq 1$, 
 it holds that
\begin{align}\label{bound_I_final-2}
   \mathrm{(I)}\leq   \sum_{t=1}^{n_Q}  \P \{ \mathcal{E}_1^c \} \leq 
   3  n_Q^2\tilde{r}^{-1}  \{(\kappa n_P)^{\frac{d+3}{d+3+\gamma}}\}^{-2}.
\end{align}

\medskip
\noindent\textbf{Step 2.2: Bound for (II).}   
By similar arguments as \eqref{bound_II}, we have 
\begin{align}\label{theorem_1_10}
\mathrm{(II)} \leq & C_1 n_Q \tilde{r} 
 + \E \bigg[ \sum_{r: \tilde{r} < r \leq 1} \sum_{B \in \mathcal{F}_r}\sum_{t \in \mathcal{S}^{Q}(B)} \big\{ f^{*} (X_t) - f(X_t, p_t)  \big\}  \mathbbm{1}_{\big\{\mathcal{E}_1^t \cap \mathcal{E}_2^t \big\}}  \bigg]. 
\end{align}
Note that under $n_Q  <  (\kappa n_P)^{\frac{d+3}{d+3+\gamma}}$ and \eqref{theorem_1_step2.1}, 
if $\mathcal{E}_2^t$ holds,  by \Cref{lemma-n_Q}, we have that $T^Q_B =0$ for any $B\in\mathcal{A}_t$. This implies that  $S^Q(B)=\emptyset$. Therefore, the second term in \eqref{theorem_1_10} vanishes and thus we have, 
\begin{equation}\label{bound_II_final-2}
    \mathrm{(II)} \leq   C_1 n_Q \tilde{r}. 
\end{equation}

\medskip
\noindent\textbf{Step 2.3: Bound for (III).}
Note that by \Cref{lemma-n_Q} and a union bound argument,  under $n_Q  <  (\kappa n_P)^{\frac{d+3}{d+3+\gamma}}$ and \eqref{theorem_1_step2.1}, 
it holds that 
\[
 \P \big\{( \mathcal{E}_2^t)^c \big\} \leq n_Q \tilde{r}^{-1} \{(\kappa n_P)^{\frac{d+3}{d+3+\gamma}}\}^{-2}.
\]
Therefore,
\begin{align}\label{bound_III_final}
  \mathrm{(III)} \leq  \sum_{t=1}^{n_Q}   \P \big\{ (\mathcal{E}_2^t)^c \big\} \leq
  n_Q^2 \tilde{r}^{-1} \{(\kappa n_P)^{\frac{d+3}{d+3+\gamma}}\}^{-2}.
\end{align}

\medskip
Combining \eqref{theorem_1_step2}, \eqref{bound_I_final-2}, \eqref{bound_II_final-2} and \eqref{bound_III_final}, we can conclude that 
\begin{align}\label{theorem_1_case_2}
       R_{\pi}(n_Q) \leq &   3  n_Q^2\tilde{r}^{-1}  \{(\kappa n_P)^{\frac{d+3}{d+3+\gamma}}\}^{-2}+     C_1 n_Q \tilde{r}  +    n_Q^2 \tilde{r}^{-1} \{(\kappa n_P)^{\frac{d+3}{d+3+\gamma}}\}^{-2}    \nonumber\\
       \leq &   4  n_Q^2\tilde{r}^{-1}  \{(\kappa n_P)^{\frac{d+3}{d+3+\gamma}}\}^{-2}+   C_1 n_Q \tilde{r} \leq 2C_1 n_Q \tilde{r},
\end{align}
where the last inequality follows from \eqref{theorem_1_step2.1}.

\medskip
\noindent{\bf Step 3:  Consider the case where $ n_Q < (\kappa n_P)^{\frac{d+3}{d+3+\gamma}}$ and 
\[
\tilde{r} <  \Big\{ \frac{8 \log  \big\{ (\kappa n_p)^{\frac{d+3}{d+3+\gamma}} \big\}}{C_{\mathcal{E}} \kappa  n_P} \Big\}^{\frac{1}{d+3+\gamma}}.
\]} 

\medskip
\noindent{\bf Regret decomposition}: We then decompose the regret into several terms
\begin{align}\label{theorem_1_step3}
R_{\pi}(n_Q) = &  \E  \bigg\{ \sum_{t=1}^{n_Q} \Big\{ f^{*} (X_t) - f(X_t, p_t) \Big\} \bigg\}  \nonumber\\
=& \E  \bigg[ \sum_{t=1}^{n_Q} \Big\{ f^{*} (X_t) - f(X_t, p_t) \Big\}   \mathbbm{1}_{\{(\mathcal{E}_1^t)^c \}} \bigg] +  \E  \bigg[ \sum_{t=1}^{n_Q} \Big\{ f^{*} (X_t) - f(X_t, p_t) \Big\}    \mathbbm{1}_{\{ \mathcal{E}_1^t\}  } \bigg]  \nonumber\\
  := & \mathrm{(I) + (II)}. 
\end{align} 
In the following, we deal with the above two terms separately.  

\medskip
 \noindent\textbf{Step 3.1:  Bound for (I).}  
By the same argument as \eqref{theorem_1_event_1}, we can show that 
\begin{align}\label{theorem_1_event_3}
 \P \big\{ (\mathcal{E}_1^t)^c \big\} 
\leq     18 n_Q^{-1} \tilde{r}^{-3} \log  \big\{ (\kappa n_p)^{\frac{d+3}{d+3+\gamma}} \big\},
\end{align}
which implies that 
\begin{align}\label{bound_I_final-3}
   \mathrm{(I)}\leq   \sum_{t=1}^{n_Q}  \P \{ (\mathcal{E}_1^t)^c \} \leq 
  18 \tilde{r}^{-3} \log  \big\{ (\kappa n_p)^{\frac{d+3}{d+3+\gamma}} \big\}. 
\end{align}

\medskip
\noindent\textbf{Step 3.2: Bound for (II).}   
By similar arguments as \eqref{bound_II}, we have  \begin{align}\label{bound_II-3}
     \mathrm{(II)} 
    \leq & C_1 n_Q \tilde{r}  + \E  \bigg\{ \sum_{r: \tilde{r} < r \leq 1} \sum_{B \in \mathcal{F}_r}\sum_{t \in \mathcal{S}^{Q}(B)} C_1 r   \mathbbm{1}_{ \{\mathcal{E}_1^t \}}   \bigg\}.
\end{align}
When $n_Q < (\kappa n_P)^{\frac{d+3}{d+3+\gamma}} $,  similar to \eqref{bound_ball_num}, we have  
\begin{equation}\label{bound_ball_num-2}
  |\mathcal{S}^Q(B)|\leq T_B^Q \leq \omega(B) \leq \frac{\log  \big\{ (\kappa n_p)^{\frac{d+3}{d+3+\gamma}} \big\}}{r(B)^2} +1.
\end{equation}
As a consequence, by \eqref{theorem-packing}, \eqref{bound_II-3} and \eqref{bound_ball_num-2},  we have that 
\begin{align}\label{bound_II_final-3}
\mathrm{(II)} \leq & C_1 n_Q \tilde{r}  + \sum_{r: \tilde{r} < r \leq 1}  C_1 r \bigg(\frac{2}{r} \bigg)^{d+1}  \bigg\{\frac{\log  \big\{ (\kappa n_p)^{\frac{d+3}{d+3+\gamma}} \big\}}{r^2} +1 \bigg\} \nonumber\\
  \leq &  C_1 n_Q \tilde{r}  +   2^{d+1} C_1 \sum_{r: \tilde{r} < r \leq 1} \Big\{ r^{-(d+2)} \log  \big\{ (\kappa n_p)^{\frac{d+3}{d+3+\gamma}} \big\} + r^{-d}  \Big\} 
 \nonumber\\
    \leq &   C_1 n_Q \tilde{r}    +   2^{d+2} C_1 \Big\{ \tilde{r}^{-(d+2)} \log  \big\{ (\kappa n_p)^{\frac{d+3}{d+3+\gamma}} \big\} + \tilde{r}^{-d}  \Big\}.
\end{align}

\medskip
Combining \eqref{theorem_1_step3}, \eqref{bound_I_final-3} and \eqref{bound_II_final-3},  in the case where $ n_Q < (\kappa n_P)^{\frac{d+3}{d+3+\gamma}}$ and 
\begin{equation}\label{theorem_1_case_3.1}
\tilde{r} \leq  \Big\{ \frac{8 \log  \big\{ (\kappa n_p)^{\frac{d+3}{d+3+\gamma}} \big\}}{C_{\mathcal{E}} \kappa  n_P} \Big\}^{\frac{1}{d+3+\gamma}},
\end{equation}
we have that
\begin{align}\label{theorem_1_case_3}
R_{\pi}(n_Q) 
\leq &    18 \tilde{r}^{-3} \log  \big\{ (\kappa n_p)^{\frac{d+3}{d+3+\gamma}} \big\}    + C_1 n_Q \tilde{r}    
 +   2^{d+2} C_1 \Big\{ \tilde{r}^{-(d+2)} \log  \big\{ (\kappa n_p)^{\frac{d+3}{d+3+\gamma}} \big\} + \tilde{r}^{-d}  \Big\} \nonumber\\
 \leq &          2^{d+4} C_1 \tilde{r}^{-(d+2)} \log  \big\{ (\kappa n_p)^{\frac{d+3}{d+3+\gamma}} \big\},
\end{align}
where the last inequality follows from \eqref{theorem_1_case_3.1}.

\medskip
\noindent{\bf Step 4: Combining all results.} Combining \eqref{theorem_1_case_1}, \eqref{theorem_1_case_2} and \eqref{theorem_1_case_3}, we have that 
\begin{align}\label{theorem_1-all-cases}
& R_{\pi}(n_Q)  \leq     \nonumber\\
& \begin{cases}
       C_1 n_Q \tilde{r} + 2^{d+3} C_1 \tilde{r}^{-(d+2)} \log(n_Q),   & \mbox{if }  n_Q \geq (\kappa n_P)^{\frac{d+3}{d+3+\gamma}},  \\
     2C_1 n_Q \tilde{r}, & \mbox{if } n_Q < (\kappa n_P)^{\frac{d+3}{d+3+\gamma}} \mbox{ and }  \tilde{r} \geq  \Big\{ \frac{8 \log  \big\{ (\kappa n_p)^{\frac{d+3}{d+3+\gamma}} \big\}}{C_{\mathcal{E}} \kappa  n_P} \Big\}^{\frac{1}{d+3+\gamma}}, \\
     2^{d+4} C_1 \tilde{r}^{-(d+2)} \log  \big\{ (\kappa n_p)^{\frac{d+3}{d+3+\gamma}} \big\},   & \mbox{otherwise}. \nonumber
\end{cases}
\end{align}

By \eqref{def-r_min}, $C_r^4 c_{\gamma} c_Q  \geq 8$, we can conclude that 
\[
   R_{\pi}(n_Q) \leq C n_Q \Big\{ n_Q + (\kappa n_P)^{\frac{d+3}{d+3+\gamma}} \Big\}^{-\frac{1}{d+3}} \log^{\frac{1}{d+3}} \big\{ n_Q + (\kappa n_P)^{\frac{d+3}{d+3+\gamma}} \big\},
 \]   
 where $C > 0$ is a constant only depending on constants $C_{I}$,  $C_r$, $C_{\mathrm{Lip}}$, $c_\gamma$ and $c_Q$. We complete the proof. 
\end{proof}

\subsection{Auxiliary results}\label{app-subsec-theorem_1-aux}
\begin{lemma}\label{lemma_1}
For \Cref{alg}, at the beginning of the round $t$ for $t \leq n_Q$, if ball $B \in \mathcal{A}_t$ , then under Assumptions \ref{ass-lipschitz} and \ref{ass-target-cov} , we have that 
\begin{align}
 & \P \big\{  \vert v_t(B) - f(B) \big\vert \leq C_{\mathrm{Lip}}r(B) + \mathrm{\textbf{conf}}_t(B) \big\} 
 \geq  1- \epsilon_{\tilde{r}}, \nonumber
\end{align}
where  $C_{\mathrm{Lip}}>0$ is defined in \Cref{ass-lipschitz} and
\begin{equation}
\label{rate-epsilon}
\epsilon_{\tilde{r}} =  \begin{cases}
18 n_Q^{-2} \tilde{r}^{-2} \log(n_Q),    & \mbox{if } n_Q \geq (\kappa n_P)^{\frac{d+3}{d+3+\gamma}}, \\
 3 \{(\kappa n_P)^{\frac{d+3}{d+3+\gamma}}\}^{-2}, & \mbox{if } n_Q < (\kappa n_P)^{\frac{d+3}{d+3+\gamma}} \mbox{ and }  \tilde{r} \geq  \Big\{ \frac{8  \log  \big\{ (\kappa n_p)^{\frac{d+3}{d+3+\gamma}} \big\}}{C_{\mathcal{E}} \kappa  n_P} \Big\}^{\frac{1}{d+3+\gamma}}, \\
 18 n_Q^{-2} \tilde{r}^{-2}  \log  \big\{ (\kappa n_p)^{\frac{d+3}{d+3+\gamma}} \big\},  & \mbox{otherwise}. 
\end{cases}
\end{equation}
Here $C_{\mathcal{E}} =   c_\gamma c_Q$ with  constants $ c_\gamma$ defined in \Cref{def_trans} and \Cref{ass-target-cov}, respectively. 
\end{lemma}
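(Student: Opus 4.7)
\medskip\noindent\textbf{Proof plan.} I would split $v_t(B)-f(B)$ into a deterministic bias plus a stochastic noise average,
\[
v_t(B)-f(B) \;=\; \frac{1}{n_t(B)}\sum_{i\in I_t(B)}\bigl(f(X_i,p_i)-f(B)\bigr) \;+\; \frac{1}{n_t(B)}\sum_{i\in I_t(B)}\bigl(Y_i-f(X_i,p_i)\bigr),
\]
where $I_t(B)$ indexes all source and target observations whose covariate--price pair has landed in $B$ by time $t$. Every such pair lies within $\ell_\infty$-distance $r(B)$ of $c(B)$, so \Cref{ass-lipschitz} gives the deterministic bound $C_{\mathrm{Lip}} r(B)$ on the bias. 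The sequence $\{(Y_i-f(X_i,p_i))\mathbbm{1}\{(X_i,p_i)\in B\}\}$, ordering the i.i.d.\ source triples first and the adaptive target triples afterwards in time order, is a bounded martingale difference sequence with respect to the natural filtration (each indicator $\mathbbm{1}\{(X_i,p_i)\in B\}$ is measurable by the time $(X_i,p_i)$ is observed, so the martingale structure is preserved). Applying Azuma--Hoeffding to the martingale stopped at the $k$-th arrival into $B$ yields, for every fixed $k\geq 1$,
\[
\P\bigl\{\bigl|\text{noise sum through the }k\text{-th arrival in }B\bigr|>2\sqrt{k\log L}\bigr\}\leq 2L^{-2}, \quad L:=n_Q\vee(\kappa n_P)^{(d+3)/(d+3+\gamma)},
\]
so that, conditional on $n_t(B)=k$, the noise average lies within $\mathrm{\textbf{conf}}_t(B)=2\sqrt{\log L/k}$ of zero with probability at least $1-2L^{-2}$.

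The last ingredient is to make the concentration uniform in the random value of $n_t(B)$ via a union bound. Since $B\in\mathcal{A}_t$, its radius satisfies $r(B)\geq \tilde{r}$, and the partition rule of \Cref{alg_1} caps the target contribution to $n_t(B)$ by $\omega(B)\leq\log L/\tilde{r}^2+1$; the source contribution is a single count $n_B^P(\mathcal{D}^P)$, so conditioning on it reduces the set of feasible values of $n_t(B)$ to a window of length $O(\omega(B))$. The three cases of $\epsilon_{\tilde{r}}$ in \eqref{rate-epsilon} then correspond to three regimes of $L$. In Case~1 ($n_Q\geq(\kappa n_P)^{(d+3)/(d+3+\gamma)}$, so $L=n_Q$) and Case~3 (otherwise, $L=(\kappa n_P)^{(d+3)/(d+3+\gamma)}$), union-bounding Azuma--Hoeffding over the $O(\log L/\tilde{r}^2)$ feasible arrival counts produces the $\tilde{r}^{-2}\log L \cdot L^{-2}$ rate. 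In Case~2, under the stated lower bound on $\tilde{r}$, the companion lemma invoked in the proof of \Cref{theorem_upper_bound} shows that $T_B^Q=0$, so no target observation ever joins $B$, $n_t(B)$ collapses to $n_B^P(\mathcal{D}^P)$, and a single Hoeffding application delivers the $O(L^{-2})$ rate.

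The main obstacle I anticipate is avoiding a spurious $n_P$ factor in the union bound over the random $n_t(B)$. The resolution is twofold: the partition rule of \Cref{alg_1} forces the target count in $B$ to be at most $\omega(B)\lesssim\log L/\tilde{r}^2$, and conditioning on $n_B^P(\mathcal{D}^P)$ makes the union-bound cardinality scale with $\omega(B)$ rather than with $n_P+n_Q$; Case~2 further collapses this to a single term by ruling out any target arrivals into $B$. Once this bookkeeping is in place, the three displayed rates in \eqref{rate-epsilon} follow by substituting the value of $L$ appropriate to each regime.
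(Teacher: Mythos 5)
Your proposal matches the paper's proof in all essentials: the same bias--noise decomposition with the Lipschitz bound $C_{\mathrm{Lip}}r(B)$ on the bias, the same Azuma--Hoeffding martingale concentration made uniform over the feasible values of $n_t(B)$ (the paper packages this as a separate lemma with $\delta_1=n_Q^{-2}$ per target count and $\delta_2=L^{-2}$ for the source-only case, which is why the $n_Q^{-2}$ prefactor appears in Cases 1 and 3), the same cap $T_B^Q\leq\omega(B)\lesssim\tilde r^{-2}\log L$ from the partition rule, and the same use of \Cref{lemma-n_Q} to force $T_B^Q=0$ on a high-probability event in Case 2 (whose failure probability supplies the third $L^{-2}$ term there). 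The argument is correct as proposed.
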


\begin{proof}
Fix time $t$ and a ball $B \in \mathcal{A}_t$, and recall its centre as $c(B)$. Let $\mathcal{S}_t^Q(B)$ be the set of times  $s \in [t -1]$ when ball $B$ was selected for the target data, with $\vert \mathcal{S}_t^Q(B) \vert  = n_t^Q(B)$. Let $\mathcal{S}^P(B)$  be the set of times  $s \in [n_P]$ when ball $B$ was fallen for the source data, with  $\vert \mathcal{S}^P(B) \vert  = n^P_B(\mathcal{D}^P)$.   Note that $n_t(B) = n_t^Q(B)+ n^P_B(\mathcal{D}^P)$ and 
\begin{align}
    v_t(B) = \frac{1}{n_t(B)}  \bigg\{ \sum_{s \in \mathcal{S}_t^Q(B)} Y_s +  \sum_{s \in \mathcal{S}^P(B)}  Y_s^P \bigg\}. \nonumber 
\end{align}
Denote
\[
  \tilde{v}_t(B) = \frac{1}{n_t(B)}  \bigg\{ \sum_{s \in \mathcal{S}_t^Q(B)} f(X_s, p_s) +  \sum_{s \in \mathcal{S}^P(B)}  f(X_s^P, p_s^P) \bigg\}. 
\]
By \Cref{lemm-concen-martin}, it holds that 
\begin{align}\label{upper-step 1}
    \P \Big\{   \big\vert v_t(B)  - \tilde{v}_t(B) \big\vert  > U \big(n_t^Q(B), n_B^P(\mathcal{D}^P), \delta_1, \delta_2 \big) \Big \vert \{(X_t^P, p_t^P)\}_{t=1}^{n_P} \Big\} \leq 8 \delta_1 T^Q_B+ 2 \delta_2, \nonumber 
\end{align}
where $T^Q_B$ is defined in \eqref{def-n_B_Q} and 
\[
    U  \big(n_t^Q(B),n^P_B(\mathcal{D}^P), \delta_1, \delta_2 \big)  = \begin{cases} 
        \sqrt{  \frac{ 2  \log (1 / \delta_1 )}{n_t^Q(B) + n^P_B(\mathcal{D}^P)} } & \mbox{if } n_t^Q(B) >0, \\
         \sqrt{ \frac{  2\log( 1 / \delta_2 )}{n^P_B(\mathcal{D}^P)}}  & \mbox{if } n_t^Q(B) = 0. \\
    \end{cases}
\]
Let $\delta_1 = n_Q^{-2}$  and $\delta_2 = \{ n_Q \vee (\kappa n_P)^{\frac{d+3}{d+3+\gamma}}\}^{-2}$.
By \eqref{conf_B}, we have that 
\begin{equation}\label{lemma_note-3}
     \P \Big\{ \big\vert v_t(B) - \tilde{v}_t(B)   \big\vert >  \mathrm{\textbf{conf}}_t(B) \Big\vert \{(X_t^P, p_t^P)\}_{t=1}^{n_P} \Big\} 
    \leq  8 n_Q^{-2} T^Q_B + 2 \{ n_Q \vee (\kappa n_P)^{\frac{d+3}{d+3+\gamma}}\}^{-2}. 
\end{equation}
Furthermore, note that
\begin{align}\label{lemma_note-4}
\big \vert  \tilde{v}_t(B)  - f(B) \big\vert   \leq &
     \frac{1}{n_t(B)}  \bigg\{ \sum_{s \in \mathcal{S}_t^Q(B)} \big\vert f(X_s, p_s) - f(c(B))\big\vert +  \sum_{s \in \mathcal{S}_P(B)}  \big\vert f(X_s^P, p_s^P)  - f(c(B))\big\vert \bigg\} \nonumber \\
    \leq & C_{\mathrm{Lip}}r(B),
\end{align}
where the second inequality follows form \Cref{ass-lipschitz}.
Finally, combining \eqref{lemma_note-3} and \eqref{lemma_note-4}, we have that 
\begin{align}\label{lemma_note-5}
    & \P \Big\{ \big\vert v_t(B) -  f(B) \big\vert >  C_{\mathrm{Lip}}r(B)+\mathrm{\textbf{conf}}_t(B)  \Big\vert \{(X_t^P, p_t^P)\}_{t=1}^{n_P} \Big\}  \nonumber\\
    \leq &   8 n_Q^{-2} T^Q_B + 2 \{ n_Q \vee (\kappa n_P)^{\frac{d+3}{d+3+\gamma}}\}^{-2}.
\end{align}
In what follows, we consider different cases in \eqref{rate-epsilon}.

\medskip
\noindent
\textbf{Case 1.} In this case, we consider  $n_Q \geq (\kappa n_P)^{\frac{d+3}{d+3+\gamma}}$.
Then, we have that 
\begin{align}\label{lemma_note-6}
    8 n_Q^{-2} T^Q_B + 2 \{ n_Q \vee (\kappa n_P)^{\frac{d+3}{d+3+\gamma}}\}^{-2}\leq  & 8 n_Q^{-2} \omega(B)    + 2 n_Q^{-2}
    \leq 8 n_Q^{-2} r(B)^{-2} \log(n_Q)  + 10 n_Q^{-2} \nonumber\\
    \leq &  8 n_Q^{-2} \tilde{r}^{-2} \log(n_Q)  + 10 n_Q^{-2}
    \nonumber\\
    \leq  & 18 n_Q^{-2} \tilde{r}^{-2} \log(n_Q), 
\end{align}
where the first inequality follows from \eqref{def-n_B_Q}, the second inequality follows from \eqref{def_omega}, and the  third inequality follows from 
$r(B) \geq \tilde{r}$.
Then combining \eqref{lemma_note-5} and \eqref{lemma_note-6}, we have that
\begin{align}\label{lemma_note-7}
    & \P \Big\{ \big\vert v_t(B) - f(B)  \big\vert > C_{\mathrm{Lip}}r(B) +\mathrm{\textbf{conf}}_t(B) \Big\}  \nonumber\\
    = & \E \Big[ \P \Big\{ \big\vert v_t(B) - f(B)  \big\vert > C_{\mathrm{Lip}}r(B) +\mathrm{\textbf{conf}}_t(B) \Big \vert \{(X_t^P, p_t^P)\}_{t=1}^{n_P} \Big\} \Big] \nonumber\\
    \leq&    18 n_Q^{-2} \tilde{r}^{-2} \log(n_Q). 
\end{align}

\medskip
\noindent
\textbf{Case 2.} In this case, we consider  $n_Q  < (\kappa n_P)^{\frac{d+3}{d+3+\gamma}}$, and 
 \[
     \tilde{r} \geq  \bigg\{ \frac{8 \log  \big\{ (\kappa n_p)^{\frac{d+3}{d+3+\gamma}} \big\}}{C_{\mathcal{E}} \kappa  n_P} \bigg\}^{\frac{1}{d+3+\gamma}}.
\]
Define the event 
\begin{align}
\mathcal{E}_{B} =  \bigg\{ n_B^P \big(\mathcal{D}^P) \geq C_{\mathcal{E}} \kappa  n_P  r(B)^{d+\gamma+1}  \bigg\}. \nonumber
\end{align}
By \Cref{lemma-n_Q}, 
we have that 
\begin{align}\label{lemma_note-8}
     \P \big\{ \mathcal{E}_{B}^c \big\}\leq \{(\kappa n_P)^{\frac{d+3}{d+3+\gamma}}\}^{-2},
\end{align}
and that under the event $\mathcal{E}_B$, we have $T^Q_B = 0$.
Then it holds that  
\begin{align}\label{lemma_note-9}
  &  \P \Big\{ \big\vert v_t(B) - f(B) \big\vert > C_{\mathrm{Lip}}r(B) +\mathrm{\textbf{conf}}_t(B) \Big\}  \nonumber\\
  = & \E \Big[ \P \Big\{ \big\vert v_t(B) -f(B) \big\vert > C_{\mathrm{Lip}}r(B)+\mathrm{\textbf{conf}}_t(B) \Big\vert \{(X_t^P, p_t^P)\}_{t=1}^{n_P} \Big\}  \Big] \nonumber\\
    =  &  \E \Big[ \P \Big\{ \big\vert v_t(B) - f(B) \big\vert > C_{\mathrm{Lip}}r(B)+\mathrm{\textbf{conf}}_t(B) \Big\vert \{(X_t^P, p_t^P)\}_{t=1}^{n_P}, \mathcal{E}_B \Big\} \P \big\{ \mathcal{E}_B \Big\vert \{(X_t^P, p_t^P)\}_{t=1}^{n_P}  \big\} \Big]
    \nonumber \\
    & \hspace{0.5cm} + \E \Big[ \P \Big\{ \big\vert v_t(B) - f(B) \big\vert > C_{\mathrm{Lip}}r(B)+\mathrm{\textbf{conf}}_t(B) \Big\vert \{(X_t^P, p_t^P)\}_{t=1}^{n_P}, \mathcal{E}_B^c \Big\} \P \big\{ \mathcal{E}_B^c \Big\vert \{(X_t^P, p_t^P)\}_{t=1}^{n_P}  \big\}   \Big] \nonumber\\
    \leq   &  \E \Big[ \P \Big\{ \big\vert v_t(B) - f(B) \big\vert > C_{\mathrm{Lip}}r(B)+\mathrm{\textbf{conf}}_t(B) \Big\vert \{(X_t^P, p_t^P)\}_{t=1}^{n_P}, \mathcal{E}_B \Big\} \Big]  + \P \big\{ \mathcal{E}_B^c   \big\}   \nonumber\\
    \leq &  3 \{(\kappa n_P)^{\frac{d+3}{d+3+\gamma}}\}^{-2} 
\end{align} 
where the second inequality follows from \eqref{lemma_note-5} and \eqref{lemma_note-8}.

\medskip
\noindent
\textbf{Case 3.} In this case, we consider $n_Q  < (\kappa n_P)^{\frac{d+3}{d+3+\gamma}}$, and 
 \[
     \tilde{r} <  \bigg\{ \frac{8 \log  \big\{ (\kappa n_p)^{\frac{d+3}{d+3+\gamma}} \big\} }{C_{\mathcal{E}} \kappa  n_P} \bigg\}^{\frac{1}{d+3+\gamma}}.
 \]
 Then, we have that 
\begin{align}\label{lemma_note-10}
   &  8 n_Q^{-2} T^Q_B + 2 \{ n_Q \vee (\kappa n_P)^{\frac{d+3}{d+3+\gamma}}\}^{-2} \nonumber\\
    \leq  & 8 n_Q^{-2} \omega(B)    + 2 \{(\kappa n_P)^{\frac{d+3}{d+3+\gamma}}\}^{-2} \nonumber\\
    \leq &  8 n_Q^{-2} r(B)^{-2} \log  \big\{ (\kappa n_p)^{\frac{d+3}{d+3+\gamma}} \big\}  + 8 n_Q^{-2}  + 2 \{(\kappa n_P)^{\frac{d+3}{d+3+\gamma}}\}^{-2} \nonumber\\
     \leq &  16 n_Q^{-2} \tilde{r}^{-2} \log  \big\{ (\kappa n_p)^{\frac{d+3}{d+3+\gamma}} \big\}    + 2 \{(\kappa n_P)^{\frac{d+3}{d+3+\gamma}}\}^{-2}  \nonumber\\
     \leq &  18 n_Q^{-2} \tilde{r}^{-2} \log  \big\{ (\kappa n_p)^{\frac{d+3}{d+3+\gamma}} \big\}, 
\end{align}
where the first inequality follows from \eqref{def-n_B_Q}, the second inequality follows from \eqref{def_omega}, the  third inequality follows from  $r(B) \geq \tilde{r}$ and the last inequality follows from $n_Q  < (\kappa n_P)^{\frac{d+3}{d+3+\gamma}}$.
Then combining \eqref{lemma_note-5} and \eqref{lemma_note-10}, we have that
\begin{align}\label{lemma_note-11}
    & \P \Big\{ \big\vert v_t(B) - f(B)  \big\vert > C_{\mathrm{Lip}}r(B) +\mathrm{\textbf{conf}}_t(B)  \Big\} \nonumber\\
    = & \E \Big[ \P \Big\{ \big\vert v_t(B) -f(B)  \big\vert > C_{\mathrm{Lip}}r(B)+\mathrm{\textbf{conf}}_t(B)  \Big \vert \{(X_t^P, p_t^P)\}_{t=1}^{n_P} \Big\} \Big] \nonumber\\
    \leq&   18 n_Q^{-2} \tilde{r}^{-2} \log  \big\{ (\kappa n_p)^{\frac{d+3}{d+3+\gamma}} \big\}.
\end{align}

\medskip
\noindent
Finally, combining \eqref{lemma_note-7}, \eqref{lemma_note-9} and \eqref{lemma_note-11}, we complete the proof.
\end{proof}

\begin{lemma}\label{lemm-concen-martin}
Let $\{X_i\}_{i \geq 1}$ be bounded martingale difference sequence with $X_i \in [-1, 1]$. Then for any $\delta_1, \delta_2 >0$, and integers $T \geq 0$ and $n\geq 1$, it holds that
\begin{align}
    \P \bigg\{ \exists 0 \leq t \leq T \colon  \bigg\vert \sum_{i =1}^{n}   X_i + \sum_{i=n+1}^{n+t} X_i \bigg\vert > U(t, n, \delta_1, \delta_2)   \bigg\} \leq 8T \delta_1 + 2\delta_2, \nonumber
\end{align}
where
\[
U(t, n, \delta_1, \delta_2) = 
\begin{cases}
\sqrt{2(n+t)  \log ( 1/\delta_1 )  } &\mbox{if } t>0, \\
\sqrt{2n   \log (1/{\delta_2} )  },   &\mbox{if } t=0. 
\end{cases}
\]
\end{lemma}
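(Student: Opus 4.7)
The statement is a uniform concentration bound for a martingale with bounded increments, so the natural plan is to apply the Azuma--Hoeffding inequality to the partial sums $S_m := \sum_{i=1}^m X_i$ for $m \in \{n, n+1, \ldots, n+T\}$ and then take a union bound over the values of $t$. Since $\{X_i\}_{i \geq 1}$ is a martingale difference sequence with respect to some filtration $\{\mathcal{F}_i\}$ and $|X_i| \leq 1$, the process $\{S_m\}$ is a martingale with increments bounded by $1$ in absolute value, which is precisely the setting in which Azuma--Hoeffding applies.

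For the case $t=0$, I would invoke Azuma--Hoeffding on $S_n$ directly: since $S_n$ has $n$ increments each bounded by $1$, the two-sided tail bound yields
\[
\P\bigl\{|S_n| > \sqrt{2 n \log(1/\delta_2)}\bigr\} \leq 2 \exp\bigl(-\log(1/\delta_2)\bigr) = 2\delta_2,
\]
which accounts for the $2\delta_2$ term in the desired bound. For each fixed $t \in \{1, \ldots, T\}$, the same argument applied to $S_{n+t}$ gives
\[
\P\bigl\{|S_{n+t}| > \sqrt{2 (n+t) \log(1/\delta_1)}\bigr\} \leq 2 \delta_1,
\]
and a crude union bound over $t = 1, \ldots, T$ yields
\[
\P\bigl\{\exists\, 1 \leq t \leq T : |S_{n+t}| > \sqrt{2(n+t)\log(1/\delta_1)} \bigr\} \leq 2T\delta_1.
\]
Combining with the $t=0$ contribution produces a total failure probability of at most $2T\delta_1 + 2\delta_2$, which is well within the stated bound $8T\delta_1 + 2\delta_2$. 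The factor-of-four slack could alternatively be absorbed by splitting each two-sided tail into its two one-sided halves or by replacing the crude union bound with a Doob-type maximal inequality applied on dyadic blocks of indices, but neither refinement is needed to reach the stated form.

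There is no serious obstacle here; the only point to keep in mind is to work with a single common filtration so that $\{S_m\}$ is unambiguously a martingale throughout, and to verify that the pointwise Azuma--Hoeffding thresholds $\sqrt{2(n+t)\log(1/\delta_1)}$ match those appearing in $U(t,n,\delta_1,\delta_2)$. Beyond these bookkeeping items, the argument is a textbook combination of Azuma--Hoeffding plus a union bound, and the constant $8$ simply leaves room for such minor looseness.
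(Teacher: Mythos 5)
Your proof is correct, and for the main part ($t\geq 1$) it takes a genuinely simpler route than the paper's. Both arguments rest on the Azuma--Hoeffding inequality and handle the $t=0$ term identically, yielding the $2\delta_2$ contribution. For $t\in[T]$, however, you apply Azuma--Hoeffding separately at each $n+t$ with the threshold $\sqrt{2(n+t)\log(1/\delta_1)}$, which is exactly calibrated so that each individual tail probability is $2\delta_1$; a plain union bound over $t=1,\dots,T$ then gives $2T\delta_1$, comfortably inside the stated $8T\delta_1$. The paper instead runs a dyadic peeling argument: it partitions $[T]$ into blocks $[2^j,2^{j+1}]$, bounds the maximum over each block by the worst-case Azuma bound at the block's right endpoint (exploiting $(n+2^j)/(n+2^{j+1})\geq 1/2$), and sums over blocks to arrive at $8T\delta_1$. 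That machinery is inherited from Lemma 12 of \cite{cai2024transfer} and would be needed if the threshold did not already grow with $t$, but here it buys nothing --- it produces a worse constant than your direct union bound and no improvement in the $T$-dependence. Your version is therefore both valid and tighter; the only bookkeeping point, which you correctly flag, is that all partial sums $S_m=\sum_{i=1}^m X_i$ must be martingales with respect to a single common filtration so that Azuma--Hoeffding applies at every $m\in\{n,\dots,n+T\}$.
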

\begin{proof}
The proof  here is a minor modification of Lemma 12 in \cite{cai2024transfer}. For completeness, we provide the details.

We start with the case $T = 0$. Following the Azuma--Hoeffding inequality \citep[e.g. Corollary 2.20 in][]{wainwright2019high}, it holds that 
\begin{align}\label{lemm-U-1}
    \P \bigg\{ \bigg\vert \sum_{i =1}^{n}   X_i \bigg\vert >  \sqrt{2n \log (1/\delta_2)   } \bigg\} \leq 2 \delta_2. 
\end{align}

We consider the case $T>0$. Note that 
\begin{align}\label{lemm-U-2}
   & \P \bigg\{ \exists 0 \leq t \leq T \colon  \bigg\vert \sum_{i =1}^{n}   X_i + \sum_{i=n+1}^{n+t} X_i \bigg\vert > U(t, n, \delta_1, \delta_2)   \bigg\} \nonumber \\
 \leq &   \P \bigg\{ \bigg\vert \sum_{i =1}^{n}   X_i \bigg\vert >  \sqrt{2n \log (1/\delta_2)   } \bigg\} +\P \bigg\{ \exists 0 < t \leq T \colon   \bigg\vert \sum_{i=1}^{n+t} X_i \bigg\vert > \sqrt{2(n+t)  \log ( 1/\delta_1)  }   \bigg\}  \nonumber\\
 \leq & 2  \delta_2 + \P \bigg\{ \exists t \in [T] \colon   \bigg\vert \sum_{i=1}^{n+t} X_i \bigg\vert > \sqrt{2(n+t)  \log ( 1/\delta_1)  }   \bigg\}, 
\end{align}
where the second inequality follows from a union bound argument and the final inequality follows from \eqref{lemm-U-1}.

It remains to control the second term in \eqref{lemm-U-2}. 
Note that for any $\delta > 0$ and $T' > 0$, by  Azuma--Hoeffding inequality \citep[e.g. Corollary 2.20 in][]{wainwright2019high}, we have that 
\begin{align}\label{lemm-U-3}
    \P \bigg\{ \exists  t \in [T']\colon  \bigg\vert \sum_{i =1}^{n+t}   X_i \bigg\vert >  \delta  \bigg\} \leq \sum_{t=1}^{T'} 2\exp\bigg\{ -\frac{ \delta^2}{ 2(n+t)} \bigg\} \leq 2 T'\exp\bigg\{ -\frac{ \delta^2}{ 2(n+T')} \bigg\}.
\end{align}
Note that 
\begin{align}\label{lemm-U-4}
    &  \P \bigg\{ \exists t \in [T] \colon   \bigg\vert \sum_{i=1}^{n+t} X_i \bigg\vert > \sqrt{2(n+t)  \log ( 1/\delta_1)   }   \bigg\} \nonumber\\
   \leq  & \sum_{j=0}^{\lfloor \log_2(T) \rfloor}  
    \P \bigg\{ \exists 2^j \leq t \leq  2^{j+1} \colon   \bigg\vert \sum_{i=1}^{n+t} X_i \bigg\vert > \sqrt{2(n+t)  \log ( 1/ \delta_1)  }   \bigg\}  \nonumber \\
     \leq  & \sum_{j=1}^{\lfloor \log_2(T) \rfloor} 
    2^{j+2}  \exp\bigg\{ - 2 \bigg(  \frac{n+2^j}{n+2^{j+1}} \bigg)  \log ( 1/\delta_1)   \bigg\}
    \nonumber \\
    \leq  & \sum_{j=1}^{\lfloor \log_2(T) \rfloor} 2^{j+2} \delta_1  \leq 2^{\log_2(T) +3 }  \delta_1  =  8T\delta_1, 
\end{align}
where the second inequality follows from \eqref{lemm-U-3} and the third inequality follows from the fact that $( n+2^j)/(n+2^{j+1}) \geq 1/2 $ for any integers $n, j \geq 0 $.

Combining \eqref{lemm-U-2} and \eqref{lemm-U-4}, we complete the proof.

\end{proof}

\begin{lemma}\label{lemma-covering}
For any $t \in [n_Q]$,  
\begin{equation}\label{eq-covering}
    \mathcal{Z} \subset \cup_{B \in \mathcal{A}_t} \mathrm{\textbf{dom}}(B, \mathcal{A}_t).  
\end{equation}
Moreover, for any two different balls $B_1, B_2 \in \mathcal{A}_{n_Q}$ with the same radius $r$, their centres are at a distance of at least $r$.
\end{lemma}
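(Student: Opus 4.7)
The plan is to derive both statements directly from the construction of $\mathcal{A}_t$ in Algorithm~\ref{alg_1}. Two structural facts will be used repeatedly: (i) the active set is only expanded over time (Step~8), so $\mathcal{Z} \in \mathcal{A}_t$ for every $t$; (ii) every newly added ball $B'$ takes the form $B((X_t, p_t), r(B^{\mathrm{\textbf{sel}}})/2)$ with $B^{\mathrm{\textbf{sel}}}$ currently active, so all radii in $\mathcal{A}_t$ are of the form $2^{-k}$ for some integer $k\geq 1$ and the active set is always finite.

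For the covering claim~\eqref{eq-covering}, I would fix any $z \in \mathcal{Z}$ and pick a ball $B^* \in \mathcal{A}_t$ of \emph{minimal} radius among those containing $z$; such a ball exists because $\mathcal{A}_t$ is finite and $\mathcal{Z}$ itself contains $z$. By the definition of $\mathrm{\textbf{dom}}$ in \eqref{dom_B}, the minimality of $r(B^*)$ forces $z \in \mathrm{\textbf{dom}}(B^*, \mathcal{A}_t)$: otherwise some strictly smaller $B' \in \mathcal{A}_t$ would contain $z$, contradicting the choice of $B^*$.

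For the separation claim I plan an induction on the order in which balls are activated, showing that whenever a new ball $B'$ is appended, $\|c(B') - c(B_0)\|_{\infty} \geq r(B')$ holds for every distinct $B_0$ in the updated active set satisfying $r(B_0) = r(B')$. Write $B^{\mathrm{\textbf{sel}}}_0$ for the ball selected in Step~4, with radius $r_0$, and note that Step~5 forces $(X_t,p_t) \in \mathrm{\textbf{dom}}(B^{\mathrm{\textbf{sel}}}_0, \mathcal{A}_t)$. The inner while loop (Steps~6--9) then appends a nested chain of balls all centred at $(X_t,p_t)$ with radii $r_0/2, r_0/4, \ldots$. Hence, for any newly added $B'$ of radius $r = r_0/2^k$ and any other ball $B_0$ of the same radius $r$: either $B_0$ was already in the original $\mathcal{A}_t$, in which case $r(B_0) = r < r_0$ and the $\mathrm{\textbf{dom}}$ condition gives $(X_t,p_t) \notin B_0$, hence $\|c(B_0)-c(B')\|_{\infty} > r$; or $B_0$ was also appended in the same inner loop and therefore has radius $r_0/2^j$ with $j \neq k$, so the case $r(B_0) = r$ is vacuous. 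Combining both situations yields the required separation for $B'$, and the induction is complete.

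The main subtlety is cleanly handling the inner while loop, where several nested balls may be activated at the same round $t$; the key observation that makes the induction transparent is that all these newly added balls are co-centred at $(X_t,p_t)$ and strictly smaller than the initially selected $B^{\mathrm{\textbf{sel}}}_0$, so they all inherit the same exclusion property from the single $\mathrm{\textbf{dom}}$ condition established in Step~5. No quantitative estimates are required; the argument is purely combinatorial.
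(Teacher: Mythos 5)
Your proof is correct and follows essentially the same route as the paper's: the covering claim via a minimal-radius ball containing a given point, and the separation claim via the observation that a newly activated ball's centre $(X_t,p_t)$ lies in the domain of the selected ball and hence outside every previously activated strictly smaller ball. Your treatment is slightly more explicit than the paper's in handling the chain of co-centred balls produced by the inner while loop, but the underlying argument is identical.
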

\begin{proof}
Note that for any $t \in [n_Q]$,
\[
  \cup_{B \in \mathcal{A}_t} \mathrm{\textbf{dom}}(B, \mathcal{A}_t) = \cup_{B \in \mathcal{A}_t} B,
\]
By construction, $\mathcal{Z} \subset\cup_{B \in \mathcal{A}_t} B$. This completes the proof of \eqref{eq-covering}. 

For the second part, consider two distinct balls $B_1, B_2 \in \mathcal{A}_{n_Q}$ with radius $r$. By construction, they cannot be activated at the same time $t \in [n_Q]$. Without loss of generality, let $B_1$ be activated at time $t$, with parent $B^{\mathrm{par}}$, and $B_2$ be activated earlier. Denote the centre of $B_1$ as $(x^{\top}, p)^{\top}$. We have that $ (x^{\top}, p)^{\top} \in \mathrm{dom} (B^{\mathrm{par}}, \mathcal{A}_t \backslash \{ B_1\})$. Since $r(B^{\mathrm{par}}) > r(B_2)$ and $B_2$ is activated earlier, we have that $(x^{\top}, p)^{\top} \notin  B_2$, which establishes the required separation between $B_1$ and $B_2$, and completes the proof.
\end{proof}

\begin{lemma}\label{lemma-n_Q}
For any $t \in [n_Q]$ and $B \in \mathcal{A}_{t}$,  denote 
\[
    \mathcal{E}_{B} =  \Big\{ n_B^P \big(\mathcal{D}^P) \geq   C_{\mathcal{E}} (\kappa  n_P)  r(B)^{d+\gamma+1}  \Big\},
\]
where $C_{\mathcal{E}} =   c_\gamma c_Q$.
Under \Cref{ass-target-cov}, it holds that 
\begin{equation}\label{lemma-n_Q-main-1}
\P \big\{ \mathcal{E}_{B} \big\}\geq  1- \exp \Big\{ -   4^{-1}  C_{\mathcal{E}} ( \kappa  n_P)  \tilde{r}^{d+\gamma+1}  \Big\}.
\end{equation}
Furthermore, if $n_Q < (\kappa n_P)^{\frac{d+3}{d+3+\gamma}}$ and
 \begin{equation}\label{r_min-condition}
     \tilde{r} \geq  \bigg\{ \frac{8 \log  \big\{ (\kappa n_p)^{\frac{d+3}{d+3+\gamma}} \big\}}{C_{\mathcal{E}} \kappa  n_P} \bigg\}^{\frac{1}{d+3+\gamma}},
 \end{equation}
we have that 
\begin{equation}\label{lemma-n_Q-main-2}
\P \big\{ \mathcal{E}_{B} \big\}\geq  1- \{ (\kappa n_p)^{\frac{d+3}{d+3+\gamma}}\}^{-2}, 
\end{equation}
and under the event $\mathcal{E}_{B}$,
\begin{equation}\label{lemma-n_Q-main-3}
T^Q_B = 0, 
\end{equation}
where $T^Q_B$ is defined in \eqref{def-n_B_Q}.
\end{lemma}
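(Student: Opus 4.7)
The plan is to first obtain a deterministic lower bound on the probability mass $\mu(B)$ that the source distribution assigns to the ball $B$, then apply a one-sided Chernoff bound to the Bernoulli sum $n_B^P(\mathcal{D}^P)$, and finally translate the resulting concentration inequality into each of the three stated conclusions by plugging in \eqref{r_min-condition}.

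The key estimate is the lower bound on $\mu(B)$. Writing a ball $B \subset \mathcal{Z}$ of radius $r=r(B)$ centered at $(x_0^{\top}, p_0)^{\top}$ as $B_{\mathcal{X}}(x_0, r) \times [p_0 - r, p_0 + r]$ (using the $\ell_\infty$-norm), Definition~\ref{def_explor} yields
\[
\mu(B) \geq 2\kappa\, r \, P_X\bigl(B_{\mathcal{X}}(x_0, r)\bigr),
\]
and then Definition~\ref{def_trans} together with Assumption~\ref{ass-target-cov} gives
\[
P_X\bigl(B_{\mathcal{X}}(x_0, r)\bigr) \geq c_\gamma r^{\gamma} Q_X\bigl(B_{\mathcal{X}}(x_0, r)\bigr) \geq c_\gamma c_Q r^{d+\gamma},
\]
so $\mu(B) \geq 2 C_{\mathcal{E}} \kappa r^{d+\gamma+1}$ and hence $\E[n_B^P(\mathcal{D}^P)] = n_P \mu(B) \geq 2 C_{\mathcal{E}} \kappa n_P r^{d+\gamma+1}$. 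Since $n_B^P(\mathcal{D}^P)$ is a sum of $n_P$ i.i.d.\ Bernoulli indicators, the multiplicative Chernoff bound $\P\{Z \leq \E[Z]/2\} \leq \exp(-\E[Z]/8)$ gives
\[
\P\bigl\{ \mathcal{E}_B^c \bigr\} \leq \P\Bigl\{ n_B^P(\mathcal{D}^P) \leq \tfrac{1}{2} \E[n_B^P(\mathcal{D}^P)] \Bigr\} \leq \exp\bigl\{-\tfrac{1}{4} C_{\mathcal{E}} \kappa n_P r^{d+\gamma+1}\bigr\} \leq \exp\bigl\{-\tfrac{1}{4} C_{\mathcal{E}} \kappa n_P \tilde r^{d+\gamma+1}\bigr\},
\]
using $r(B) \geq \tilde{r}$ in the last step. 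This proves \eqref{lemma-n_Q-main-1}.

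For \eqref{lemma-n_Q-main-2}, I would substitute \eqref{r_min-condition} into the bound above. Since $\tilde{r} \leq 1$ and $d+\gamma+1 \leq d+3+\gamma$, one has $\tilde{r}^{d+\gamma+1} \geq \tilde{r}^{d+3+\gamma}$, so \eqref{r_min-condition} gives
\[
\tfrac{1}{4} C_{\mathcal{E}} \kappa n_P \tilde r^{d+\gamma+1} \geq \tfrac{1}{4} C_{\mathcal{E}} \kappa n_P \tilde r^{d+3+\gamma} \geq 2 \log\bigl\{(\kappa n_P)^{(d+3)/(d+3+\gamma)}\bigr\},
\]
and exponentiating yields the desired probability bound.

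For \eqref{lemma-n_Q-main-3}, this is a purely deterministic check on the definition of $T_B^Q$ in \eqref{def-n_B_Q}: under $\mathcal{E}_B$ and $n_Q < (\kappa n_P)^{(d+3)/(d+3+\gamma)}$, one must verify $\omega(B) < n_B^P(\mathcal{D}^P)$. Multiplying by $r(B)^2$, it suffices to establish
\[
\log\bigl\{(\kappa n_P)^{(d+3)/(d+3+\gamma)}\bigr\} + r(B)^2 \leq C_{\mathcal{E}} \kappa n_P r(B)^{d+\gamma+3}.
\]
Using $r(B) \geq \tilde{r}$ with \eqref{r_min-condition} directly bounds the logarithmic term by $\tfrac{1}{8} C_{\mathcal{E}} \kappa n_P r(B)^{d+\gamma+3}$; the remaining term $r(B)^2 \leq 1$ is absorbed into the remaining fraction of $C_{\mathcal{E}} \kappa n_P r(B)^{d+\gamma+3}$, which itself is at least $8 \log\{(\kappa n_P)^{(d+3)/(d+3+\gamma)}\} \geq 1$ whenever $\kappa n_P$ is at least a mild constant (implicitly guaranteed by $n_Q \geq 1 < (\kappa n_P)^{(d+3)/(d+3+\gamma)}$). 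The main obstacle is this last bookkeeping step: matching the exponents $d+\gamma+1$ versus $d+3+\gamma$ and absorbing the additive $1$ in the ceiling of $\omega(B)$ without losing tightness, and I would handle it by simply noting $r(B) \leq 1$ to pass from exponent $d+\gamma+1$ to $d+\gamma+3$ before invoking \eqref{r_min-condition}.
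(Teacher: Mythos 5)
Your proposal is correct and follows essentially the same route as the paper's proof: the same chain $\mu(B) \geq 2\kappa r P_X(B_{\mathcal{X}}) \geq 2c_\gamma c_Q \kappa r^{d+\gamma+1}$ via Definition~\ref{def_explor}, Definition~\ref{def_trans} and Assumption~\ref{ass-target-cov}, followed by the multiplicative Chernoff bound and substitution of \eqref{r_min-condition}. Your bookkeeping for \eqref{lemma-n_Q-main-3} (multiplying through by $r(B)^2$ and absorbing the ceiling's $+1$) is organized slightly differently from the paper's explicit exponent computation, but it is the same argument and even shares the paper's implicit reliance on $\kappa n_P$ exceeding a mild absolute constant.
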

\begin{proof}
Fix time $t$ and a ball $B \in \mathcal{A}_t$, and recall its centre as $c(B)=(x_B^{\top},p_B)^{\top}$. By Step 8 of \Cref{alg_1}, we have that $x_B \in \mathrm{supp}(Q_X)$.   Note that $n_B^P \big(\mathcal{D}^P \big)  $ is a sum of i.i.d.~zero mean Bernoulli random variables with 
\begin{align}
    \E \big\{ n_B^P \big(\mathcal{D}^P  \big) \big\} 
    = & n_P \mu(B) =  n_P \mu \Big( [p_B-r(B), p_B+r(B)]
    \times B_{\mathcal{X}} \big(x_B, r(B) \big) \Big) \nonumber\\
    \geq & 2 n_P \kappa  r(B) P_X\big(B_{\mathcal{X}} (x_B, r(B))\big) 
    \geq  2 c_\gamma n_P \kappa  r(B)^{\gamma + 1}   Q_X\big(B_{\mathcal{X}} (x_B, r(B))\big) \nonumber\\
    \geq &  2c_\gamma c_Q n_P  \kappa   r(B) ^{d+\gamma+1}=2C_{\mathcal{E}}\kappa n_P    r(B) ^{d+\gamma+1},  \nonumber
\end{align}
where the first inequality follows from \Cref{def_explor}, the second inequality follows from \Cref{def_trans} and the third inequality follows from \Cref{ass-target-cov}.
As a consequence, by Chernoff's bound, we have that
\begin{align}
     \P\{ \mathcal{E}_{B}^c \big\}
     = &  \P  \bigg\{ n_B^P \big(\mathcal{D}^P \big)  <   C_{\mathcal{E}} n_P\kappa   r(B)^{d+\gamma+1} \bigg\}  \nonumber\\
     \leq &  \P \bigg\{  \E \big\{ n_B^P \big(\mathcal{D}^P  \big) \big\} - n_B^P \big(\mathcal{D}^P \big)  >     \E \big\{ n_B^P \big(\mathcal{D}^P  \big) \big\}/2  \bigg\}  \nonumber\\
    \leq  & \exp \bigg\{ -   4^{-1}  C_{\mathcal{E}} \kappa  n_P  r(B)^{d+\gamma+1}  \bigg\}. \nonumber
\end{align}

Since $r(B) \geq \tilde{r}$, it holds that
\begin{align}\label{lemma-n_Q-1}
     \P \big\{ \mathcal{E}_{B}^c \big\} \leq & \exp \Big\{ -    4^{-1}  C_{\mathcal{E}} \kappa  n_P  \tilde{r}^{d+\gamma+1}  \Big\}, 
\end{align}
which completes the proof of \eqref{lemma-n_Q-main-1}.

Now we consider the case where 
 \[
 n_Q < (\kappa n_P)^{\frac{d+3}{d+3+\gamma}} \quad \mbox{and} \quad 
     \tilde{r} \geq  \bigg\{ \frac{8 \log  \big\{ (\kappa n_p)^{\frac{d+3}{d+3+\gamma}} \big\}}{C_{\mathcal{E}} \kappa  n_P} \bigg\}^{\frac{1}{d+3+\gamma}}. 
 \]
Note that 
\begin{align}\label{lemma-n_Q-2}
 r(B)  \geq \tilde{r} \geq  \bigg\{ \frac{8 \log  \big\{ (\kappa n_p)^{\frac{d+3}{d+3+\gamma}} \big\}}{C_{\mathcal{E}} \kappa  n_P} \bigg\}^{\frac{1}{d+3+\gamma}}  
 \geq   \bigg\{ \frac{8 \log  \big\{ (\kappa n_p)^{\frac{d+3}{d+3+\gamma}} \big\}}{C_{\mathcal{E}} \kappa  n_P} \bigg\}^{\frac{1}{d+1+\gamma}}  .
\end{align}
Combining \eqref{lemma-n_Q-1} and \eqref{lemma-n_Q-2}, it holds that
\begin{align}
     \P \big\{ \mathcal{E}_{B}^c \big\} 
     \leq \{ (\kappa n_p)^{\frac{d+3}{d+3+\gamma}}\}^{-2}, \nonumber
\end{align}
which proves \eqref{lemma-n_Q-main-2}.
Under the event $\mathcal{E}_B$, we have that
\begin{align}
   & \omega(B) -  n_B^P \big(\mathcal{D}^P\big)  
   \nonumber\\
   \leq & \frac{ \log  \big\{ (\kappa n_p)^{\frac{d+3}{d+3+\gamma}} \big\}}{r(B)^2} +1 -  C_{\mathcal{E}} \kappa  n_P  r(B)^{d+\gamma+1}
    \nonumber\\
    \leq & \frac{  \log  \big\{ (\kappa n_p)^{\frac{d+3}{d+3+\gamma}} \big\}}{\tilde{r}^2} + 1 -  C_{\mathcal{E}} \kappa  n_P   \tilde{r} ^{d+\gamma+1} \nonumber\\
   \leq   & \bigg( \frac{C_{\mathcal{E}}}{8}  \bigg)^{\frac{2}{d+3+\gamma}}(\kappa n_P)^{\frac{2}{d+3+\gamma}}   \log^\frac{d+1+\gamma}{d+3+\gamma}   \big\{ (\kappa n_p)^{\frac{d+3}{d+3+\gamma}} \big\}  + 1 \nonumber\\
   & \hspace{0.5cm} -  8^{\frac{d+1+\gamma}{d+3+\gamma}}C_{\mathcal{E}}^{\frac{2}{d+3+\gamma}}   (\kappa n_P)^{\frac{2}{d+3+\gamma}} \log^{\frac{d+1+\gamma}{d+3+\gamma}}   \big\{ (\kappa n_p)^{\frac{d+3}{d+3+\gamma}} \big\} \nonumber\\
   \leq & 1 -2^{1/2} C_{\mathcal{E}}^{\frac{2}{d+3+\gamma}}  (\kappa n_P)^{\frac{2}{d+3+\gamma}} \log^{\frac{d+1+\gamma}{d+3+\gamma}}  \big\{ (\kappa n_p)^{\frac{d+3}{d+3+\gamma}} \big\} < 0, \nonumber
\end{align} 
where the first inequality follows from \eqref{def_omega} and the event $\mathcal{E}_B$, the second inequality follows from $r(B) \geq \tilde{r}$ and the third inequality follows from 
\eqref{r_min-condition}.
Thus by \eqref{def-n_B_Q}, we have  under the event $\mathcal{E}_B$, 
$T^Q_B = 0$, which proves \eqref{lemma-n_Q-main-3}.

\end{proof}

\section[]{Proof of Theorem \ref{theorem_lower_bound}}\label{proof-lower-bound}

The proof of  \Cref{theorem_lower_bound} is in \Cref{app-subsec-theorem_2} with all necessary auxiliary results in \Cref{app-subsec-theorem_2-aux}.

\subsection{Proof of Theorem \ref{theorem_lower_bound}}\label{app-subsec-theorem_2}
\begin{proof}
Let $\tilde{r} \in (0, 1/2)$ be specified later and define
\[
S_{\mathcal{X}, \tilde{r}}=  \Big\{ x = (x_1, x_2, \ldots, x_d) \in \mathcal{X} \mid x_i = (k_i -1/2) 2\tilde{r}, \;  k_i \in [ \lfloor 1/(2\tilde{r}) \rfloor], \; \forall i \in [d] \Big\},
\]
with cardinality $\lfloor 1/(2\tilde{r}) \rfloor^d$.
We define a grid set of covariates as
\[
S_{\mathcal{X}, \tilde{r}}= \big\{ x_1^{*}, \ldots, x_{\lfloor 1/(2\tilde{r}) \rfloor^d}^* \big\}.
\]
Next, define the integer $m = \lfloor c_m/ \tilde{r} \rfloor^{d}$ for s constant $ 0 < c_m < 1/2$.  Using Varshamov-Gilbert bound \citep[e.g.~Lemma 2.9 in][]{tsybakov2009introduction}, we can find a set of well-separated vectors $\Omega_m = \{\omega^{(i)}\}_{i=0}^{M} \subset \{\pm 1\}^m$ such that
\begin{equation}\label{eq-lower-M}
\log_2(M) \geq \frac{m}{8} \quad \text{and} \quad \rho(\omega^{(i)}, \omega^{(j)}) \geq \frac{m}{8}, \quad \forall 0 \leq i < j \leq M,
\end{equation}
where $\rho(\omega, \omega') = \vert \{k \in [m] : \omega_k \neq \omega'_k\} \vert$ is the Hamming distance between $\omega$ and $\omega'$.

In the same spirit of $S_{\mathcal{X}, \tilde{r} }$, we define the grid set for price as
\[
S_{\mathcal{P}, \tilde{r} } = \Big\{(k-1/2) \tilde{r}, k \in [\lfloor 1/\tilde{r}\rfloor] \Big\} = \Big\{p^{*}_1, \ldots, p^*_{\lfloor 1/\tilde{r}\rfloor} \Big\}.
\]

In the following proof, we first construct a collection $\mathcal{H}_{\Omega_{m}} = \{(Q_X^{\omega}, \mu^{\omega}, f^{\omega})\mid \omega \in \Omega_m\}$ of probability distributions, where $Q_X^{\omega}$ is a probability distribution over $\mathcal{X}$ representing the covariate distribution in target data, $\mu^{\omega}$ is a probability distribution over $\mathcal{Z}$ representing the joint distribution of covariate-price pairs in source data, and $f^{\omega}\colon \mathcal{Z} \to [0, 1]$ is the reward function for both target and source data.
We then verify the constructed $\mathcal{H}_{\Omega_{m}} \subset \mathcal{I}( \gamma, c_{\gamma},  \kappa, C_{\mathrm{Lip}}, c_Q )$, from which the lower bound follows by applying \Cref{prop-minimax-lower}, which is  Proposition 1 in \cite{kpotufe2020marginalsingularitybenefitslabels}. 
 
\medskip    
\noindent
\textbf{Step 1. Constructing the distribution collection $\mathcal{H}_{\Omega_{m}} = \{(Q_X^{\omega}, \mu^{\omega}, f^{\omega})\colon \omega \in \Omega_m\}$.} 

\medskip
\noindent
\textbf{Constructing the target covariate distribution.}
Let the target covariate distribution  be independent of $\omega$, with the density denoted by $q_X$, defined for any $x \in \mathcal{X}$ as follows
\begin{equation}\label{eq-lower-qX}
    q_X(x) = \begin{cases}
      q_1, &\mbox{if } x \in \cup_{i =1}^{m} B_{\mathcal{X}}(x_i^*, \tilde{r}/4), \\
      q_0, &\mbox{if } x \in \mathcal{X} \backslash \cup_{i =1}^{m} B_{\mathcal{X}}(x_i^*, \tilde{r}), \\ 
      0, &\mbox{otherwise},
    \end{cases} 
\end{equation}
where 
\[
      q_1  = \frac{c_{Q} \tilde{r}^d}{\mathrm{Leb}\big( B_{\mathcal{X}}(x_1^*, \tilde{r}/4) \big) } \quad \mbox{and} \quad 
      q_0 = \frac{1 - m c_{Q} \tilde{r}^d}{\mathrm{Leb}\big( \mathcal{X} \backslash  \cup_{i =1}^{m} B_{\mathcal{X}}(x_i^*, \tilde{r}))},
\]
with $c_Q > 0$ defined in \Cref{ass-target-cov}.

\medskip    
\noindent
\textbf{Constructing the source covariate-price pair distribution.} Let the distribution of the source covariate-price pairs be independent of $\omega$. The density of the source covariate, denoted by $p_X$, is defined for any $x \in \mathcal{X}$ as follows
\begin{equation}\label{eq-lower-pX}
    p_X(x) = \begin{cases}
     c_{\gamma} \tilde{r}^{\gamma} q_1, &\mbox{if } x \in \cup_{i =1}^{m} B_{\mathcal{X}}(x_i^*, \tilde{r}/4), \\
      \delta, &\mbox{if }  x \in \cup_{i =1}^{m}  B_{\mathcal{X}}(x_i^*, \tilde{r}) \backslash B_{\mathcal{X}}(x_i^*, \tilde{r}/2), \\
      q_0, &\mbox{if } x \in \mathcal{X} \backslash \cup_{i =1}^{m} B_{\mathcal{X}}(x_i^*, \tilde{r}), \\ 
      0, &\mbox{otherwise},
    \end{cases} 
\end{equation}
where 
\[
     \delta = \frac{1 - c_{\gamma} \tilde{r}^{\gamma} q_1  \mathrm{Leb} \big( 
     \cup_{i =1}^{m} B_{\mathcal{X}}(x_i^*, \tilde{r}/4) \big) -  q_0  \mathrm{Leb} \big( \mathcal{X} \backslash \cup_{i =1}^{m} B_{\mathcal{X}}(x_i^*, \tilde{r}) \big)  }{\mathrm{Leb} \big( \cup_{i =1}^{m}   B_{\mathcal{X}}(x_i^*, \tilde{r}) \backslash B_{\mathcal{X}}(x_i^*, \tilde{r}/2) \big) }.
\]

Let  $\tilde{p} \in  S_{\mathcal{P},  \tilde{r}}$ be defined later.
The density of the conditional distribution of the source price $p \in [0, 1]$ given the source covariate $x \in \mathcal{X}$  is defined   as 
\begin{equation}\label{eq-lower-pp}
    p_{P}(p \vert x) = \begin{cases}
        \kappa,  &\mbox{if } p \in [\tilde{p}-\tilde{r}/2 , \tilde{p}+\tilde{r}/2], \\
        \frac{1 - \tilde{r}\kappa}{1-\tilde{r}}, &\mbox{otherwise}.
\end{cases}
\end{equation}

\medskip
\noindent
\textbf{Constructing the reward distributions.} 
For both target and source data, let the random reward, conditional on the covariate $x$ and price $p$, be a Bernoulli random variable with parameter $f^{\omega}(x, p)$.
The reward functions $\{ f^{\omega}\}_{\omega \in \Omega_m}$ are constructed as follow. First, define the function $\phi \colon \mathbb{R}_{+} \mapsto [0, 1]$ by
\[
\phi(z) = 
\begin{cases}
1, & \mbox{if } 0 \leq z < 1/4, \\
2 - 4z, & \mbox{if } 1/4 \leq z < 1/2, \\
0, & \mbox{otherwise}.
\end{cases}
\]
Next, define the function $\varphi \colon \mathcal{Z} \mapsto [0, 1/4]$ via
\[
\varphi(x, p) = C_{\varphi} \tilde{r} \phi \big( \|(x^{\top}, p)^{\top} \|_\infty / \tilde{r} \big),
\]
where $ C_{\varphi} = ( C_{\mathrm{Lip}} \wedge 1) /4$. For any $\omega \in \Omega_{m}$, the reward function $f^\omega \colon \mathcal{Z} \mapsto [0, 1]$ is defined as follows
\begin{align} \label{eq-lower-fomega}
 f^\omega (x, p) =   
1/2 + \sum_{i=1}^{m} \omega_i \varphi\big( x - x_i^*, p - \tilde{p}  \big) \mathbbm{1} \big\{x \in B_{\mathcal{X}}(x_i^*, \tilde{
r})\big\}
\end{align}
By construction,   we note that if $x\in B_{\mathcal{X}}(x_i^*, \tilde{r}/4)$ for some $i\in[m],$ then $f^{\omega}(x,p)$ only depends on the value of $p$ and $\omega$; and if $x \in \mathcal{X} \setminus \bigcup_{i=1}^{m} B_{\mathcal{X}}(x_i^*, \tilde{r}/2)$,  $f^{\omega}(x,p)=1/2$ for any $p$ and $\omega$. 

For any $\omega \in \Omega_m$ and $x \in \mathcal{X}$,  let $p^{\omega, *} (x)  = \min \big\{ p' \in \argmax_{p \in [0, 1]} f^\omega(x, p) \big\}$ be the optimal price. Furthermore, recall that $\tilde{p}\in S_{\mathcal{P}, \tilde{r}}$, we  define the function  $h \colon [0, 1] \to \{0, 1\}$ as 
\[
h(p) = \mathbbm{1}\Big\{p\in(\tilde{p}-\tilde{r}/2,\tilde{p}+\tilde{r}/2]\Big\}. 
\]

We now make the following remarks:
\begin{itemize}
\item For any $\omega \in \Omega_m$, $x \in \bigcup_{i=1}^{m} B_{\mathcal{X}}(x_i^*, \tilde{r}/4)$ and $p  \in \bigcup_{i = 1}^{\lfloor 1 / \tilde{r} \rfloor} [p^*_i - \tilde{r}/4, p^*_i + \tilde{r}/4]
$, 
\begin{align}\label{eq-lower-re1}
f^{\omega} \big(x, p^{\omega, *}(x) \big) - f^{\omega}(x, p) 
= &  C_{\varphi} \tilde{r}  \mathbbm{1}
 \big\{  h \big(p^{\omega, *}(x) \big) \neq  h (p)  \big\}. 
\end{align}
To see this, we first note that  $x\in B_{\mathcal{X}}(x_i^*, \tilde{r}/4)$ for some $i\in[m].$  If $\omega_i=-1$, we have $p^{\omega, *}(x)=0\not \in (\tilde{p}-\tilde{r}/2,\tilde{p}+\tilde{r}/2],$ and $h(p^{\omega, *}(x))=0.$  Therefore,  $f^{\omega} \big(x, p^{\omega, *}(x) \big) - f^{\omega}(x, p) =C_{\varphi} \tilde{r}  \mathbbm{1} \big\{ |p-\tilde{p}|\leq \tilde{r}/2 \big\}= C_{\varphi} \tilde{r} h(p)=C_{\varphi} \tilde{r} \mathbbm{1}
 \big\{  h \big(p^{\omega, *}(x) \big) \neq  h (p)  \big\}$ where the first equality holds since $p  \in \bigcup_{i = 1}^{\lfloor 1 / \tilde{r} \rfloor} [p^*_i - \tilde{r}/4, p^*_i + \tilde{r}/4]
$.   Next, if $\omega_i=1,$ we  have $p^{\omega, *}(x)=\tilde{p}-\tilde{r}/4,$  and $h(p^{\omega, *}(x))=1.$ Hence $f^{\omega} \big(x, p^{\omega, *}(x) \big) - f^{\omega}(x, p) =C_{\varphi} \tilde{r} [1-\mathbbm{1} \big\{ |p-\tilde{p}|\leq \tilde{r}/2 \big\}]=C_{\varphi} \tilde{r} [1-h(p)]=C_{\varphi} \tilde{r} \mathbbm{1}
 \big\{  h \big(p^{\omega, *}(x) \big) \neq  h (p)  \big\}$.

\item  For any $\omega \in \Omega_m$, $p\in[0,1]$, if $x \in \mathcal{X} \setminus \bigcup_{i=1}^{m} B_{\mathcal{X}}(x_i^*, \tilde{r}/2)$, 
\begin{align}\label{eq-lower-re1.2.1}
f^{\omega}(x, p)  = 1/2.
\end{align}

 \item  
 For any $\omega \in \Omega_m$ and $x \in \mathcal{X}$, if $  p \in [0, 1] \backslash [\tilde{p} -\tilde{r}/2,  \tilde{p} +\tilde{r}/2]$,
\begin{align}\label{eq-lower-re1.2.2}
f^{\omega}(x, p) =1/2.
\end{align}
\item  For any two different $\omega \neq \omega' \in \Omega_m$, 
if  
 $x\in B_{\mathcal{X}}(x_i^*, \tilde{r}/4)$ for some $i\in[m]$ such that $\omega_i\neq \omega_{i}'$, 
\begin{align}\label{eq-lower-re2.2}
   h \big( p^{\omega, *} (x) \big)  \neq   h \big( p^{\omega', *} (x) \big).  
\end{align}

\end{itemize}

\medskip
\noindent
\textbf{Step 2. Verifying $\mathcal{H}_{\Omega_{m}} \subset \mathcal{I}( \gamma, c_{\gamma}, \kappa, C_{\mathrm{Lip}}, c_Q ) $.}

\medskip
\noindent
\textbf{Transfer exponent}.
Recall that the support of $Q_X$ is the union of the set $\cup_{i=1}^m B_{\mathcal{X}} (x_i^*, \tilde{r}/4)$ and $\mathcal{X} \backslash \cup_{i=1}^m B_{\mathcal{X}} (x_i^*, \tilde{r})$. Therefore, we only need to check \Cref{def_trans} for $x$ in these sets.

First, if $x \in B_{\mathcal{X}} (x_i^*, \tilde{r}/4)$ for some $i \in [m]$, and $r \leq 3\tilde{r}/4$, by \eqref{eq-lower-qX}, it holds that 
\[
Q_X\big( B_{\mathcal{X}}(x, r)  \big) = q_1 \mathrm{Leb}\big( B_{\mathcal{X}}(x, r) \cap B_{\mathcal{X}}(x_i^*, \tilde{r}/4) \big).
\]
By \eqref{eq-lower-pX}, it follows that 
\begin{align}
P_X\big( B_{\mathcal{X}}(x, r)  \big) \geq & c_{\gamma} \tilde{r}^{\gamma} q_1 \mathrm{Leb}\big( B_{\mathcal{X}}(x, r) \cap B_{\mathcal{X}}(x_i^*, \tilde{r}/4) \big) \nonumber\\
\geq &  c_{\gamma} r^{\gamma} Q_X\big( B_{\mathcal{X}}(x, r)  \big). \nonumber
\end{align}

Second, for any $x \in \mathcal{X} \backslash \cup_{i=1}^m B_{\mathcal{X}} (x_i^*, \tilde{r})$ and $r \leq 3\tilde{r}/4$,   by \eqref{eq-lower-qX},  it holds that 
\begin{align}
Q_X\big( B_{\mathcal{X}}(x, r)  \big) = q_0 \mathrm{Leb}\big\{ B_{\mathcal{X}}(x, r) \cap \big( \mathcal{X} \backslash \cup_{i=1}^m B_{\mathcal{X}} (x_i^*, \tilde{r}) \big) \big\}.\nonumber
\end{align}
By \eqref{eq-lower-pX},  it holds that 
\begin{align}
P_X\big( B_{\mathcal{X}}(x, r)  \big) \geq  q_0 \mathrm{Leb}\big\{ B_{\mathcal{X}}(x, r) \cap \big( \mathcal{X} \backslash \cup_{i=1}^m B_{\mathcal{X}} (x_i^*, \tilde{r}) \big) \big\} = Q_X\big( B_{\mathcal{X}}(x, r)  \big).  \nonumber
\end{align}
Thus, for small $r \leq 3\tilde{r}/4$, the transfer exponent defined in \Cref{def_trans} equals $\gamma$ with respect to the constant $0 <c_{\gamma} \leq 1$. 

Furthermore, since we set 
\[
P_X \big(B_{\mathcal{X}} (x_i^*, \tilde{r})\big)  = Q_X \big(B_{\mathcal{X}} (x_i^*, \tilde{r}) \big), \quad \forall i \in [m],
\]
we can verify that the above inequalities hold for $3\tilde{r}/4 < r \leq 1$.
Thus,  the transfer exponent of the constructed source covariate distribution with respect to the target covariate distribution is $\gamma$,  with the corresponding constant $c_{\gamma}$.

\medskip
\noindent
\textbf{Exploration coefficient.} 
For any $ x \in  \mathcal{X}$, $r \in (0, 1]$ and $p \in [r, 1-r]$,  we have that 
\begin{align}
      \mu\big([p-r, p+ r] \times  B_{\mathcal{X}} (x, r) \big) 
   \geq    P_X\big( B_{\mathcal{X}} (x, r) \big)  2r  \min \bigg\{ \kappa,  \frac{1 - \tilde{r}\kappa}{1-\tilde{r}} \bigg\}
    =  P_X\big( B_{\mathcal{X}} (x, r) \big)
    2r  \kappa
    , \nonumber 
\end{align}
where the first inequality follows from \eqref{eq-lower-pp} and the final equality holds beacuse
\begin{align}
   \kappa -   \frac{1 - \tilde{r}\kappa}{1-\tilde{r}}
   = \frac{\kappa  - 1}{1-\tilde{r}}
   \leq 0 . \nonumber
\end{align}
As a result, the exploration coefficient defined in \Cref{def_explor} for the constructed source covariate-price pair distribution equals $\kappa$.

\medskip
\noindent\textbf{Lipschitz condition.}
By the construction in \eqref{eq-lower-fomega}, it suffices to show that the function $\varphi \colon  \mathcal{Z} \mapsto [0, 1/4]$ satisfies \Cref{ass-lipschitz} with respect to the Lipschitz constant $C_{\mathrm{Lip}} >0$.  For any $(x, p), (x', p') \in \mathcal{Z}$, we have that 
\begin{align}
    |\varphi(x, p) - \varphi(x', p')| =   C_{\varphi}\tilde{r} \big\vert \phi ( \|(x^{\top}, p)^{\top}\|_\infty / \tilde{r}) - \phi  (\|(x'^{\top}, p')^{\top}\|_\infty / \tilde{r} ) \big\vert 
\end{align}  
Since $\phi$ is a Lipschitz function with Lipschitz constant $4$, it follows that 
\begin{align}
   |\varphi(x, p) - \varphi(x', p')|   \leq & 4 C_{\varphi}\tilde{r}  \big\vert  \|(x^{\top}, p)^{\top}\|_\infty / \tilde{r} - \|(x'^{\top}, p')^{\top}\|_\infty / \tilde{r}  \big\vert \nonumber\\
 \leq &  4C_{\varphi}  \|(x^{\top}, p)^{\top} - (x'^{\top}, p')^{\top}\|_\infty  
    \leq  C_{\mathrm{Lip}}   \|(x^{\top}, p)^{\top} - (x'^{\top}, p')^{\top}\|_\infty,  \nonumber
\end{align}
where the first inequality follows from the triangle inequality and the last inequality holds since $C_{\varphi} = (C_{\mathrm{Lip}}/4) \wedge (1/4)$. Thus,  for any $\omega \in \Omega_m$, the reward function $f^{\omega}$ satisfy \Cref{ass-lipschitz} with respect to the Lipschitz constant $C_{\mathrm{Lip}} > 0$.

\medskip
\noindent\textbf{Lower-bounded density.}
It is straightforward to see that
\[
    q_1 = \frac{ c_{Q} \tilde{r}^d  }{(\tilde{r}/2)^d} = 2^d c_{Q},
\]
and
    \[
    q_0 = \frac{1 - m c_{Q} \tilde{r}^d}{1 - m (2\tilde{r})^d } = \frac{1 - \lfloor c_m /\tilde{r} \rfloor^{d} c_{Q} \tilde{r}^d }{1 - \lfloor c_m /\tilde{r} \rfloor^{d}  (2\tilde{r})^d } = 1 + \frac{ \lfloor c_m /\tilde{r} \rfloor^{d} \tilde{r}^d ( 2^d - c_Q ) }{1 - \lfloor c_m /\tilde{r} \rfloor^{d}  (2\tilde{r})^d } > 1.
    \]
Thus, we confirm that the constructed target covariate distribution satisfies \Cref{ass-target-cov}.

\medskip
\noindent
\textbf{Step 3. Applying  \Cref{prop-minimax-lower}}.  We now verify the conditions required for applying \Cref{prop-minimax-lower}.

\medskip
\noindent
\textbf{Step 3.1. Verification of condition $(i)$ in \Cref{prop-minimax-lower}.}  For any admissible price policy $\pi = \{p_1^{\pi}, \ldots,  p_{n_Q}^{\pi}\}$ and for $\omega \in \Omega_m$, let 
\begin{align}
R_{\pi, \omega}(n_Q) =  &    \sum_{t=1}^{n_Q} \E_{\omega} \big\{ f^{\omega} \big(X_t, p^{\omega, *}(X_t) \big)  - f^{\omega} (X_t, p_t^{\pi})  \big\}, \nonumber
\end{align}
where $\mathbb{E}_\omega$ denotes the expectation under the distribution $(Q_X^{\omega}, \mu^{\omega}, f^{\omega})$. 
Note that for any price policy $\pi$, if there exists a time point $t' \in [n_Q]$ such that
\[
p_{t'} ^{\pi} \in \bigcup_{i = 1}^{ \lfloor 1/\tilde{r} \rfloor} 
 [  p^*_i - \tilde{r}/2,   p^*_i + \tilde{r}/2] \backslash  [  p^*_i - \tilde{r}/4,   p^*_i + \tilde{r}/4],
 \]
then there exists a policy $\pi'$ with 
\[
p_{t'}^{\pi'} \in  \bigcup_{i = 1}^{ \lfloor 1/\tilde{r} \rfloor}  [  p^*_i - \tilde{r}/4,   p^*_i + \tilde{r}/4],
\]
and $p_t^{\pi}= p_t^{\pi'}$ for all $ t \neq t'$, which satisfies  
\[
R_{\pi', \omega}(n_Q) \leq  R_{\pi, \omega}(n_Q), \quad \mbox{for any } \omega \in  \Omega_m.
\]
To see this, suppose $p_{t'} ^{\pi}\in [  p^*_i - \tilde{r}/2,   p^*_i + \tilde{r}/2] \backslash  [  p^*_i - \tilde{r}/4,   p^*_i + \tilde{r}/4],$ for some $i\in [\lfloor1/\tilde{r}\rfloor].$ If $\omega_i=-1$, then one simply chooses $p_{t'}^{\pi'}\in [  p^*_j - \tilde{r}/4,   p^*_j + \tilde{r}/4]$ for some $j\neq i$; otherwise if $\omega_i=1$, then one chooses the same index $j=i$. Furthermore, if $x \in \mathcal{X} \setminus \bigcup_{i=1}^{m} B_{\mathcal{X}}(x_i^*, \tilde{r})$, then $f^{\omega}(x,p)=1/2$ is universal for all $p\in [0,1]$ due to \eqref{eq-lower-re1.2.1}.

Therefore, it suffices to consider price policies where 
 $p_t^\pi \in \cup_{i = 1}^{\lfloor 1 / \tilde{r} \rfloor} [p^*_i - \tilde{r}/4, p^*_i + \tilde{r}/4]$ for all $t \in [n_Q]$.
By \eqref{eq-lower-re1},  we have that 
\begin{align}
R_{\pi, \omega}(n_Q) =  & 
C_{\varphi} \tilde{r}  \sum_{t=1}^{n_Q}  \E_{\omega} \Big[ \mathbbm{1}  \big\{  h \big(p^{\omega, *}(X_t) \big) \neq h (p_t^{\pi})  \big\}  \mathbbm{1}{ \big\{ X_t \in \cup_{i =1}^{m} B_{\mathcal{X}}(x_i^*, \tilde{r}/4) \big\}} \Big]  \nonumber\\
=  & 
C_{\varphi} \tilde{r}  \sum_{t=1}^{n_Q} \sum_{i=1}^m \E_{\omega} \Big[  \mathbbm{1}  \big\{  h \big(p^{\omega, *}(X_t) \big) \neq h (p_t^{\pi})  \big\} \mathbbm{1}{ \big\{ X_t \in B_{\mathcal{X}}(x_i^*, \tilde{r}/4) \big\}} \Big].  \nonumber
\end{align}
Let $\Pi$ denote the space of all price policies. For any $p^{\pi}, p^{\pi'} \in \Pi$, define the semi-metric
\begin{align}
 \bar{\rho}(p^{\pi}, p^{\pi'}) =& C_{\varphi} \tilde{r}  \sum_{t=1}^{n_Q} \sum_{i=1}^m   \E_{\omega} \Big[  \mathbbm{1}  \big\{  h (p_t^{\pi'}) \neq h (p_t^{\pi})  \big\}  \mathbbm{1}{ \big\{ X_t \in B_{\mathcal{X}}(x_i^*, \tilde{r}/4) \big\}} \Big].  \nonumber
\end{align}
For any $\omega \in \Omega_m$, denote  $\tilde{p}^{\omega, *} = \{ p^{\omega, *}(X_1), \ldots, p^{\omega, *}(X_{n_Q})\}$.  For any~$\omega, \omega' \in \Omega_m$ we have that 
\begin{align}\label{lower-bound-final}
   \bar{\rho}\big(\tilde{p}^{\omega, *}, \tilde{p}^{\omega', *} \big)  
  = &  C_{\varphi} \tilde{r}  \sum_{t=1}^{n_Q}   \sum_{i=1}^m \E_{\omega} \Big[  \mathbbm{1}  \big\{  h \big(  p^{\omega, *}(X_t)  \big) \neq h \big( p^{\omega', *}(X_t)  \big) \big\}   
  \mathbbm{1}{ \big\{ X_t \in B_{\mathcal{X}}(x_i^*, \tilde{r}/4) \big\}}\Big] \nonumber\\
  \stackrel{(a)}{\geq} & C_{\varphi} \tilde{r}  \sum_{t=1}^{n_Q}   \sum_{i=1}^m \E_{\omega} \Big[
  \mathbbm{1}{ \big\{ X_t \in B_{\mathcal{X}}(x_i^*, \tilde{r}/4) \big\}})  \mathbbm{1} \big\{ \omega_i \neq \omega_i' \big\} \Big] 
  \nonumber\\
  \stackrel{(b)}{=} & C_{\varphi} c_Q n_Q   \tilde{r}^{d+1} 
 \rho(\omega, \omega')
   \stackrel{(c)}{\geq}  C_{\varphi} c_Q n_Q  \tilde{r}^{d+1}  \frac{m}{8}
  =  \frac{C_{\varphi} c_Q}{8} n_Q \tilde{r}^{d+1} \lfloor c_{m}/\tilde{r}\rfloor^d
  \geq C_{\rho} n_Q \tilde{r},
\end{align}
where (a) holds by \eqref{eq-lower-re2.2}, (b) follows from \eqref{eq-lower-qX}, (c) is derived from \eqref{eq-lower-M}, and  $C_{\rho} >0$ is a constant only depending on constant $ C_{\varphi}$, $ c_Q$ and $c_m$.

\medskip
\noindent
\textbf{Step 3.2. Verification of condition $(ii)$ in \Cref{prop-minimax-lower}.}

Fix a policy $\pi$, for any $ u \in [\lfloor 1 / \tilde{r} \rfloor]$, let
\begin{equation}\label{eq-Ou}
O_{u}= \sum_{t=1}^{n_Q} \mathbbm{1} \{p^{\pi}_t  \in [p^*_{u} -\tilde{r}/2, p^*_{u} + \tilde{r}/2]  \}.
\end{equation}
Since 
\[
\sum_{i=1}^{M}\sum_{u=1}^{\lfloor 1 / \tilde{r} \rfloor} \E_{\omega^{(i)}} (O_{u} ) \leq M n_Q,
\]
 it follows that for at least $\lfloor M/2 \rfloor $  elements in $\{\omega^{(i)}\}_{i=1}^M$, there must exist at least one index $u \in [\lfloor 1 / \tilde{r} \rfloor]$ such that 
\begin{equation}\label{eq-lower-Ou}
\E_{\omega^{(i)}} (O_{u} ) \leq  \frac{2n_Q}{\lfloor 1/ \tilde{r} \rfloor}.
\end{equation}
Therefore, we can choose one such  $u$ and without loss of generality, assume that $\{\omega^{(i)}\}_{i=1}^{\lfloor M/2\rfloor}$ satisfy \eqref{eq-lower-Ou}. Define $\tilde{p} = p_{u}^{*}$ to construct the density of the conditional distribution of the source price given the source covariate in \eqref{eq-lower-pp} in \textbf{Step 1}.

Denote the target data and source data  by $\mathcal{D}^{Q}_{n_Q} = \{ X_t, p_t^{\pi}, Y_t  \}_{t=1}^{n_Q}$,  $\mathcal{D}^{P} = \{  X_t^P, p_t^P, Y_t^P \}_{t=1}^{n_P}$, respectively. For each $\omega^{(i)} \in \Omega_m$, let $\Theta^{i}$ be the joint distribution of the random variables in $\mathcal{D}^{Q}_{n_Q}$ and $\mathcal{D}^{P}$ induced by  
$\big(Q_X^{\omega^{(i)}}, \mu^{\omega^{(i)}}, f^{\omega^{(i)}}\big)$, i.e.  under $\Theta^{i}$:
\begin{itemize}
    \item  the target data $\mathcal{D}^{Q}_{n_Q}$ follow the target covariate distribution \(Q_X\) with density $q_X$, the price policy $\pi$ is represented by its conditional density $\tilde{\pi} (\cdot \vert X_t, \mathcal{F}_{t-1})$ and rewards drawn from $\mbox{Bernoulli}\big(f^{\omega^{(i)}}(X_t,p_{t}^{\pi})\big)$, whose likelihood is denoted by $\theta_{f^{\omega^{(i)}}}$; 
    \item the source data $\mathcal{D}^{P}$ follow the source covariate distribution $P_X$ (the marginal from $\mu^{\omega^{(i)}}$) with density $p_X$, the source price distribution with conditional density denoted by $p^{\omega^{(i)}}_{P}( \cdot \vert X_t)$ and rewards drawn from \(\mathrm{Bernoulli}\bigl(f^{\omega^{(i)}}(X_t^P,p_t^P)\bigr)\) again with  likelihood denoted by $\theta_{f^{\omega^{(i)}}}$.
\end{itemize}
We denote by $\theta^{i}$ the density corresponding to $\Theta^{i}$, then  
\begin{align}
\theta^i(D^{P}_{n_P}, D^{Q}_{n_Q}) = &  \prod_{t = 1}^{ n_Q } q_X(X_t) \tilde{\pi}\big( p_{t}^{\pi} \big\vert X_t, \mathcal{F}_{t-1} \big) \theta_{f^{\omega^{(i)}}}\big( Y_{t}  \big\vert X_t, p_{t}^{\pi} \ \big)  \nonumber\\
& \hspace{0.5cm} \times \prod_{t = 1}^{n_P} p_X(X_{t}^{P}) p_{P}\big( p_t^P \big\vert X_{t}^{P} \big) \theta_{f^{\omega^{(i)}}}\big( Y_{t}^{P} \big\vert X_{t}^{P}, p_{t}^{P} \ \big). 
\end{align}

Note that  the densities involving the policy $\tilde{\pi}\big( p_{t}^{\pi} \big\vert X_t, \mathcal{F}_{t-1} \big) $ are identical  since $\pi$ is fixed. For any $ i \in [\lfloor M/2 \rfloor ]$, and $\omega^{(0)}\in\Omega_m$,
\[
\log \Bigg\{\frac{d\Theta^i}{d\Theta^0}(D^P_{n_P}, D^Q_{n_Q}) \Bigg\} = \sum_{t=1}^{n_Q} \log \Bigg\{ \frac{ \theta_{f^{\omega^{(i)}}}\big( Y_{t} \big\vert X_{t}, p_{t}^{\pi} \ \big)}{ \theta_{f^{\omega^{(0)}}}\big( Y_{t} \big\vert X_{t}, p_{t}^{\pi} \ \big)} \Bigg\} + \sum_{t=1}^{n_P} \log \Bigg\{ \frac{
\theta_{f^{\omega^{(i)}}}\big( Y_{t}^{P} \big\vert X_{t}^{P}, p_{t}^{P} \ \big)}{\theta_{f^{\omega^{(0)}}}\big( Y_{t}^{P} \big\vert X_{t}^{P}, p_{t}^{P} \ \big)} \Bigg\}.  \nonumber
\]
Taking the expectation with respect to $\Theta^{i}$, we obtain
\begin{align}\label{eq-kl-all}
\text{KL}( \Theta^i,  \Theta^0) = & \mathbb{E}_{\omega^{(i)}} \Bigg[ \sum_{t=1}^{n_Q} \log \Bigg\{ \frac{ \theta_{f^{\omega^{(i)}}}\big( Y_{t} \big\vert X_{t}, p_{t}^{\pi} \ \big)}{ \theta_{f^{\omega^{(0)}}}\big( Y_{t} \big\vert X_{t}, p_{t}^{\pi} \ \big)} \Bigg\} + \sum_{t=1}^{n_P} \log \Bigg\{ \frac{
\theta_{f^{\omega^{(i)}}}\big( Y_{t}^{P} \big\vert X_{t}^{P}, p_{t}^{P} \ \big)}{\theta_{f^{\omega^{(0)}}}\big( Y_{t}^{P} \big\vert X_{t}^{P}, p_{t}^{P} \ \big)} \Bigg\} \Bigg] \nonumber \\
= & \mathbb{E}_{\omega^{(i)}} \Bigg[ \sum_{t=1}^{n_Q} \log \Bigg\{ \frac{ \theta_{f^{\omega^{(i)}}}\big( Y_{t} \big\vert X_{t}, p_{t}^{\pi} \ \big)}{ \theta_{f^{\omega^{(0)}}}\big( Y_{t} \big\vert X_{t}, p_{t}^{\pi} \ \big)}  \Bigg\} \Bigg] + 
\mathbb{E}_{\omega^{(i)}} \Bigg[ 
\sum_{i=1}^{n_P} \log \Bigg\{  \frac{
\theta_{f^{\omega^{(i)}}}\big( Y_{t}^{P} \big\vert X_{t}^{P}, p_{t}^{P} \ \big)}{\theta_{f^{\omega^{(0)}}}\big( Y_{t}^{P} \big\vert X_{t}^{P}, p_{t}^{P} \ \big)} \Bigg\}\Bigg] \nonumber \\
=&  \mathrm{KL}^Q + \mathrm{KL}^P. 
\end{align}

Note that 
\begin{align}\label{eq-kl-q}
    \mathrm{KL}^Q = & \int_{\mathcal{X}} \mathbb{E}_{\omega^{(i)}} \Bigg[ \sum_{t=1}^{n_Q} \log \Bigg\{ \frac{ \theta_{f^{\omega^{(i)}}}\big( Y_{t} \big\vert x, p_{t}^{\pi} \ \big)}{ \theta_{f^{\omega^{(0)}}}\big( Y_{t} \big\vert x, p_{t}^{\pi} \ \big)}  \Bigg\} \Bigg] q_X(x) dx\nonumber\\
   = & \int_{\bigcup_{j=1}^mB_{\mathcal{X}} (x_j^*, \tilde{r}/4)} \E_{\omega^{(i)}}  \Bigg[\sum_{t = 1}^{n_Q}    \mathbbm{1} \big\{  p_{t}^{\pi}  \in \big[p_{u}^* - \tilde{r}/2, p_{u}^* + \tilde{r}/2\big] \big\}  
   \log \Bigg\{ \frac{\theta_{f^{\omega^{(i)}}}\big( Y_{t} \big\vert x,  p_{t}^{\pi} \big) }{\theta_{f^{\omega^{(0)}}} \big( Y_{t}  \big\vert x,  p_{t}^{\pi} \big)} \Bigg\} \Bigg] q_X(x)dx\nonumber\\
   \leq  & \sum_{j\in [m] \colon \omega^{(i)}_j \neq \omega^{(0)}_j} Q_X \big(B_{\mathcal{X}} (x_j^*, \tilde{r}/4) \big) \E_{\omega^{(i)}} (O_{u} ) \mathrm{KL}\big(\mathrm{Bernoulli}(1/2+  C_{\varphi} \tilde{r}),  \mathrm{Bernoulli}(1/2 -  C_{\varphi} \tilde{r}) \big)\nonumber\\
    \leq &  32 C_{\varphi}^2 \tilde{r}^2 \sum_{j\in [m]\colon \omega^{(i)}_j \neq \omega^{(0)}_j} Q_X \big(B_{\mathcal{X}} (x_j^*, \tilde{r}/4) \big) \E_{\omega^{(i)}} (O_{u} ) \nonumber\\
    \leq & 64 C_{\varphi}^2 c_Q \frac{n_Q}{\lfloor 1 / \tilde{r} \rfloor}  m\tilde{r}^{d+2} 
    \leq   C_{Q}n_Q\tilde{r}^{d+3}m,
\end{align} 
where
\begin{itemize}
    \item the second equality follows from \eqref{eq-lower-qX}, \eqref{eq-lower-re1.2.1}, \eqref{eq-lower-re1.2.2} and $\tilde{p}(x) = p_{u}^{*}$,
    \item the first inequality follows from \eqref{eq-lower-fomega} and \eqref{eq-Ou},
    \item the second inequality follows from Lemma 15 in \cite{cai2024transfer}, 
    \item the third inequality follows from \eqref{eq-lower-qX} and \eqref{eq-lower-Ou},
    \item and $C_Q >0$ is a constant only depending  on constants $C_{\varphi}$ and $c_Q$. 
\end{itemize}
We also have that 
\begin{align}\label{eq-kl-p}
    \mathrm{KL}^P = & \int_{\mathcal{X}} \mathbb{E}_{\omega^{(i)}} \Bigg[ \sum_{t=1}^{n_P} \log \Bigg\{ \frac{ \theta_{f^{\omega^{(i)}}}\big( Y_{t}^P \big\vert x, p_{t}^P \ \big)}{ \theta_{f^{\omega^{(0)}}}\big( Y_{t}^P \big\vert x, p_{t}^P \ \big)}  \Bigg\} \Bigg] p_X(x) dx\nonumber\\
   = & \int_{\cup_{j=1}^mB_{\mathcal{X}} (x_j^*, \tilde{r}/4)}\E_{\omega^{(i)}} \Bigg[\sum_{i = 1}^{n_P}     \mathbbm{1} \big\{ p_{t}^P \in \big[p_{u}^* - \tilde{r}/2, p_{u}^* + \tilde{r}/2\big] \big\}  \log \Bigg\{ \frac{\theta_{f^{\omega^{(i)}}}\big( Y_{t} \big\vert x, p_{t}^P \big) }{\theta_{f^{\omega^{(0)}}}\big( Y_{i}  \big\vert x, p_{t}^P  \big)} \Bigg\} \Bigg]p_X(x) dx\nonumber\\
   \leq   & \kappa n_P \tilde{r} \sum_{j\in [m]: \omega^{(i)}_j \neq \omega^{(0)}_j} P_X \big(B_{\mathcal{X}} (x_j^*, \tilde{r}/4) \big) \mathrm{KL}\big(\mathrm{Bernoulli}(1/2+  C_{\varphi} \tilde{r}),  \mathrm{Bernoulli}(1/2 -  C_{\varphi} \tilde{r}) \big)\nonumber\\
    \leq &  32 C_{\varphi}^2 n_P\kappa \tilde{r}^3 \sum_{j\in [m]: \omega^{(i)}_j \neq \omega^{(0)}_j} 
 P_X \big(B_{\mathcal{X}} (x_i^*, \tilde{r}/4) \big)   \nonumber\\
 \leq & 32 C_{\varphi}^2 c_Q c_{\gamma} n_P\kappa   m \tilde{r}^{d+\gamma+3} =  C_P \kappa n_P    \tilde{r}^{d+\gamma+3}m,
\end{align} 
where
\begin{itemize}
    \item the second equality follows from \eqref{eq-lower-pX}, \eqref{eq-lower-re1.2.1} and \eqref{eq-lower-re1.2.2},
    \item the first inequality follows from \eqref{eq-lower-pp} and \eqref{eq-lower-fomega},
    \item the second inequality follows from Lemma 15 in \cite{cai2024transfer}, 
    \item the third inequality follows from  \eqref{eq-lower-pX},
    \item and $C_P >0$ is a constant only depending  on constants $C_{\varphi}$, $c_Q$ and $c_{\gamma}$. 
\end{itemize}

Combining \eqref{eq-kl-all}, \eqref{eq-kl-q} and \eqref{eq-kl-p}, it holds that 
\begin{align}
    \lfloor M/2 \rfloor^{-1} \sum_{i =1}^{[\lfloor M/2 \rfloor ]} \text{KL}( \Theta^i,  \Theta^0)    \leq  &   C_Q n_Q \tilde{r}^{d+3}m + C_{P} \kappa n_P \tilde{r}^{d+\gamma+3}m. 
\end{align}
Now set 
\begin{equation}\label{lower-bound-r}
\tilde{r} = c_r \Big( n_Q +  (\kappa n_P)^{\frac{d+3}{d+3 + \gamma}} \Big)^{- \frac{1}{d+3}}, 
\end{equation}
where $ c_r > 0$ is a sufficiently small constant. From \eqref{eq-lower-M}, it holds that 
\[
\lfloor M/2 \rfloor^{-1}  \sum_{i =1}^{[\lfloor M/2 \rfloor ]} \text{KL}( \Theta^i,  \Theta^0)    \leq  \alpha \log_2(\lfloor M/2 \rfloor).
\]
where $0 < \alpha < 1/8$.

\medskip
\noindent
\textbf{Step 3.3.} We have verified that all conditions of \Cref{prop-minimax-lower} hold for the family $\{\Theta_i\}_{i=0}^{\lfloor M/2 \rfloor}$. By applying 
\Cref{prop-minimax-lower},  Markov's inequality, \eqref{lower-bound-final} and \eqref{lower-bound-r},   we obtain that 
\[
\inf_{\pi} \sup_{I \in \mathcal{I}( \gamma, c_{\gamma},  \kappa, C_{\mathrm{Lip}}, c_Q )}R_{\pi, I}(n_Q) \geq  c n_Q \big(n_Q + (\kappa n_P)^{\frac{d+3}{d+3+\gamma}} \big)^{-\frac{1}{d+3}},
 \]   
where $c>0$ is a constant only depending on constants $C_{\mathrm{Lip}}$, $c_{\gamma}$ and $c_Q$. This completes the proof. 

\end{proof}

\subsection{Auxiliary results}\label{app-subsec-theorem_2-aux}

For completeness, we include Proposition 1 from  \cite{kpotufe2020marginalsingularitybenefitslabels} below, as it is a key tool for establishing the minimax lower bound.

\begin{proposition}[Proposition 1 in \cite{kpotufe2020marginalsingularitybenefitslabels}]
\label{prop-minimax-lower}
Let $\{\Theta_h\}_{h \in \mathcal{H}}$ be a family of distributions indexed by a subset $\mathcal{H}$ of a semi-metric space $(\mathcal{F}, \bar{\rho})$. Assume that there exists $h_0, \dots, h_M \in \mathcal{H}$, with $M \geq 2$, such that
\begin{enumerate}[$(i)$]
    \item $\bar{\rho}(h_i, h_j) \geq 2s > 0$, $\forall 0 \leq i < j \leq M$, 
    \item $\Theta_{h_i} \ll \Theta_{h_0}$, $\forall i \in [M]$ and the average KL-divergence to $\Theta_{h_0}$ satisfies
    \[
    \frac{1}{M} \sum_{i=1}^{M} \mathrm{KL} (\Theta_{h_i},  \Theta_{h_0}) \leq \alpha \log M, \quad \text{where } 0 < \alpha < 1/8.
    \]
\end{enumerate}

Let $Z \sim \Theta_h$ and let $\hat{h} \colon Z \to \mathcal{F}$ denote any improper learner of $h \in \mathcal{H}$. It holds that
\[
\sup_{h \in \mathcal{H}} \Theta_h \left( \bar{\rho}\left( \hat{h}(Z), h \right) \geq s \right) \geq \frac{\sqrt{M}}{1 + \sqrt{M}} \left( 1 - 2\alpha - \sqrt{\frac{2\alpha}{\log(M)}} \right) \geq \frac{3 - 2\sqrt{2}}{8}.
\]
\end{proposition}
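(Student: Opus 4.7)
The plan is to prove Proposition \ref{prop-minimax-lower} by the classical two-step reduction from estimation to multi-way hypothesis testing, followed by an information-theoretic Fano-type lower bound.

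First, I would convert the high-probability estimation statement into a testing lower bound. Given any improper learner $\hat h$, define the nearest-centre test $\psi(Z) := \argmin_{i \in \{0, \ldots, M\}} \bar\rho(\hat h(Z), h_i)$. Condition $(i)$ together with the triangle inequality ensures that on the event $\{\bar\rho(\hat h(Z), h_j) < s\}$ one must have $\psi(Z) = j$, so that
\[
\Theta_{h_j}\bigl(\bar\rho(\hat h(Z), h_j) \geq s\bigr) \;\geq\; \Theta_{h_j}\bigl(\psi(Z) \neq j\bigr),
\]
and hence $\sup_{h \in \mathcal{H}} \Theta_h(\bar\rho(\hat h(Z), h) \geq s) \geq \max_j \Theta_{h_j}(\psi \neq j)$. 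This reduces the problem to lower bounding the maximum testing error over the $M+1$ simple hypotheses.

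Second, I would invoke a Birg\'e-style sharpening of Fano's inequality. Under condition $(ii)$, $M^{-1}\sum_{i=1}^M \mathrm{KL}(\Theta_{h_i}, \Theta_{h_0}) \leq \alpha \log M$ with $\alpha < 1/8$; combining this with Pinsker's inequality and careful control of the log-likelihood ratio (not merely its expectation) yields an average testing-error bound of the form $1 - 2\alpha - \sqrt{2\alpha/\log M}$, rather than the crude $1 - \alpha - \log 2 / \log M$ produced by the textbook Fano inequality.

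Finally, I would translate this averaged bound into the advertised supremum bound. Since individual error probabilities are bounded by $1$, a Markov-type rearrangement shows that if the average error across $M+1$ hypotheses is $\bar p$, then the maximum error is at least $\bar p \cdot \sqrt M/(1+\sqrt M)$. Chaining the three steps yields the first displayed inequality, and the numerical floor $(3-2\sqrt 2)/8$ follows by minimising the right-hand side over admissible $(M, \alpha)$ with $M \geq 2$ and $\alpha < 1/8$. The hardest step is the second: sharpening Fano to the form $1 - 2\alpha - \sqrt{2\alpha/\log M}$ requires a second-moment or concentration refinement of the standard proof, since naive Jensen chains yield only the cruder bound; the reduction in the first step and the max-to-average conversion in the third step are essentially routine.
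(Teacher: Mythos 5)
This proposition is not proved in the paper at all: it is imported verbatim from \cite{kpotufe2020marginalsingularitybenefitslabels} (and is essentially Theorem~2.5 of \citet{tsybakov2009introduction}), so there is no in-paper argument to compare against. Judged on its own terms, your sketch has the right overall architecture -- reduction from estimation to an $(M+1)$-ary test via the nearest-centre rule and condition $(i)$, followed by a refined information-theoretic bound on the testing error -- and your first step is correct and routine.

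However, the second and third steps misplace where the two factors in the bound actually come from, and this is more than a cosmetic issue. In the standard proof, one does \emph{not} establish that the \emph{average} testing error is at least $1-2\alpha-\sqrt{2\alpha/\log M}$ and then convert to a maximum; rather, one fixes a threshold $\tau=1/\sqrt{M}$ and shows that $P_0(\psi\neq 0)\geq \tau\sum_{j=1}^{M}\bigl[P_j(dP_0/dP_j\geq\tau)-P_j(\psi\neq j)\bigr]$, so that the maximum of $P_0(\psi\neq 0)$ and $M^{-1}\sum_{j}P_j(\psi\neq j)$ is at least $\frac{\tau M}{1+\tau M}\cdot\frac{1}{M}\sum_{j}P_j(dP_0/dP_j\geq\tau)$; the factor $\sqrt{M}/(1+\sqrt{M})$ arises from this threshold trade-off, not from a max-to-average conversion (which, as you state it, is vacuous since the maximum already dominates the average with factor one -- and chaining your steps 2 and 3 as written would yield a bound \emph{without} the $\sqrt{M}/(1+\sqrt{M})$ factor, which is stronger than the proposition and not obtainable). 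The quantity $1-2\alpha-\sqrt{2\alpha/\log M}$ is a lower bound on $M^{-1}\sum_{j}P_j(dP_0/dP_j\geq 1/\sqrt{M})$, obtained from the hypothesis $M^{-1}\sum_j\mathrm{KL}(\Theta_{h_j},\Theta_{h_0})\leq\alpha\log M$ together with the elementary inequality $\int\bigl(\log(dP/dQ)\bigr)_{+}\,dP\leq \mathrm{KL}(P,Q)+\sqrt{\mathrm{KL}(P,Q)/2}$ and Markov's inequality; you correctly flag that something beyond the Jensen-based Fano chain is needed here, but this step -- the crux of the result -- is left entirely as a black box in your proposal. As it stands the proposal is an outline of the right strategy with the decisive lemma unproved and the bookkeeping of the two factors incorrect.
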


\section[]{Additional details in \Cref{sec-num}} \label{sec-add-num}

\subsection{Additional results in Section \ref{sec-simulation}}\label{app-simu}

\medskip
\noindent \textbf{Scenario 1.} The simulation results for \textbf{Configuration 2}  of \textbf{Scenario 1}, as described in \Cref{sec-simulation}, are presented in \Cref{Fig_app_3}.

\medskip
\noindent \textbf{Scenario 2.} The simulation results for \textbf{Configurations 1} and \textbf{2} of \textbf{Scenario 2}, as described in \Cref{sec-simulation}, are presented in \Cref{Fig_app_1}  and \Cref{Fig_app_2}, respectively.

\begin{figure}[t] 
        \centering
        \includegraphics[width=0.8 \textwidth]{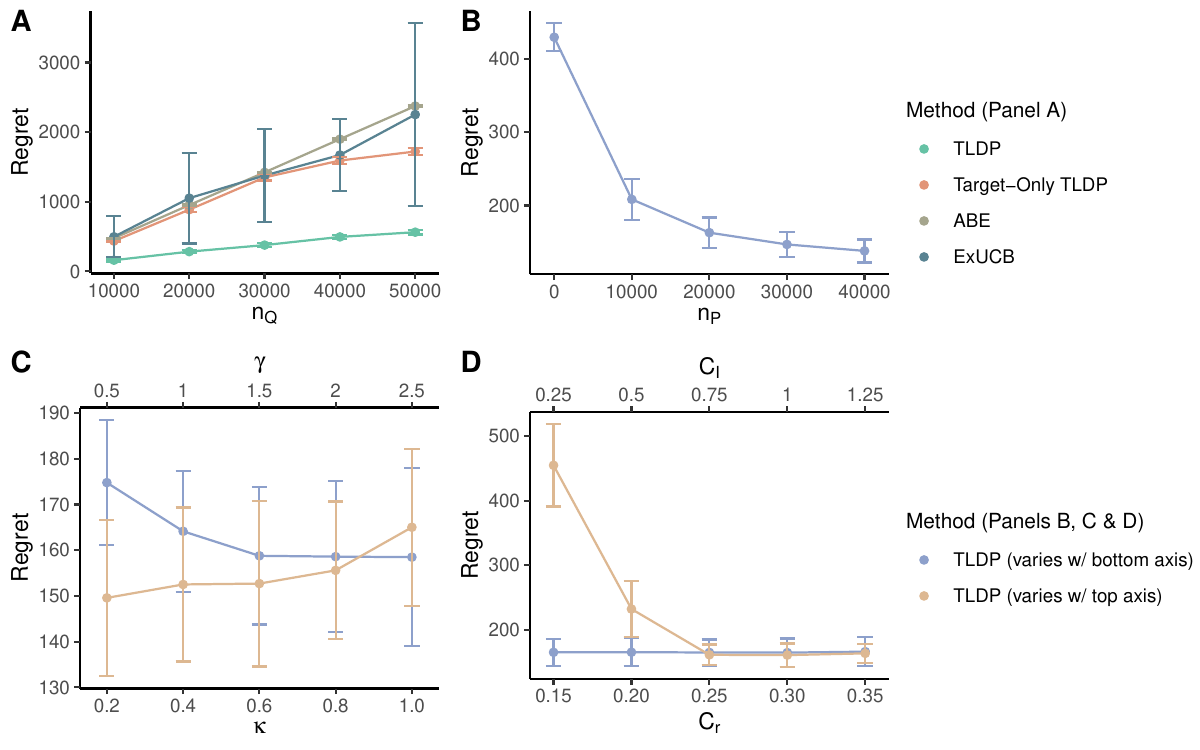}
        \caption{Results for Configuration 2 in Scenario 1.  Panel (A)  and (B): varying source data size~$n_P$ and target data size $n_Q$, respectively. Panel (C) varying the transfer exponent $\gamma$ (top axis) and the exploration coefficient $\kappa$ (bottom axis). Panel (D): varying the index constant $C_I$ (top axis) and the exploration radius constant $C_r$ (bottom axis). For Panels (B), (C) and (D), we fix $n_Q = 10000$.}
        \label{Fig_app_3}
\end{figure}

\begin{figure}[t] 
        \centering
        \includegraphics[width=0.8 \textwidth]{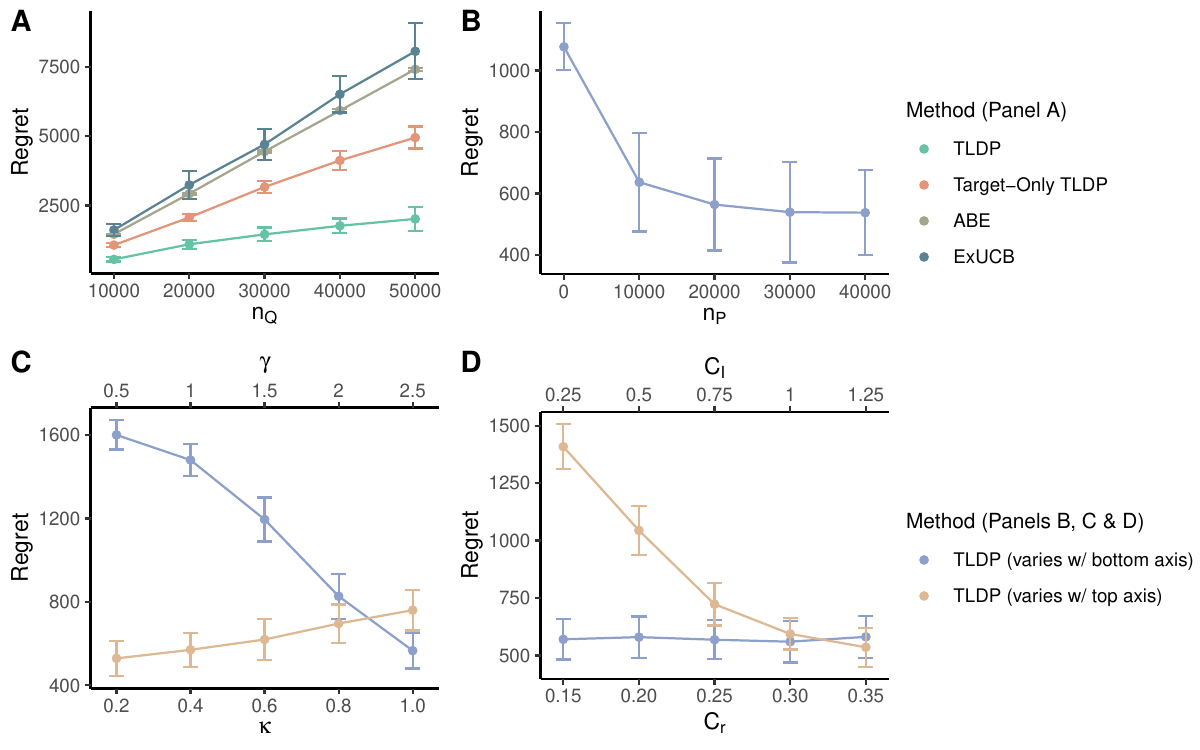}
        \caption{Results for Configuration 1 in Scenario 2.  Panel (A) 
        and (B): varying source data size~$n_P$ and target data size $n_Q$, respectively. Panel (C) varying the transfer exponent $\gamma$ (top axis) and the exploration coefficient $\kappa$ (bottom axis). Panel (D): varying the index constant $C_I$ (top axis) and the exploration radius constant $C_r$ (bottom axis). For Panels (B), (C) and (D), we fix $n_Q = 10000$.}
        \label{Fig_app_1}
\end{figure}

\begin{figure}[t] 
        \centering
        \includegraphics[width=0.8 \textwidth]{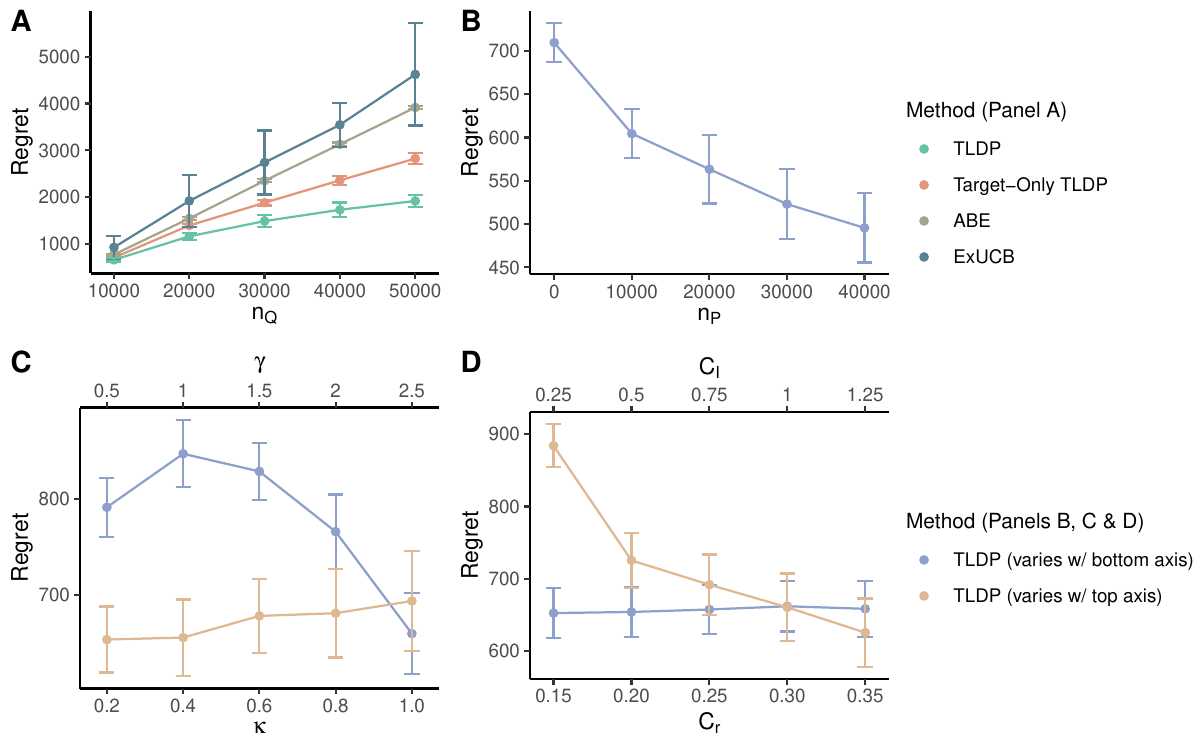}
       \caption{Results for Configuration 2 in Scenario 2.  Panel (A) 
        and (B): varying source data size~$n_P$ and target data size $n_Q$, respectively. Panel (C) varying the transfer exponent $\gamma$ (top axis) and the exploration coefficient $\kappa$ (bottom axis). Panel (D): varying the index constant $C_I$ (top axis) and the exploration radius constant $C_r$ (bottom axis). For Panels (B), (C) and (D), we fix $n_Q = 10000$.}
        \label{Fig_app_2}
\end{figure}

\subsection{Additional results in Section \ref{sec-real-data}}\label{app-real-data}

In this subsection, we present samples of the processed auto loan dataset in \Cref{tab:preprocessed_data_sample-loan}.

\begin{table}[ht]
\centering
\small
\rotatebox{90}{
\begin{tabular}{lccccccc}
\\ \hline
Division     & Price & FICO & Competition rate & Amount approved & Prime rate &Term & Reward \\ 
\hline
West South Central         & 0.26497654     & 0.4007937              & 0.9421965                 & 0.31578947                & 0.7396567        & 1.0000000     & 0.00000000          \\ 
West South Central         & 0.03455228     & 0.5912698              & 0.5664740                 & 0.14842105                & 0.7396567        & 0.0000000     & 0.03455228          \\ 
East North Central         & 0.08328724     & 0.7032520              & 0.8265896                 & 0.2144407                 & 0.7396567        & 0.3333333     & 0.00000000          \\ 
East North Central         & 0.08203698     & 0.4227642              & 0.7687861                 & 0.1620701                 & 0.7396567        & 0.6666667     & 0.00000000          \\ 

East South Central         & 0.09374806     & 0.2490119              & 0.9421965                 & 0.03204972                & 0.7396567        & 1.0000000     & 0.00000000          \\ 
East South Central         & 0.04983916     & 0.5731225              & 0.8265896                 & 0.08330877                & 0.7396567        & 0.3333333     & 0.00000000          \\ 
\hline
\end{tabular}
}
\caption{Sample of the normalized auto loan dataset, including key covariates, prices and reward values.}
\label{tab:preprocessed_data_sample-loan}
\end{table}

\end{document}